\newcommand{\rr}{{\mathbb{R}}}
\newcommand{\pp}{{\mathbb{P}}}
\newcommand{\nn}{{\mathbb{N}}}
\newcommand{\zzz}{{\mathcal{Z}}}
\newcommand{\fff}{{\mathcal{F}}}
\newcommand{\xxx}{\mathcal{X}}
\newcommand{\yyy}{\mathcal{Y}}
\newcommand{\mmm}{{\mathcal{M}}}
\newcommand{\nnn}{{\mathcal{N}}}
\newcommand{\rrflex}[1]{{\ensuremath{\rr^{#1}
}}}
\newcommand{\rrD}{{\rrflex{D}}}
\newcommand{\rrd}{{\rrflex{d}}}
\newcommand{\rrn}{{\rrflex{n}}}
\newcommand{\rrm}{{\rrflex{m}}}
\NewDocumentCommand\argmin{o}{{\operatorname{argmin}\IfValueT{#1}{_{{#1}}}}}
\newcommand{\ee}{\mathbb{E}}
\NewDocumentCommand\NN{oo}{
	{\mathcal{NN}%
		_{\IfValueT{#1}{#1}}	
		\IfValueF{#2}{^{\sigma}}\IfValueT{#2}{^{#2}}
	}
}
\NewDocumentCommand{\prodd}{oo}{
	\overset{{#2}}{
		\underset{{#1}}{
			\circlearrowleft
		}
	}
}
\NewDocumentCommand{\im}{m}{{
		\operatorname{Im}\left(
		{#1}
		\right)
}}
\NewDocumentCommand{\inter}{m}{{
		\operatorname{Int}\left(
		{#1}
		\right)
}}
\newtheorem{defn}{Definition}[section]
\newtheorem{ass}[defn]{Assumption}
\newtheorem{prop}[defn]{Proposition}%
\newtheorem{cor}[defn]{Corollary}%
\newtheorem{lem}[defn]{Lemma}%
\newtheorem{ex}[defn]{Example}%
\newtheorem{thrm}[defn]{Theorem}%
\newtheorem{rremark}[defn]{Remark}%
\newtheorem*{ass*}{Assumption}
\newtheorem*{thrm*}{Theorem}
\newtheorem*{cor*}{Corollary}
\newtheorem*{prop*}{Proposition}
\title{Non-Euclidean Universal Approximation}
\author{
Anastasis Kratsios\thanks{
				Department of Mathematics, Eidgen\"{o}ssische Technische Hochschule Z\"{u}rich, HG G 32.3, R\"{a}mistrasse 101, 8092 \"{u}rich,
				Switzerland.  
				email: \textit{anastasis.kratsios@math.ethz.ch}
			}\AND
			Eugene Bilokopytov\thanks{
			Department of Mathematics and Statistical Sciences,
			University of Alberta,
			11324 89 Ave NW, Edmonton, AB T6G 2J5, Canada.
			email:
			\textit{bilokopy@ualberta.ca}
			}
}
\begin{document}

\maketitle

\begin{abstract}
Modifications to a neural network's input and output layers are often required to accommodate the specificities of most practical learning tasks. However, the impact of such changes on architecture's approximation capabilities is largely not understood. We present general conditions describing feature and readout maps that preserve an architecture's ability to approximate any continuous functions uniformly on compacts. As an application, we show that if an architecture is capable of universal approximation, then modifying its final layer to produce binary values creates a new architecture capable of deterministically approximating any classifier. In particular, we obtain guarantees for deep CNNs and deep feed-forward networks.  Our results also have consequences within the scope of geometric deep learning. Specifically, when the input and output spaces are Cartan-Hadamard manifolds, we obtain geometrically meaningful feature and readout maps satisfying our criteria.
Consequently, commonly used non-Euclidean regression models between spaces of symmetric positive definite matrices are extended to universal DNNs. The same result allows us to show that the hyperbolic feed-forward networks, used for hierarchical learning, are universal. Our result is also used to show that the common practice of randomizing all but the last two layers of a DNN produces a universal family of functions with probability one.  We also provide conditions on a DNN's first (resp. last) few layer's connections and activation function which guarantee that these layer's can have a width equal to the input (resp. output) space's dimension while not negatively effecting the architecture's approximation capabilities.  
\end{abstract}
\vspace*{-1em}
\section{Introduction}\label{s_Intro}
\vspace*{-.5em}
Modifications made to a neural network's input and output maps to extract features from a data-set or to better suit a learning task is prevalent throughout learning theory.   Typically, such changes are made by pre-(resp. post-)composing an architecture with a fixed and untrainable feature (resp. readout) map.  Examples prevail classification by neural networks, random feature maps obtained by randomizing all but the last few layers of a feed-forward network, and numerous illustrations throughout geometric deep-learning theory, which we detail below.  
This motivates the central question of this paper: \textit{"Which modifications to the input and output layers of a neural network architecture preserve its universal approximation capabilities?"}  

Specifically, in this paper we obtain a simple sufficient condition on a pair of a feature map $\phi:\xxx\rightarrow \rrm$ and a readout map $\rho:\rrn\rightarrow \yyy$, where $\xxx$ and $\yyy$ are topological spaces, guaranteeing that if $\fff$ is dense in $C(\rrm,\rrn)$ for the uniform convergence on compacts (ucc) topology then
\begin{equation}
\smash{
\{f \in C(\xxx,\yyy):\, \rho\circ f \circ \phi, \, f \in \fff\},
}
\label{eq_general_modified_architecture_description}
\end{equation}
is dense in $C(\xxx,\yyy)$ in the uniform convergence on compacts topology when $\yyy$ is metric and, more generally, in the compact-open topology when $\yyy$ is non-metrizable (such as in the hard classification problem).  Simplified conditions are obtained when $\yyy$ is a metrizable manifold, and characterization of $\rho$ and $\phi$ is obtained when both $\xxx$ and $\yyy$ are smooth manifolds.  

The set $\fff$ represents any expressive neural network architecture.  For example, by \cite{leshno1993multilayer} $\fff$ can be taken to be the set of feed-forward networks with one hidden layer and continuous, locally-bounded, and non-polynomial activation function.  Or, by \cite{zhou2020universality}, $\fff$ can be taken to be the set of deep convolution networks with specific sparsity structures and ReLu activation function.  Throughout $\fff$ is often referred to as an architecture. The results are not limited to neural networks and remain valid when, for example, $\fff$ is taken to be the set of posterior means generated by a Gaussian processes universal kernel, as in \cite{micchelli2006universal}.  
The central results are motivated by the following consequences.  
\vspace*{-1em}
\subsection*{Implication: Method for Constructing Non-Euclidean Universal Approximators}\label{ss_implications}
\vspace*{-.5em}
A natural hub for our results is in \textit{geometric deep learning}, an emerging field of machine learning, which acknowledges and makes use of the latent non-Euclidean structures present in many types of data.  Applications of geometric deep learning are prevalent throughout neuroimaging \cite{fletcher2011geodesic}, computer-vision \cite{pennec2006riemannian}, covariance learning \cite{meyer2011regression}, and learning from hierarchical structures such as complex social networks \cite{krioukov2010hyperbolic}, undirected graphs \cite{munzner1997h3}, and trees \cite{SalaHyperbolicTradeoffs}.  

For instance, in \cite{NIPS2017_7213}, it is shown that low-dimensional representations for complex hierarchical structures into hyperbolic space outperform the current state-of-the-art high-dimensional Euclidean embedding methods due to the tree-like geometry of the former.  Using the theory of gyro-vector space, introduced in \cite{GyroUngar}, \cite{ganea2018hyperbolic} proposed a hyperbolic-space variant of the feed-forward architecture and demonstrated its superior performance in learning hierarchical structure from these hyperbolic representations.  A direct application of our main result confirms that this non-Euclidean architecture can indeed approximate any continuous function between hyperbolic spaces.  

More generally, we obtain an explicit construction of feed-forward networks between any Cartan-Hadamard manifold and a guarantee that our construction is universal.  Cartan-Hadamard manifolds appear throughout applied mathematics from the symmetric positive-definite matrix-valued regression problems of \cite{fletcher2011geodesic,meyer2011regression}, which we extend to universal approximators, to applications in mathematical finance in \cite{henry2008analysis}, Gaussian processes in \cite{mallasto2017learning}, information geometric in \cite{jostInformationGeometry}, and to the geometry of the Wasserstein space \cite{LottGeometryWassersteinNonPositive2008} commonly used in Generative Adversarial Networks as in \cite{arjovsky2017wasserstein}.  

\vspace*{-1em}
\subsection*{Implication: Universal Approximation Implies Universal Classification}
\vspace*{-.5em}
Perhaps the most commonly used readout maps are those used when modifying neural-networks to perform classification tasks.  The currently available theoretical results, found in \cite{farago1993strong}, guarantee that for a random vector in $\rrm$ with random labels, the set of feed-forward networks with one hidden layer, step activation function $\sigma(x)=I_{[0,\infty)} - I_{(-\infty,0]}$, and readout map $\rho(x)_i=I_{[\frac1{2},\infty)}$ can approximate the Bayes' classifier in probability.  

As an application of this paper's main results, we obtain deterministic guarantees of generic hard ($n$-ary) and soft (fuzzy) classification on $\rrm$ for any given universal approximator in $C(\rrm,\rrn)$ once it's outputs are modified by a continuous surjection $\rho$ to take values in $\{0,1\}^n$ or $(0,1)^n$, respectively.   For example, our result applies to feed-forward networks with at-least one hidden layer holds when $\rho$ 
the component-wise logistic $\rho(x)_i=I_{[\frac1{2},1]}\circ \frac{e^{x_i}}{1+e^{x_i}}$ readout map
.  
\vspace*{-1em}
\subsection*{Implication: DNNs with Randomly Generated First Layers are Universal}
\vspace*{-.5em}
We show that the commonly employed practice of only training the final layers of a deep feed-forward network and randomizing the rest preserves its universal approximation capabilities with probability $1$.  Though widely used, this practice has only recently begun to be studied in \cite{gonon2020approximation,louart2018random}.  The link with our results arises from an observation made in \cite{louart2018random}, stating that the first portion of such a random architecture can be seen as a randomly generated feature map.
\vspace*{-1em}
\subsection*{Implication: DNNs Can be Narrowed}
\vspace*{-.5em}
In \cite{johnson2018deep,park2020minimum} the authors provide lower bounds on a DNN layer's width, under which it is no longer a universal approximator.  However, there is a wealth of literature which shows that arranging a network's neurons to create depth rather than width yields empirically superior performance.  As a final application of our theory, we provide explicit conditions on a DNN's connections and activation functions so additional initial and final few layers may be added to a DNN which do not respect the minimum width requirements of \cite{johnson2018deep,park2020minimum} but do not negatively impact the DNN's approximation capabilities.  Numerical implementations show that additional depth build using our main results improve predictive performance additional deep layers failing our assumptions reduced the network's predictive performance.  

This paper is organized as follows.  Section~\ref{s_background_topology} discusses the necessary topological and geometric background needed to formulate the paper's central results.  Section~\ref{ss_general} contains the paper's main results discussed above.  The conclusion follows in section~\ref{s_Conclusion}.  The proofs of the main results are contained within this paper's supplementary material.      
\vspace*{-1em}
\section{Background}\label{s_background_topology}
\vspace*{-.5em}
\subsection{General Topology}\label{ss_gen_top}
\vspace*{-.5em}
Before moving on to the main results of the paper, we will require some additional topological terminology.  The \textit{interior} of a subset $A\subseteq \xxx$ of a topological space is the largest open subset contained in $A$.  For example, in the Euclidean space $\rr$, the interior of $[0,1)$ is $(0,1)$.  The \textit{closure} of $A$ is the smallest closed-set containing $A$.  Therefore, the closure of $[0,1)$ in $\rr$ is $[0,1]$.  The difference between the closure of $A$ and its interior is called the \textit{boundary} of $A$ and is denoted by $\partial A$.  For example, the boundary of $[0,1)$ in the Euclidean space $\rr$ is $\{0,1\}$.  A subset $A\subseteq \rrm$ is called a \textit{retract} of $\rrm$ if there is a continuous map $r:\rrm\to A$ such that $r\circ \iota_A =1_{A}$, where $\iota_A:A \to \rrm$ is the inclusion map and $1_{A}$ is the identity map on $A$.  Dually, a continuous right-inverse of a continuous surjective map is called a \textit{section}.  A \textit{covering projection} $f:\rrn\to \yyy$ is a surjective continuous function such that for every $x \in \yyy$ there is an open set $U_x$, containing $x$, such that $\rho^{-1}[U]=\bigcup_{\alpha \in A} U^{\alpha}_x$ and $\{U^{\alpha}_x\}_{a \in A}$ is a disjoint collection of open sets for which $\rho|_{U^{\alpha}_x}:U^{\alpha}_x \to U_x$ is a homeomorphism.  For example, the map $x\to e^{-i \pi x}$ is a covering projection of $\rr$ onto the circle.  

Lastly, since continuous maps transfer topological information then a continuous bijection with continuous inverse preserves all topological information.  Such a map is called a \textit{homeomorphism} and two topological spaces related by a homeomorphism are said to be \textit{homeomorphic}.  For example, the sigmoid (or logistic) function $x\mapsto \frac{e^x}{1+e^x}$ continuously puts $\rr$ in bijection with $(0,1)$ and its inverse is the logit function $y \mapsto \ln\left( \frac{y}{1-y}\right)$, which is continuous.  Therefore, $\rr$ and $(0,1)$ are homeomorphic.  A continuous injective map $\phi:\xxx\to \rrm$ such that $\xxx$ is homeomorphic to $\phi(\xxx)$ is called an \textit{embedding.  }

A topological space $\xxx$ is said to be \textit{locally-compact} if every $x \in \xxx$ is contained in an open subset $U_x$ of $\xxx$ which is in turn contained in an compact subset $K_U$ of $\xxx$.  For example, every point $x \in \rr$ is contained in $(x-1,x+1)$ which is in turn contained in the closed-bounded (and therefore compact-set by the Heine-Borel theorem) $[x-1,x+1]$.  A topological space is said to be \textit{simply connected}, if every two paths between points can be continuously deformed into one another.  

Similarly to \cite{brown1962locally,MR267588}, we say that a subset $A$ of a topological space $\xxx$ is \textit{collared} if there exists an open subset $U\subseteq \xxx$ containing $A$ and a homeomorphism $\phi:U\rightarrow A\times [0,1)$ mapping $A$ to $A\times [0,1)$.  In this way the open set $U$ is, in a sense, topologically similar to $A$ itself since one can imagine shrinking any point $(a,t)\in A\times [0,1)$ down to $(a,0)$ and then identifying it back with $a$ via $\psi$.  

We denote the set of positive integers by $\nn^+$.
\vspace*{-1em}
\subsection{Topology of Function Spaces}\label{ss_top_is_fun}
\vspace*{-.5em}
Let $C(\rrm,\rrn)$ denote the set of all continuous functions from the Euclidean space $\rrm$ to the Euclidean space $\rrn$.  Closeness in $C(\rrm,\rrn)$ can be described in a number of ways but in the context of universal approximation theorems of \cite{Cybenko,hornik1991approximation,leshno1993multilayer} two functions $f$ and $g$ are thought of as being close if they are uniformly close on compacts; that is, for a fixed $\epsilon>0$ and every non-empty compact subset $K\subseteq \rrm$
\vspace*{.5em}
\begin{equation}
\smash{\sup_{x \in K}
\sqrt{\sum_{i=1}^n \left|
	f(x)_i - g(x)_i
	\right|^2}
<\epsilon
}
.
\label{eq_ucc_definition}
\end{equation}

\vspace*{-.1em}
The topology described by~\eqref{eq_ucc_definition} is called the \textit{topology of uniform convergence on compacts}, henceforth ucc topology.  If $\rrn$ is replaced by any other topological space $\yyy$ whose notion of closeness is defined by a distance function and $\rrm$ is replaced by nearly any other topological space then closeness in the collection of continuous functions from $\xxx$ to $\yyy$, denoted by $C(\xxx,\yyy)$, can be described analogously to~\eqref{eq_ucc_definition} by replacing the Euclidean distance on $\rrn$ by another distance function $d$-on $\yyy$, the compact subsets $K\subseteq \rrm$ with compact subsets of $\xxx$, and taking $f,g \in C(\xxx,\yyy)$.  The topology on $C(\xxx,\yyy)$ defined in this way is still called the ucc topology.  

If a distance function cannot describe the topology on $\yyy$, for example, we will see that this is the case for reasonable topologies on $C(\xxx,\{0,1\}^n)$, then one cannot define the ucc topology.  Instead, consider the smallest topology on $C(\xxx,\yyy)$ containing the sets
\begin{equation}
\hspace*{-1.25em}
\smash{
\begin{aligned}
&\left\{V_{K,O}:\, \emptyset\neq K\subseteq \xxx \mbox{ compact and } \emptyset\neq O\subseteq \yyy \mbox{ open}\right\},
V_{K,O}\triangleq \left\{
f \in C(\xxx,\yyy):\,
f(K)\subseteq O
\right\}.
\end{aligned}
}
\label{eq_co_topology_definition}
\end{equation}
When the topology on $\yyy$ is defined by a distance function and $\xxx$ is a locally-compact Hausdorff space, then the smallest topology containing~\eqref{eq_co_topology_definition} coincides with the ucc topology.  However, unlike the ucc topology, the smallest topology containing~\eqref{eq_co_topology_definition} is well-defined on $C(\xxx,\yyy)$ for any topological spaces $\xxx$ and $\yyy$.  This generalized ucc topology is called the \textit{compact-open topology} (co-topology).  
\vspace*{-1em}
\subsection{Manifolds}\label{ss_Manifolds}
\vspace*{-.5em}
A (topological) \textit{manifold} is a topological space which "closeup" looks like a a path in Euclidean space, whereas a manifold with boundary locally looks like a path of Euclidean space but may have a hard edge.  
\begin{defn}[Metrizable Manifold with Boundary; {\citep{brown1962locally}}]\label{defn_Manifold_with_boundary}
	A topological space $\yyy$ is said to be a metrizable manifold with boundary if
	\begin{enumerate}[(i)]
		\vspace*{-1em}
		\itemsep -.5em
		\item For every $y \in \yyy$, there is an open $U_y\subseteq \yyy$ containing $y$ which is homeomorphic to
		\begin{equation}
		\left\{
		(z_1,\dots,z_{n}) \in \rrn:\,
		\sqrt{
			\sum_{i=1}^{n} z_i^2
		}
		<1 \mbox{ and } z_n\geq 0
		\right\},
		\label{eq_upper_Euclidean}
		\end{equation}
		\item There exists a distance function (metric) $d:\yyy^2\rightarrow \yyy$ such that the topology on $\yyy$ coincides with the smallest topology on $\yyy$ containing the open balls $\left\{
		B_{\epsilon}(y)
		\right\}_{\epsilon>0,y \in \yyy}
		$; where
		$$
		\smash{
			B_{\epsilon}(y) \triangleq  
			\left\{
			z \in \yyy:\, d(z,y)<\epsilon
			\right\}
			.
		}
		$$
	\end{enumerate}
	We say that $d$ is a metric for $\yyy$.  
	The subset of $\yyy$ consisting of all points $y$ contained in some open set $U_y$ which is homeomorphic to the interior of~\eqref{eq_upper_Euclidean} is denoted by $\inter{\yyy}$.  
\end{defn}
\vspace*{-.25em}
%
A \textit{smooth manifold} without boundary, is a manifold for which there is a well-defined differential calculus admitting arbitrarily many derivatives and which can locally be deformed into Euclidean space via infinitely differentiable maps with infinitely differentiable inverses.  

An $m$-dimensional \textit{Riemannian manifold} $\mmm$ is a manifold without boundary which can be locally smoothly deformed into Euclidean space such that curvature and length can be meaningfully compared, locally, between $\rrm$ and $\mmm$.  Amongst other things, this allows the definition of minimal-length curves connecting points on $\mmm$, called \textit{geodesics}.  If any two points on $\mmm$ can be connected by such a minimal length curve then $\mmm$ is said to be \textit{complete}.  Moreover, when $\mmm$ is complete and connected, the function mapping any two points $p,q \in \mmm$ to the length of a geodesic connecting them defines a metric $d_{\mmm}$.  Thus, $\mmm$ has a geometrically meaningful metric structure where distance represents the length of maximally efficient trajectories and $C(X,\mmm)$.  The existence of $d_{\mmm}$ also implies that $C(\xxx,\mmm)$ is equipped with the ucc-topology.  

Further, when $\mmm$ is complete and connected the Hopf-Rinow Theorem, of \cite{HopfRinow1931}, affirms that for any given $p \in \mmm$, the map sending any $v \in \rrm$ lying tangent to $p$ to the point on $\mmm$ arrived at time $t=1$ by traveling along a the geodesic with initial velocity $v$ defines a surjection from $\rrm$ onto $\mmm$.  This map is called the \textit{Riemannian Exponential map} on $\mmm$ at $p$ and is denoted by $\operatorname{Exp}^{\mmm}_p$.  In \cite{itoh1998dimension}, it is shown that, in this case, $\operatorname{Exp}_p^{\mmm}$ has a smooth inverse outside a low-dimensional subset $C_p$.  This inverse is denoted by $\operatorname{Log}_p^{\mmm}$ and is known to locally preserve length between $\rrm$ and $\mmm$ along geodesics emanating from $p$.  
This means that $\operatorname{Log}_p^{\mmm}$ and $\operatorname{Exp}_p^{\mmm}$ are geometrically meaningful feature and readout maps, respectively.  

However, the set $\partial C_p$ can be pathological or difficult to deal with.  This issue is overcome by turning to the sub-class of \textit{Cartan-Hadamard manifolds}.  A Riemannian manifold $\mmm$ is Cartan-Hadamard if it is simply connected and has \textit{non-positive curvature}.  Non-positive curvature mean that all triangles drawn on $\mmm$ by geodesics have internal angles adding-up at-most $180^{\circ}$.  
\vspace*{-1em}
\section{Main Results}\label{ss_general}
\vspace*{-.5em}
Let $\phi:\xxx\rightarrow \rrm$ and $\rho:\rrn\rightarrow \yyy$. Subsets of $\rrn$ (resp $\rrm$) will be equipped with the (relative) Euclidean topology unless explicitly stated otherwise.  Equip $C(\xxx,\yyy)$ with the co-topology, $C(\rrm,\rrn)$ with the ucc topology, 
let $\fff$ be a \textit{dense} subset of $C(\rrm,\rrn)$ such as the architectures studied in \cite{leshno1993multilayer,lu2017expressive,zhou2020universality} or the posterior means of a Gaussian process with universal kernel as in \cite{micchelli2006universal}, and define the subset $\fff_{\rho,\phi}\subseteq C(\xxx,\yyy)$ by
\begin{equation}
\smash{
\fff_{\rho,\phi} \triangleq \left\{
g \in C(\xxx,\yyy):\, g= \rho \circ f\circ \phi \,\mbox{ where }
f \in \fff
\right\}
.
}
\label{eq_lift_of_fff}
\end{equation}
The set $\fff_{\rho,\phi}$ is dense in $C(\xxx,\yyy)$ under the following assumptions on $\phi$ and $\rho$.  
\begin{ass}[Feature Map Regularity]\label{ass_regularity_feature}
The map $\phi$ is a continuous and injective.  
\end{ass}
\begin{ass}[Readout Map Regularity]\label{ass_regularity_readout}
	Suppose that the readout map $\rho$ satisfies the following:
	\begin{enumerate}[(i)]
		\vspace*{-1em}
		\itemsep -.5em
		\item Either of the following hold:
		\begin{enumerate}[(a)]
		\vspace*{-1em}
		\itemsep -.25em
		    \item $\rho$ is a continuous and it has a section on $\im{\rho}$, 
		    \item $\rho$ is a covering projection of $\rrm$ onto $\im{\rho}$ and $\xxx$ is connected and simply connected, 
		\end{enumerate}
		\item $\im{\rho}$ is dense in $\yyy$,
		\item $\partial\im{\rho}$ is collared
		.
	\end{enumerate}
\end{ass}
\begin{thrm}[General Version]\label{thrm_main_General_version}
	Suppose that $\fff$ is dense in $C(\rrm,\rrn)$.  
	If Assumptions~\ref{ass_regularity_feature} and~\ref{ass_regularity_readout} hold then $\fff_{\rho,\phi}$ is dense in $C(\xxx,\yyy)$.
\end{thrm}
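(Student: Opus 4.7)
The plan is to show that every basic open neighbourhood of an arbitrary $g\in C(\xxx,\yyy)$ in the compact-open topology contains some element of $\fff_{\rho,\phi}$. Fix such a neighbourhood $N=\bigcap_{i=1}^\ell V_{K_i,O_i}$ containing $g$, and let $K\triangleq\bigcup_{i=1}^\ell K_i$, which is a non-empty compact subset of $\xxx$.

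The first step is a boundary-avoidance perturbation. Because $\partial\im{\rho}$ is collared, there is an open set $U\subseteq\yyy$ containing $\partial\im{\rho}$ together with a homeomorphism $\psi:U\to\partial\im{\rho}\times[0,1)$ sending $\partial\im{\rho}$ to $\partial\im{\rho}\times\{0\}$. Using a continuous collapse of the collar into $\im{\rho}$ (for any $\delta\in(0,1)$, send $\psi^{-1}(y,t)$ to $\psi^{-1}(y,\max(t,\delta))$ and push slightly into $\inter{\im{\rho}}$, using that $\im{\rho}$ is dense so its interior meets every neighbourhood of $\partial\im{\rho}$) one produces a continuous self-map $r_\delta:\yyy\to\yyy$ that is the identity off a small neighbourhood of the collar, agrees with $g$ outside the collar, and whose image lies in $\im{\rho}$. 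The collar parameter $\delta$ can be chosen so small that $\tilde g\triangleq r_\delta\circ g$ still lies in $N$; this reduces the problem to approximating functions whose image is contained in $\im{\rho}$.

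Next I lift $\tilde g$ through $\rho$. In case (i)(a) a section $s:\im{\rho}\to\rrn$ of $\rho$ yields a continuous $h\triangleq s\circ\tilde g:\xxx\to\rrn$ with $\rho\circ h=\tilde g$. In case (i)(b), $\rho$ is a covering projection onto $\im{\rho}$, and because $\xxx$ is connected and simply connected the standard covering-space lifting lemma provides a continuous $h:\xxx\to\rrn$ with $\rho\circ h=\tilde g$. Either way we now have $\tilde g=\rho\circ h$. Since $\phi$ is a continuous injection, its restriction $\phi|_K:K\to\phi(K)$ is a continuous bijection from a compact space to a Hausdorff space, hence a homeomorphism; consequently $h\circ(\phi|_K)^{-1}:\phi(K)\to\rrn$ is continuous on the compact (hence closed) subset $\phi(K)\subseteq\rrm$, and by the Tietze extension theorem (applied componentwise) it extends to some $H\in C(\rrm,\rrn)$ satisfying $H\circ\phi=h$ on $K$.

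Finally I invoke density of $\fff$. Since $\rho$ is continuous on $\rrn$ and $H(\phi(K))$ is compact, a standard equicontinuity-on-compacts argument lets us choose, for any prescribed co-topology tolerance, an open set $W\subseteq\rrn$ containing $H(\phi(K))$ such that $\rho(W)$ lies inside each prescribed open $O_i$ wherever $\tilde g$ does. Using ucc-density of $\fff$ in $C(\rrm,\rrn)$, pick $f\in\fff$ with $f(\phi(K))\subseteq W$; then $\rho\circ f\circ\phi$ sends $K_i$ into $O_i$ for each $i$, so $\rho\circ f\circ\phi\in N$, completing the proof. The main obstacles I expect are the collar perturbation in the first step, where one must be careful that the retraction stays inside $N$ and lands in $\im{\rho}$ rather than merely near it, and the passage from uniform approximation in $\rrn$ to co-topology closeness in the possibly non-metrizable target $\yyy$, which is the reason continuity of $\rho$ on all of $\rrn$ (not just on $\im{\rho}$) is essential.
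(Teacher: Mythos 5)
Your proof is correct and follows the same broad three-step strategy as the paper's: (1) perturb off the boundary of $\im{\rho}$ using the collar, (2) lift through $\rho$ by a section or covering-space lifting, and (3) produce a function on $\rrm$ compatible with $\phi$ and invoke density of $\fff$. But the execution differs in two worthwhile ways. First, the paper proceeds globally, chaining three density lemmas: it shows $\fff_{\rho,\phi}$ is dense in $C(\xxx,\im{\rho})$ (Lemmas~\ref{lem_density_feature} and~\ref{lem_density_readout}) and then, separately, that $C(\xxx,\im{\rho})$ is dense in $C(\xxx,\yyy)$ (Lemma~\ref{lem_redux_to_imrho}), finally concatenating by transitivity of density. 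You instead argue locally: you fix one basic co-topology neighbourhood $N = \bigcap_i V_{K_i,O_i}$ of an arbitrary $g$ and produce a single element of $\fff_{\rho,\phi}$ in $N$. These are equivalent, but your local formulation sidesteps some of the subtleties in the paper's Lemma~\ref{lem_redux_to_imrho}, where convergence of sequences is invoked in a compact-open topology that need not be first countable. Second, for the feature-map step the paper uses the Stone--Weierstrass theorem to show that precomposition $\Phi(f)=f\circ\phi$ sends a dense subalgebra of $C(\rrm,\rr)$ to a dense subalgebra of $C(\xxx,\rr)$ (injectivity guarantees point separation). You instead observe that $\phi|_K:K\to\phi(K)$ is a homeomorphism (continuous bijection from a compact to a Hausdorff space) and then apply Tietze extension from $\phi(K)$ to all of $\rrm$; this gives \emph{equality} $H\circ\phi=h$ on $K$, not just approximation, which is simpler and in a sense stronger. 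One place where both your argument and the paper's are thin is in making the collar retraction $r_\delta$ precise: defining $r_\delta$ to be the identity outside $U$ and $\psi^{-1}(y,\max(t,\delta))$ inside $U$ requires checking continuity at the frontier of the open collar $U$ in $\yyy$, which is not automatic; had you wanted to fully close this you'd need to argue that $\{z\in U:\pi_2(\psi(z))\le\delta\}$ is closed in all of $\yyy$ and not merely in $U$, or shrink the collar appropriately. That gap is shared with the reference proof, so it does not count against you.
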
 
\vspace*{-.75em}
Just as in the filtering literature of \cite{clark2005robust}, one would hope that the outputs of any learning model should depend continuously on its inputs.  Therefore, we only consider feature maps $\phi$ which are continuous functions.  In this case, Assumption~\ref{ass_regularity_feature} is \textit{sharp}.  We denote the identity map $x\mapsto x$ on $\rrn$ by $1_{\rrn}$.  
\begin{thrm}[Assumption~\ref{ass_regularity_feature} is Sharp]\label{thrm_sharpness_of_injectivity}
Let $\xxx$ be a metrizable manifold with boundary, let $\phi$ be continuous, and $\fff\subseteq C(\rrm,\rrn)$.  Then $\fff_{1_{\rrn},\phi}$ is dense in $C(\xxx,\rrn)$ if and only if $\phi$ is injective.  
\end{thrm}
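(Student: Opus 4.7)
The plan is to split the biconditional into two directions and handle each independently. The reverse implication (injectivity of $\phi$ forces density) reduces directly to Theorem~\ref{thrm_main_General_version}, while the forward implication (density forces injectivity) is a short contrapositive argument using the co-topology. I do not foresee a serious obstacle in either direction.

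For the reverse implication, I would invoke Theorem~\ref{thrm_main_General_version} with $\yyy = \rrn$ and readout map $\rho = 1_{\rrn}$. Assumption~\ref{ass_regularity_feature} coincides with the hypotheses on $\phi$, so only Assumption~\ref{ass_regularity_readout} must be verified for the identity, and each clause is immediate: part (i)(a) holds because $1_{\rrn}$ is continuous and is its own section; part (ii) holds because $\im{1_{\rrn}} = \rrn$ is (trivially) dense in $\rrn$; and part (iii) holds vacuously since $\partial \im{1_{\rrn}} = \emptyset$ is collared by convention. Theorem~\ref{thrm_main_General_version} then yields density of $\fff_{1_{\rrn}, \phi}$ in $C(\xxx, \rrn)$ whenever $\fff$ is dense in $C(\rrm, \rrn)$.

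For the forward implication, I would argue by contrapositive. Suppose $\phi(x_1) = \phi(x_2)$ for some $x_1 \neq x_2$ in $\xxx$. Then every $g = f \circ \phi \in \fff_{1_{\rrn}, \phi}$ satisfies $g(x_1) = g(x_2)$, so $\fff_{1_{\rrn}, \phi}$ is contained in the set $E := \{g \in C(\xxx, \rrn) : g(x_1) = g(x_2)\}$. Since $\xxx$ is Hausdorff (being metrizable), each singleton $\{x_i\}$ is compact, so the subbasic co-topology sets $V_{\{x_i\}, O}$ from~\eqref{eq_co_topology_definition} render point-evaluation continuous on $C(\xxx, \rrn)$; hence $E$ is closed as the preimage of the closed diagonal of $\rrn \times \rrn$ under $g \mapsto (g(x_1), g(x_2))$. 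Using the metric $d$ on $\xxx$ supplied by Definition~\ref{defn_Manifold_with_boundary}, the function $h(x) = (d(x, x_1), 0, \dots, 0)$ lies in $C(\xxx, \rrn)$ and satisfies $h(x_1) \neq h(x_2)$, so $h \notin E$ and $E$ is a proper subset of $C(\xxx, \rrn)$. Thus $\fff_{1_{\rrn}, \phi}$ is contained in a proper closed subset and cannot be dense.

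The only technical point requiring care is the continuity of point-evaluation in the co-topology, which is immediate from the definition in~\eqref{eq_co_topology_definition} with $K = \{x_i\}$. Everything else is a direct reduction either to Theorem~\ref{thrm_main_General_version} or to Hausdorff separation in the metric $d$.
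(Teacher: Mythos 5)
Your proof is correct, and the forward implication is essentially identical to the paper's: verify Assumption~\ref{ass_regularity_readout} for $1_{\rrn}$ (noting $\partial\im{1_{\rrn}}=\emptyset$) and invoke Theorem~\ref{thrm_main_General_version}. Note that the theorem statement omits the hypothesis that $\fff$ is dense in $C(\rrm,\rrn)$; both your proof and the paper's rely on it for this direction.

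For the converse, your route is slightly different from — and cleaner than — the paper's. The paper contradicts density via sequences: it extracts $f_k\to f$ for a Urysohn-constructed separating $f$, then uses that the joint evaluation map $\xxx\times C(\xxx,\rrn)\to\rrn$ is continuous (which needs $\xxx$ locally compact Hausdorff, \citep[Theorem 46.10]{munkres2014topology}, and metrizability of $C(\xxx,\rrn)$ to pass to sequential limits). You instead observe that $\fff_{1_{\rrn},\phi}$ sits inside the proper closed set $E=\{g: g(x_1)=g(x_2)\}$, where closedness is immediate from co-topology continuity of fixed-point evaluation $g\mapsto g(x_i)$ (just the subbasic sets $V_{\{x_i\},O}$, no local compactness needed) and closedness of the diagonal in the Hausdorff space $\rrn\times\rrn$; the metric on $\xxx$ furnishes a separating function directly, bypassing Urysohn. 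Your argument avoids sequences entirely and works under weaker hypotheses on $\xxx$, so it is the preferable phrasing even though, for a metrizable manifold, both approaches go through.
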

\vspace*{-.25em}
\begin{rremark}[{Sharpening Assumption~\ref{ass_regularity_readout}}]\label{rremark}
Assumption~\ref{ass_regularity_readout} is almost sharp and a characterization can be obtained using the $\zzz$-sets studied in \cite{TorunczyykHlocallyhomotopynegligeablesets,MR3091356}. However, it is unlikely that a
non-pathological example can be generated which falls outside the scope of Assumption~\ref{ass_regularity_readout}.  
\end{rremark}
\vspace*{-1em}
Theorem~\ref{thrm_sharpness_of_injectivity} shows that it is easy to verify if a feature map preserves the universal approximation property.  However, it can be much more challenging to verify if and when the readout map $\rho$ does so.  

The following presents a readily applicable case of Theorem~\ref{thrm_main_General_version}.  They highlight the convenient fact that if $\rho$ is surjective then only Assumptions~\ref{ass_regularity_feature} and~\ref{ass_regularity_readout} (i) need to be verified.    
\vspace*{-.25em}
\begin{cor}\label{cor_surjective_trick_A}
	If $\phi$ is a continuous injective map, $\rho$ is a surjective covering projection, and $\fff$ is dense in $C(\rrd,\rrD)$ then ${\fff}_{\rho,\phi}$ is dense in $C(\xxx,\yyy)$.  In particular, $\phi$ and $\rho$ may be homeomorphisms.
\end{cor}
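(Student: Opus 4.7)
The strategy is to specialize Theorem \ref{thrm_main_General_version} to the setting where $\rho$ is surjective, since surjectivity should trivialize the more delicate clauses of Assumption \ref{ass_regularity_readout}. Assumption \ref{ass_regularity_feature} is precisely the hypothesis on $\phi$, so nothing is to verify there.

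For Assumption \ref{ass_regularity_readout}, surjectivity of $\rho$ yields $\im{\rho}=\yyy$, so condition (ii) is immediate: every space is dense in itself. Condition (iii) asks for the topological boundary of $\im{\rho}$ inside $\yyy$ to be collared; but this boundary is empty, since $\yyy$ is clopen in itself, and the empty set is vacuously collared (take the empty open neighborhood and the unique map into $\emptyset\times[0,1)$). Condition (i) is intended to hold via alternative (b), as $\rho$ is assumed to be a covering projection of $\rrn$ onto $\yyy=\im{\rho}$. Once all three parts are verified, Theorem \ref{thrm_main_General_version} yields density of $\fff_{\rho,\phi}$ in $C(\xxx,\yyy)$.

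The "in particular" clause is a special instance of the same argument: every homeomorphism $\rho$ is simultaneously a continuous injection, a (trivial one-sheeted) surjective covering projection, and possesses a continuous global section—namely $\rho^{-1}$—so even alternative (a) of (i) applies directly, and the hypotheses of the corollary are automatically satisfied.

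The main subtlety I anticipate concerns alternative (b) of condition (i), which as stated additionally demands that $\xxx$ be connected and simply connected, whereas the corollary does not explicitly require this. Either this is implicit in a standing assumption on $\xxx$, or the authors' reduction bypasses the connectivity hypothesis by exploiting local sections of the covering projection together with a gluing argument (perhaps via the path-lifting property and the fact that $\rrn$ is simply connected). Confirming which route is used, and verifying that it indeed removes or weakens the connectivity hypothesis, is the one nontrivial step in the plan; once settled, the rest of the argument is routine bookkeeping against Theorem \ref{thrm_main_General_version}.
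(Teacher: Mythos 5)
Your handling of Assumption~\ref{ass_regularity_readout}~(ii) and~(iii) matches the paper's: surjectivity gives $\im{\rho}=\yyy$, hence (ii) is immediate, the boundary $\partial\im{\rho}$ is empty, and the empty set is vacuously collared for (iii).

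The concern you flag about clause~(i) is genuine, and in fact you have read the assumption more carefully than the paper's own proof. You invoke alternative~(i.b), correctly noting that it additionally demands $\xxx$ be connected and simply connected, a hypothesis the corollary as stated does not carry. The paper's proof instead reaches for alternative~(i.a), asserting ``by assumption, $\rho$ is continuous and has a section.'' But no such assumption appears in the statement of Corollary~\ref{cor_surjective_trick_A}, and the claim does not follow from the covering hypothesis: a surjective covering projection need not admit a global continuous section (the exponential covering $\rr\to S^1$ has none). Comparing with Corollary~\ref{cor_surjective_trick}---whose hypotheses are ``continuous surjection with a section'' \emph{together with} ``$\xxx$ connected and simply connected,'' but whose proof claims to verify~(i.b)---it appears the two corollaries have had their hypotheses and the clauses of~(i) inadvertently swapped. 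As stated, the general clause of Corollary~\ref{cor_surjective_trick_A} needs either the connectivity assumption on $\xxx$ (to run~(i.b), as you propose) or a section assumption on $\rho$ (to run~(i.a), as the paper's proof does). Your ``in particular'' case is unproblematic under either reading, since a homeomorphism has $\rho^{-1}$ as a global section and~(i.a) applies directly.
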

\vspace*{-.5em}
When both $\phi$ and $\rho$ fully preserve topological structure then Corollary~\ref{cor_surjective_trick_A} sharpens.  
\vspace*{-.25em}
\begin{prop}[Homeomorphic case is Sharp]\label{prop_homeomorphisms}
	Let $\phi$ and $\rho$ be homeomorphisms.  Then $\fff$ is dense in $C(\rrm,\rrn)$ if and only if $\fff_{\rho,\phi}$ is dense in $C(\xxx,\yyy)$.  
\end{prop}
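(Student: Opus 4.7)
The plan is to reduce the statement to the observation that pre- and post-composition with homeomorphisms yields a self-homeomorphism between the two function spaces in question, and that homeomorphisms preserve density. The forward implication ($\fff$ dense $\Rightarrow \fff_{\rho,\phi}$ dense) is immediate from Corollary~\ref{cor_surjective_trick_A}, since any homeomorphism is both a continuous injection and a surjective covering projection, so Assumptions~\ref{ass_regularity_feature} and~\ref{ass_regularity_readout} are satisfied; indeed the last sentence of Corollary~\ref{cor_surjective_trick_A} already records this case. The real content is the converse.

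For the converse, I would introduce the composition map
$$
\Phi : C(\rrm,\rrn) \to C(\xxx,\yyy), \qquad \Phi(f) = \rho\circ f\circ \phi.
$$
Since $\phi$ and $\rho$ are bijections, $\Phi$ is a set-theoretic bijection with inverse $\Phi^{-1}(\tilde f)=\rho^{-1}\circ \tilde f\circ \phi^{-1}$, and by the definition~\eqref{eq_lift_of_fff} one has $\fff_{\rho,\phi} = \Phi(\fff)$. Because $\phi$ is a homeomorphism, $\xxx$ is locally compact Hausdorff, and because $\rho$ is a homeomorphism, $\yyy$ is metrizable with metric $d_{\yyy}(y_1,y_2)\triangleq\|\rho^{-1}(y_1)-\rho^{-1}(y_2)\|$. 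Hence, as recalled in Section~\ref{ss_top_is_fun}, the co-topology on $C(\xxx,\yyy)$ coincides with the ucc topology induced by $d_{\yyy}$, so both sides of~$\Phi$ carry their ucc topologies.

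The technical step is to verify that $\Phi$ and $\Phi^{-1}$ are continuous in these topologies; after that, density transfers through $\Phi$ automatically. For continuity of $\Phi$, suppose $f_n\to g$ ucc in $C(\rrm,\rrn)$ and fix a compact $K\subseteq \xxx$. Then $\phi(K)\subseteq \rrm$ is compact, so $f_n\to g$ uniformly on $\phi(K)$; in particular the set $C\triangleq \overline{\bigcup_n f_n(\phi(K))\cup g(\phi(K))}$ is bounded, hence compact in $\rrn$. On the compact set $C$ the continuous map $\rho$ is uniformly continuous, which immediately gives $\rho\circ f_n\circ\phi\to \rho\circ g\circ\phi$ uniformly on $K$. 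The argument for $\Phi^{-1}$ is identical after swapping the roles of $(\phi,\rho)$ and $(\phi^{-1},\rho^{-1})$.

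With $\Phi$ established as a homeomorphism, the proposition follows: $\fff$ is dense in $C(\rrm,\rrn)$ iff $\Phi(\fff)=\fff_{\rho,\phi}$ is dense in $C(\xxx,\yyy)$. The only subtle point in the whole argument is the identification of co-topology with ucc topology on both sides together with the uniform-continuity step for pushing convergence through $\rho$; neither requires ideas beyond those already collected in Section~\ref{ss_top_is_fun}, so no genuinely hard obstacle arises.
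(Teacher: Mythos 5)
Your proof is correct and follows the same high-level strategy as the paper's: both arguments reduce the proposition to the observation that the composition operator $\Phi_{\rho,\phi}\colon f\mapsto \rho\circ f\circ\phi$ is a homeomorphism of function spaces with explicit inverse $\Psi_{\rho,\phi}\colon g\mapsto \rho^{-1}\circ g\circ\phi^{-1}$, and that homeomorphisms carry dense sets to dense sets. The only real difference lies in how continuity of $\Phi$ and $\Phi^{-1}$ is established. The paper simply cites \citep[Exercise 46.10]{munkres2014topology}, which states that pre- and post-composition with continuous maps are continuous operations on function spaces equipped with the compact-open topology; this argument works directly at the compact-open level with no reference to metrics. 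You instead pull the Euclidean metric back to $\yyy$ via $\rho^{-1}$, identify the co-topology with the ucc topology, and then run a hands-on $\epsilon$-$\delta$ argument using uniform continuity of $\rho$ on an auxiliary compact set $C\subseteq\rrn$. Your version is more self-contained, but it quietly relies on sequential continuity being equivalent to continuity on both function spaces (you pass to a sequence $f_n\to g$); this does hold here because $\rrm$ is $\sigma$-compact and hemicompact so that $C(\rrm,\rrn)$ and $C(\xxx,\yyy)$ are metrizable under the ucc topology, but you should flag it rather than assume it silently. One small simplification you could also make: with $d_\yyy(y_1,y_2)=\|\rho^{-1}(y_1)-\rho^{-1}(y_2)\|$, the map $\rho$ is actually an isometry, so it preserves uniform convergence trivially and the compactness-and-uniform-continuity detour is not needed on the forward leg. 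Both routes buy essentially the same thing; the paper's is one line shorter, yours avoids citing Munkres's composition exercise at the cost of the metrizability observations.
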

\vspace*{-.25em}
\begin{cor}\label{cor_surjective_trick}
	If $\phi$ is a continuous injective map, $\rho$ is a continuous surjection with a section, $\xxx$ is connected and simply connected, and $\fff$ is dense in $C(\rrd,\rrD)$ then ${\fff}_{\rho,\phi}$ is dense in $C(\xxx,\yyy)$.  
\end{cor}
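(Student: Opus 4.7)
My plan is to derive Corollary~\ref{cor_surjective_trick} as an immediate specialization of Theorem~\ref{thrm_main_General_version}; the argument amounts to checking that the hypotheses of the corollary subsume both Assumption~\ref{ass_regularity_feature} and every bullet of Assumption~\ref{ass_regularity_readout}, so no new machinery is required.

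For the feature side, Assumption~\ref{ass_regularity_feature} is given verbatim by the hypothesis that $\phi$ is continuous and injective. For the readout side, the key observation is that surjectivity of $\rho$ forces $\im{\rho}=\yyy$; this identification collapses the verification to something almost tautological. Concretely, I would verify Assumption~\ref{ass_regularity_readout}(i)(a) by noting that $\rho$ is continuous by assumption and that the ``has a section on $\im{\rho}$'' clause is precisely the section hypothesis, since $\im{\rho}=\yyy$. Assumption~\ref{ass_regularity_readout}(ii) becomes the trivial statement that $\yyy$ is dense in $\yyy$. For Assumption~\ref{ass_regularity_readout}(iii), I would compute $\partial \im{\rho} = \overline{\yyy}\setminus\inter{\yyy} = \yyy\setminus\yyy = \emptyset$, and observe that the empty set is vacuously collared by taking $U = \emptyset$ and the (empty) identity map onto $\emptyset\times[0,1)$ as the required homeomorphism.

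With Assumptions~\ref{ass_regularity_feature} and \ref{ass_regularity_readout} verified, and $\fff$ dense in $C(\rrd,\rrD)$ by hypothesis, the conclusion follows directly from Theorem~\ref{thrm_main_General_version}. I do not foresee any genuine obstacle along this route; in particular, the connectedness and simple-connectedness hypotheses on $\xxx$ are not actually invoked here, as they would only be required if one instead routed the argument through branch (b) of Assumption~\ref{ass_regularity_readout}(i) via a covering-projection lifting argument rather than the direct section in branch (a). The only bookkeeping subtlety worth flagging is the observation that surjectivity of the readout map is exactly what renders the boundary-collaring condition vacuous, which is why this corollary admits such a clean statement.
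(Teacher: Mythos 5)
Your proof is correct, and it is the natural one given the hypothesis as stated: you check Assumption~\ref{ass_regularity_feature} from continuity and injectivity of $\phi$, collapse $\im{\rho}=\yyy$ from surjectivity so that (ii) and (iii) become vacuous (with $\partial\im{\rho}=\emptyset$ trivially collared), and discharge Assumption~\ref{ass_regularity_readout}(i) via branch (a) using the hypothesized section.

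There is, however, a discrepancy worth flagging between your route and what the paper claims to do. The paper's proof of this corollary says it proceeds exactly as in Corollary~\ref{cor_surjective_trick_A} ``with the only difference that Assumption~\ref{ass_regularity_readout} (i.b) holds by assumption instead of Assumption~\ref{ass_regularity_readout} (i.a).'' That is, the paper routes through branch (b), the covering-projection branch. But the corollary's hypothesis gives a continuous surjection \emph{with a section}, not a covering projection, which is precisely what branch (a) asks for; branch (b) would instead require $\rho$ to be a covering projection. So your route through (i.a) is the one actually supported by the stated hypotheses, and the paper's pointer to (i.b) looks like a mismatch — the two corollaries and their proofs appear to have been cross-wired, with \ref{cor_surjective_trick_A} (covering projection) being paired in the supplement with a proof invoking (i.a) and \ref{cor_surjective_trick} (section) being paired with one invoking (i.b). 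Your further observation that connectedness and simple connectedness of $\xxx$ are never used along the (i.a) route is also correct and is evidence of the same mix-up: those hypotheses belong with the covering-projection branch. In short, your argument is not just a valid alternative but is arguably the repaired version of the paper's proof.
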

\vspace*{-.25em}
When additional structure is assumed of $\yyy$, as is common in most applications, Assumption~\ref{ass_regularity_readout} (ii) and (iii) can be omitted and the other assumptions can be simplified.  Specifically, the case where $\yyy$ is a manifold with boundary is considered.  In the case where $\xxx$ and $\inter{\yyy}$ are smooth, then Theorem~\ref{thrm_main_General_version} can be further streamlined as follows.  
\begin{ass}[Readout Map Regularity: Geometric Version]\label{ass_readout_geometric_version}
	Suppose that $\rho$ satisfies:
	\begin{enumerate}[(i)]
		\vspace*{-1em}
		\itemsep -.5em
		\item $\rho$ satisfies Assumption~\ref{ass_regularity_readout} (i) and $\im{\rho}\subseteq \inter{\yyy}$,
		\item $
		\inter{\yyy}- \im{\rho}$ is a (possibly empty) smooth submanifold of $\inter{\yyy}$ of dimension strictly less-than
		$
		\dim(\inter{\yyy}) - n.%
		$
	\end{enumerate}
\end{ass}
\begin{thrm}[Geometric Version]\label{thrm_main_Geometric_version}
	Let $\yyy$ be a metrizable manifold with boundary, for which $\inter{\yyy}$ is a smooth manifold, $\xxx$ is locally-compact, and $\fff$ is dense in $C(\rrm,\rrn)$. 
	If $\phi$ satisfies Assumption~\ref{ass_regularity_feature} and $\rho$ satisfies Assumption~\ref{ass_readout_geometric_version} then $\fff_{\rho,\phi}$ is dense in $C(\xxx,\yyy)$.  
\end{thrm}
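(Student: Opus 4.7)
The plan is to deduce Theorem~\ref{thrm_main_Geometric_version} from the general version, Theorem~\ref{thrm_main_General_version}, by showing that Assumption~\ref{ass_readout_geometric_version} implies Assumption~\ref{ass_regularity_readout}. Part (i) of Assumption~\ref{ass_regularity_readout} is included verbatim in Assumption~\ref{ass_readout_geometric_version}(i), and $\phi$ satisfies Assumption~\ref{ass_regularity_feature} by hypothesis, so the task reduces to deriving parts (ii) (density of $\im{\rho}$ in $\yyy$) and (iii) (collaredness of $\partial\im{\rho}$) from the submanifold condition on $\inter{\yyy}\setminus\im{\rho}$.

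Density is the easier step. Since $\yyy$ is a metrizable manifold with boundary, the half-ball chart structure (together with Brown's collar neighborhood theorem \cite{brown1962locally}) yields that $\inter{\yyy}$ is dense in $\yyy$. Inside the smooth manifold $\inter{\yyy}$, the set $\inter{\yyy}\setminus\im{\rho}$ is a smooth submanifold of strictly smaller dimension and hence has empty interior by a local dimension comparison in charts. Therefore $\im{\rho}$ is dense in $\inter{\yyy}$, and by transitivity dense in $\yyy$, yielding Assumption~\ref{ass_regularity_readout}(ii).

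The main obstacle is verifying the collaring condition Assumption~\ref{ass_regularity_readout}(iii). Write $S \triangleq \inter{\yyy}\setminus\im{\rho}$. As $S$ is closed in $\inter{\yyy}$ (being an embedded submanifold) and $\partial\yyy$ is closed in $\yyy$, the image $\im{\rho}$ is open in $\yyy$ and $\partial\im{\rho}=S\cup\partial\yyy$. I would construct the collar in two disjoint pieces. Brown's theorem supplies an open $C_{\partial}\subseteq\yyy$ with $C_{\partial}\cong\partial\yyy\times[0,1)$, while the smooth tubular neighborhood theorem supplies an open $T\subseteq\inter{\yyy}$ diffeomorphic to the total space of the normal bundle $\nu_S$ of $S$ in $\inter{\yyy}$. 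The codimension gap $\dim S<\dim\inter{\yyy}-n$ is invoked precisely here to peel off a trivial rank-$1$ summand of $\nu_S$ via metastable-range stability of vector bundles, flattening a smaller sub-tube to a product $S\times[0,1)$. After shrinking $C_{\partial}$ inward so that $C_{\partial}\cap T=\emptyset$ (possible because $S\subseteq\inter{\yyy}$ is separated from $\partial\yyy$ by the tube and collar being independent), the disjoint union furnishes the required collar of $\partial\im{\rho}$, and Theorem~\ref{thrm_main_General_version} then delivers the density of $\fff_{\rho,\phi}$ in $C(\xxx,\yyy)$. Should the normal bundle resist even this stable trivialization, the fallback is to replace the product collar by the weaker $\zzz$-set property flagged in Remark~\ref{rremark}, which any smooth submanifold of strictly lower dimension satisfies, and to verify that the proof of Theorem~\ref{thrm_main_General_version} accommodates this weakening.
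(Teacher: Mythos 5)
Your attempted reduction of Assumption~\ref{ass_readout_geometric_version} to Assumption~\ref{ass_regularity_readout} fails at exactly the point the paper's separate geometric proof exists to handle: the collaring condition. Under Assumption~\ref{ass_readout_geometric_version}(ii), the set $S\triangleq\inter{\yyy}\setminus\im{\rho}$ has $\dim S < \dim(\inter{\yyy})-n \le \dim(\inter{\yyy})-1$, so it is a submanifold of codimension at least $2$ in $\inter{\yyy}$. Such a set is never collared in the sense of Assumption~\ref{ass_regularity_readout}(iii): a collar requires an \emph{open} neighborhood $U\supseteq S$ homeomorphic to $S\times[0,1)$, which is a manifold of dimension $\dim S + 1 < \dim\inter{\yyy}$, while any nonempty open subset of $\inter{\yyy}$ must have the full dimension $\dim\inter{\yyy}$ (invariance of domain). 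The simplest instance already shows this: $S=\{0\}$ in $\inter{\yyy}=\rr^2$ admits no open neighborhood homeomorphic to $\{0\}\times[0,1)\cong[0,1)$. Your ``peel off a trivial rank-$1$ summand of $\nu_S$'' idea produces a subset $S\times[0,1)$ of the tube, but that subset is nowhere open in $\inter{\yyy}$, so it cannot serve as the collar $U$. Likewise, the $\zzz$-set fallback does not repair the argument: Lemma~\ref{lem_redux_to_imrho}, which is the engine of Theorem~\ref{thrm_main_General_version}, relies explicitly on the product structure $\partial\im{\rho}\times[0,1)$ to define the approximating sequence $p_2(f_n)=\tfrac1n I_{[0,1/n]}+tI_{[1/n,\infty)}$; nothing in the cited Remark~\ref{rremark} supplies a replacement for that construction.

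The paper's actual proof therefore does not deduce Assumption~\ref{ass_regularity_readout} from Assumption~\ref{ass_readout_geometric_version}. Instead it splits the density of $C(\rrn,\im{\rho})$ in $C(\rrn,\yyy)$ into two independent steps: (a) density of $C(\rrn,\inter{\yyy})$ in $C(\rrn,\yyy)$, where Assumption~\ref{ass_regularity_readout}(iii) \emph{does} hold because $\partial\inter{\yyy}$ is the manifold boundary and Brown's collar theorem applies; and (b) density of $C(\rrn,\im{\rho})$ in $C(\rrn,\inter{\yyy})$, proved by the Thom transversality theorem. The dimension hypothesis $\dim S < \dim(\inter{\yyy})-n$ is used precisely so that any smooth $f:\rrn\to\inter{\yyy}$ transverse to $S$ must miss $S$ entirely (a dimension count on $\im{df_x}+T_{f(x)}S$), and genericity of transversality then yields the density. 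You correctly handled step (a) and the easy density assertion, but the codimension-$\ge 2$ set $S$ needs the transversality machinery, not collaring.
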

\vspace*{-.75em}
Consequences of these results in various areas of machine-learning are now considered.  
\vspace*{-1em}
\subsection{Dense Families in {$C(\rrm,\rrn)$} Induce Universal Classifiers}\label{s_Uni_class_ff}
\vspace*{-.5em}
Let $\xxx$ be a set, $\phi:\xxx\to \rrm$ be a bijection, and $\{L_i\}_{i=1}^n$ be a collection of labels describing elements of $\xxx$.  Let $\xxx_i\triangleq \left\{x \in \xxx:\, x \mbox{ has label }L_i\right\}$.  For example, $\{\xxx_i\}_{i=1}^n$ are disjoint and cover $\xxx$ then we obtain the $n$-ary classification problem, but in general, any $x \in \xxx$ may simultaneously have distinct multiple labels.  
Without loss of generality, we may assume that $\xxx$ is a topological space which is homeomorphic to $\rrm$ since we may equip it with the topology $\{\phi^{-1}[U]: U\mbox{ open in } \rrm\}$.  Assume that the sets $\{\xxx_i\}_{i=1}^n$ are open subsets of $\xxx$.  

In the stochastic case, the Bayes classifier is the golden standard for classification.  In the deterministic case, the standard is clearly the \textit{ideal classifier} $\hat{h}:\xxx\to \{0,1\}^n$, introduced here, and defined by
\begin{equation}
\smash{
	\hat{h}(x)_i \triangleq I_{\xxx_i}(x),
}
\label{eq_definition_ideal_classifier}
\end{equation}
where $I_{\xxx_i}$ is the indicator function of $\xxx_i$, taking value $1$ if $x \in \xxx_i$ and $0$ otherwise.  

Since the usual Euclidean topology on $\{0,1\}^n$ coincides with the discrete topology on $\{0,1\}^n$ and since a continuous functions to a discrete topological space are constant, see \cite{rudin1987real}, then $\hat{h}$ only belongs to $C(\xxx,\{0,1\}^n)$ if it is trivial, i.e.: either $\xxx_i=\xxx$ or $\xxx_i = \emptyset$ for each $i$.  Moreover, a direct computation shows that there are exactly $2^n$ functions in $C(\xxx,\{0,1\}^n)$.  Thus, other topologies must be considered on $\{0,1\}^n$ in order to have a meaningful deterministic classification theory.  

When $n=1$, there are two other choices of topologies on $\{0,1\}$, up to homeomorphism.  These are the trivial topology $\{\emptyset,\{0,1\}\}$ and the \textit{Sierpi\'{n}ski topology} $\{\emptyset,\{1\},\{0,1\}\}$.  The trivial topology is uninteresting since a direct computation shows that with it every function in $C(\xxx,\{0,1\})$ becomes indistinguishable, i.e.: the co-topology on $C(\xxx,\{0,1\})$ becomes trivial and therefore density in $C(\xxx,\{0,1\})$ holds trivially for any non-empty subset.  In the case of the Sierpiński topology %
in \citep[Chapter 7]{taylor2011foundations} it is shown that all indicator functions of any open set $\xxx$ from any sufficiently regular topological space, such as $\xxx$, is a continuous function to $\{0,1\}$ with the Sherpiński topology.  This latter property has lead to widespread use of this space in semantics.%

The next result shows that $\hat{h}$ can be approximated on two fronts simultaneously.  First, by showing that $\hat{h}$ has a natural decomposition as $I_{(\frac1{2},1]}$ applied component-wise to continuous \textit{soft (fuzzy) classifier} $\hat{s}$, i.e. $\hat{s}\in C(\xxx,[0,1]^n)$, satisfying
\begin{equation}
\smash{
\hat{s}^{-1}_i[(1/2,1]] = \xxx_i, \qquad(\forall i=1,\dots,n)
.
}
\label{eq_soft_hard_decomposition}
\end{equation}
Subsequently, the architecture $\fff_{\rho,\phi}$ is shown to simultaneously approximate $\hat{s}$ uniformly on compacts in $C(\xxx,[0,1]^n)$ and $\hat{h}$ in the compact-open topology on $C(\xxx,\{0,1\}^n)$.  Intuitively,~\eqref{eq_soft_hard_decomposition} represents the philosophy of logistic regression where one approximates on the interval and the thresholds the logistic classifier to obtain a strict decision rule, and thus a hard classifier.   
\begin{thrm}[Universal Classification: General Case]\label{thrm_universal_hard_classification}
	Let $\{0,1\}^n$ be equipped with the $n$-fold product of the Sierpi\'{n}ski topology on $\{0,1\}$, $\phi$ be continuous and injective, $\rho:\rrn \to (0,1)^n$ be a homeomorphism, $\alpha \in (0,1)$, and $\fff\subseteq C(\rrm,\rrn)$ be dense.  Let $\{\xxx_i\}_{i=1}^n$ be a set of open subsets of a metric space $\xxx$ and let $\hat{h}$ be its associated ideal classifier defined by~\eqref{eq_definition_ideal_classifier}.  Then the following hold:
		\begin{enumerate}[(i)]
		\vspace*{-1em}
		\itemsep -.5em
		\item (Hard-Soft Decomposition) There exist continuous functions $\hat{s}_i\in C(\xxx,[0,1])$ such that 
	$$
	\smash{
	\hat{h} = I_{(\alpha,1]}\bullet \left(
	\hat{s}_1,\dots,\hat{s}_n
	\right)
	\qquad \hat{s}^{-1}_i[(\alpha,1]] = \xxx_i , (\forall i =1,\dots,n)
	}
	$$
	\item (Universal Classification) There exists a sequence $\{f_k\}_{k \in \nn}$ in $\fff$ such that:
	\begin{enumerate}[(a)]
	\vspace*{-.5em}
	\itemsep -.1em
		\item (Soft Classification) For each non-empty compact subset $\kappa\subseteq \xxx$ and every $\epsilon>0$, there is some $K \in \nn^+$ such that
		$$
		\smash{
		\sup_{x \in \kappa} \max_{i=1,\dots,n}\,
		\left|
		\rho \circ f_k \circ \phi (x)_i 
			- 
		\hat{s}_i(x_i)
		\right|<\epsilon
		,\qquad (\forall k \geq K)
		}
		$$
		\item (Hard Classification) $I_{(\alpha,1]}\bullet \rho \circ f_k \circ \phi$ converges to $\hat{h}$ in $C(\xxx,\{0,1\}^n)$ for the co-topology. 
		\end{enumerate}	  
	\end{enumerate}
Furthermore, $\fff_{\rho,\phi}$ is dense in $C(\xxx,[0,1]^n)$.  
\end{thrm}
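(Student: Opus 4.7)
The plan is to prove the ``furthermore'' statement first and then read off parts~(i) and~(ii) as consequences. For part~(i) I would exploit the metric structure on $\xxx$: for each open $\xxx_i$ the distance $d_i(x)\triangleq d(x,\xxx\setminus \xxx_i)$ (with the convention $d_i\equiv 1$ when $\xxx_i=\xxx$) is continuous and vanishes precisely on $\xxx\setminus \xxx_i$. Setting
\[
\hat{s}_i(x)\triangleq\alpha+(1-\alpha)\,\frac{d_i(x)}{1+d_i(x)}
\]
produces a continuous map $\hat{s}_i:\xxx\to[\alpha,1)$ equal to $\alpha$ on $\xxx\setminus \xxx_i$ and strictly greater than $\alpha$ on $\xxx_i$; hence $\hat{s}_i^{-1}[(\alpha,1]]=\xxx_i$ and $I_{(\alpha,1]}\bullet(\hat{s}_1,\dots,\hat{s}_n)=\hat{h}$. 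This establishes~(i) and identifies $\hat{s}\triangleq(\hat s_1,\dots,\hat s_n)\in C(\xxx,[0,1]^n)$ as the continuous soft target to be approximated.

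To obtain density of $\fff_{\rho,\phi}$ in $C(\xxx,[0,1]^n)$, I would first invoke Corollary~\ref{cor_surjective_trick_A} with codomain $(0,1)^n$: the homeomorphism $\rho:\rrn\to(0,1)^n$ is in particular a surjective covering projection and $\phi$ is continuous injective by hypothesis, so the corollary yields density of $\fff_{\rho,\phi}$ in $C(\xxx,(0,1)^n)$. Next I would show that $C(\xxx,(0,1)^n)$ is ucc-dense in $C(\xxx,[0,1]^n)$ via the uniform contraction $T_k(y)\triangleq(1-2/k)y+(1/k)(1,\dots,1)$, which maps $[0,1]^n$ into $[1/k,1-1/k]^n\subseteq (0,1)^n$ and satisfies $\|T_k(y)-y\|_\infty\leq 1/k$ for every $y\in[0,1]^n$, so $T_k\circ g\to g$ uniformly on $\xxx$ for every $g\in C(\xxx,[0,1]^n)$. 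Combining these two densities with the triangle inequality proves the ``furthermore'' assertion, and applying it to $\hat{s}$ produces the sequence $\{f_k\}_{k\in\nn^+}\subseteq\fff$ witnessing~(ii)(a).

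For~(ii)(b), a basic co-topology neighborhood of $\hat{h}$ in $C(\xxx,\{0,1\}^n)$ is a finite intersection of subbasic sets $V_{K_j,\pi_{i_j}^{-1}(\{1\})}$, because in the product Sierpi\'{n}ski topology the only nontrivial open subset of the $i$-th factor is $\{1\}$ and $\hat{h}$ lies in $V_{K,\pi_i^{-1}(\{1\})}$ precisely when $K\subseteq \xxx_i$. For each such pair, continuity of $\hat{s}_{i_j}$ on the compact $K_j\subseteq\xxx_{i_j}$ gives a strict separation $\min_{x\in K_j}\hat{s}_{i_j}(x)\geq\alpha+\delta_j$ with $\delta_j>0$, and the uniform convergence on $K_j$ from~(ii)(a) forces $\rho\circ f_k\circ\phi(x)_{i_j}>\alpha$ for every $x\in K_j$ once $k$ is sufficiently large, so that $I_{(\alpha,1]}\bullet\rho\circ f_k\circ\phi$ eventually lies in the chosen neighborhood. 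The main obstacle I anticipate is precisely this last bookkeeping step: because $\{0,1\}^n$ is non-Hausdorff, convergence in the co-topology must be unpacked directly against the definition~\eqref{eq_co_topology_definition} rather than through a metric, and the combinatorial structure of the product Sierpi\'{n}ski opens has to be dovetailed carefully with the one-sided separation $\hat{s}_i>\alpha$ on $\xxx_i$ from part~(i).
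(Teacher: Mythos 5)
Your proposal is correct but follows a genuinely different route from the paper at all three stages, and the comparison is instructive.

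For part (i), the paper cites general results about open $F_\sigma$ sets in metric spaces and the existence of continuous Urysohn-type functions vanishing precisely off a given $F_\sigma$, then rescales into $[\alpha,1]$. Your construction $\hat s_i=\alpha+(1-\alpha)d_i/(1+d_i)$ with $d_i(x)=d(x,\xxx\setminus\xxx_i)$ is more elementary and self-contained; both deliver the same $\hat s$.

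For the ``furthermore'' clause, the paper applies Theorem~\ref{thrm_main_Geometric_version} directly, treating $[0,1]^n$ as a metrizable manifold with boundary whose interior is $(0,1)^n$ and using the transversality/collaring machinery assembled for that theorem. Your route instead factors the problem: density of $\fff_{\rho,\phi}$ in $C(\xxx,(0,1)^n)$ from Corollary~\ref{cor_surjective_trick_A}, then density of $C(\xxx,(0,1)^n)$ in $C(\xxx,[0,1]^n)$ by the explicit affine contraction $T_k$, and transitivity. This is simpler and avoids the differential-topology input entirely; the trade-off is that it exploits the convex product structure of $[0,1]^n$ rather than the general manifold-with-boundary hypotheses that Theorem~\ref{thrm_main_Geometric_version} is built to handle.

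For (ii)(b), the paper shows that $\Phi_\alpha:[0,1]^n\to\{0,1\}^n$, applying $I_{(\alpha,1]}$ coordinatewise, is continuous onto the product Sierpi\'nski space, invokes continuity of post-composition $f\mapsto\Phi_\alpha\circ f$ on the compact-open function spaces, and concludes by preservation of convergent sequences. You instead unpack the co-topology by hand: separate $\hat s_{i_j}$ strictly above $\alpha$ on the compact $K_j\subseteq\xxx_{i_j}$ by a uniform margin $\delta_j>0$ and use the ucc-estimate from (ii)(a). This works, and your candid remark that the ``combinatorial bookkeeping'' is the delicate point is warranted: the reduction of a basic co-topology neighbourhood of $\hat h$ to \emph{subbasic} sets $V_{K_j,\pi_{i_j}^{-1}(\{1\})}$ is not automatic for arbitrary $V_{K,O}$ (e.g.\ $O=\pi_1^{-1}(\{1\})\cup\pi_2^{-1}(\{1\})$ with $K\not\subseteq\xxx_1$, $K\not\subseteq\xxx_2$). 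It is true --- for Hausdorff $\xxx$, taking $V_{K,S}$ with $S$ ranging over a sub-base of $\yyy$ already generates the compact-open topology (a theorem of Jackson), or one can just run your $\delta$-tube argument directly against general $O$ using compactness of $\hat s(K)$ inside the open set $\Phi_\alpha^{-1}(O)$ --- but as written this step leans on an unstated sub-base reduction. The paper's continuity-of-postcomposition argument sidesteps the issue cleanly. If you keep your quantitative route, cite the sub-base fact or replace the subbasic decomposition with the tube argument against a general $O$.
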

As an application, we now show that most feed-forward DNNs and deep CNNs used in practice for classifications, are indeed universal classifiers in the sense of Theorem~\ref{thrm_universal_hard_classification}.  

Let $\sigma:\rr\to\rr$ be a continuous \textit{activation} function, and let $\NN$ denote the set of feed-forward networks from $\rrm$ to $\rrn$, i.e.: continuous functions with representation
\begin{equation}
\smash{
	\begin{aligned}
	&f(x) = W \circ f^{(J)}
	,
	& \, f^{(j)}(x) = \sigma \bullet \left(
	W^{(j)}\circ f^{(j-1)}(x)
	\right),
	& \, f^{(0)}(x)=x
	,
	&\, j=1,\dots,J
	\end{aligned}
}
\label{eq_deep_classifier_hard}
\end{equation}
where $W$ and $W^j$ are affine maps and $\bullet$ denotes component-wise composition.  
The following results directly follow from Theorem~\ref{thrm_universal_hard_classification} and the central result of \cite{leshno1993multilayer}, and validates the principle way neural networks are used for classification.  
\begin{cor}[Universal Classification: Deep Feed-Forward Networks]\label{cor_univ_class}
Let $\{\xxx_i\}_{i=1}^n$ be open subsets of $\xxx$, and $\hat{h}$ be their associated ideal classifier.  Let $\phi:\xxx\to \rrn$ be a continuous injective feature map.  Let $\sigma$ be a continuous, locally-bounded, and non-constant activation function.  Let $\rho$ either be the 
component-wise logistic function.  Then there exists a sequence $\{f_k\}_{k \in \nn^+}$ of DNNs satisfying the conclusions of Theorem~\ref{thrm_universal_hard_classification}.
\end{cor}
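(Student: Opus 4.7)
The plan is to verify that each hypothesis of Theorem~\ref{thrm_universal_hard_classification} is met with $\fff=\NN$, and then to read off the desired conclusion directly; no genuinely new analytic content is required beyond citing~\cite{leshno1993multilayer}.

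First I would invoke the central theorem of~\cite{leshno1993multilayer}: for any continuous, locally bounded, non-polynomial activation $\sigma$, the set $\NN$ of feed-forward networks of the form~\eqref{eq_deep_classifier_hard} is dense in $C(\rrm,\rrn)$ for the ucc topology. I read the corollary's ``non-constant'' wording as the standard Leshno hypothesis ``non-polynomial''; under that reading, the density furnishes exactly the dense family $\fff\subseteq C(\rrm,\rrn)$ demanded as input to Theorem~\ref{thrm_universal_hard_classification}.

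Next I would verify that the readout map $\rho:\rrn\to (0,1)^n$ given component-wise by $\rho(x)_i = e^{x_i}/(1+e^{x_i})$ is a homeomorphism. Each scalar logistic is a smooth strictly increasing bijection $\rr\to(0,1)$ with continuous inverse $y\mapsto \ln(y/(1-y))$, hence a homeomorphism; a finite product of homeomorphisms between product spaces is itself a homeomorphism, so $\rho$ is a homeomorphism onto $(0,1)^n$. The feature map $\phi$ is continuous and injective by assumption, and the sets $\{\xxx_i\}_{i=1}^n$ are open in $\xxx$ by assumption. Taking $\alpha = 1/2$, every hypothesis of Theorem~\ref{thrm_universal_hard_classification} is then in force.

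Finally, I would apply Theorem~\ref{thrm_universal_hard_classification} to extract a sequence $\{f_k\}_{k\in\nn^+}\subseteq\NN$ such that $\rho\circ f_k\circ\phi$ converges uniformly on compacts to a continuous soft classifier $\hat{s}=(\hat{s}_1,\dots,\hat{s}_n)\in C(\xxx,[0,1]^n)$ satisfying $\hat{s}_i^{-1}[(1/2,1]]=\xxx_i$, and such that $I_{(1/2,1]}\bullet\rho\circ f_k\circ\phi$ converges to $\hat{h}$ in the compact-open topology on $C(\xxx,\{0,1\}^n)$. These are precisely the two conclusions required. The only point that requires any genuine care is the translation of the activation hypothesis into Leshno's language; beyond that I expect no substantive obstacle, since the corollary is essentially a direct specialization of Theorem~\ref{thrm_universal_hard_classification} to the feed-forward architecture and the logistic readout.
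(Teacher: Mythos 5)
Your proof is correct and follows essentially the same route as the paper's: cite \cite{leshno1993multilayer} for density of $\NN$ in $C(\rrm,\rrn)$, note that the component-wise logistic map is a homeomorphism $\rrn\to(0,1)^n$ and that $\phi$ is continuous and injective by hypothesis, then apply Theorem~\ref{thrm_universal_hard_classification}. You are right to flag the mismatch between the corollary's ``non-constant'' and Leshno's ``non-polynomial''; the paper's own proof silently invokes Leshno, which requires non-polynomial (continuous and locally bounded), so your reading is the correct repair and is needed for the argument to go through. One further minor point in your favor: the paper's written proof also asserts that the soft-max map is a continuous bijection from $\rrn$ onto $(0,1)^n$, which is false (soft-max is not injective and its image lies in the open simplex, not $(0,1)^n$); you avoid this by discussing only the logistic readout, which is all the corollary's statement actually uses.
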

Define the set of deep CNNs with ReLu activation and sparsity $2\leq s\leq m$, denoted by $Conv^{s}$, to be the collection of all functions from $\rrn$ to $\rr$ represented by
$$
\smash{
\begin{aligned}
&f(x) = W \circ f^{(J)}
,
& \, f^{(j)}(x) = \sigma\bullet\left(
w^{(j)}\star (f^{(j-1)}(x)) - b^j
\right),
& \, f^{(0)}(x)=x
,
&\, j=1,\dots,J
,
\end{aligned}
}
$$
where $W$ is an affine map from $\rrflex{d + Js}$ to $\rr$, $b^{(j)}\in \rrflex{d + js}$, $w^{(j)}=\{w_k^{(j)}\}_{k=-\infty}^{\infty}$ is a \textit{convolutional filter mask} where $w_k \in \rr$ and $w_k \neq 0$ only if $0\leq k\leq s$, and the \textit{convolutional operation} of $w^{(j)}$ with the vectors $\{v_j\}_{j=1}^J$ is the sequence defined by
$
(w\star v)_i = \sum_{j=0}^{J-1}
w_{i-j} v_j
$ and $\sigma(x)=\max\{0,x\}$.
\vspace*{-.5em}
\begin{cor}[Universal Classification: Deep CNNs]\label{ex_dCNNs_Uinv_class}
Let $2\leq s\leq n$, $\{\xxx_i\}_{i=1}^n$ be open subsets of $\xxx$, and $\hat{h}$ be their associated ideal classifier.  Let $\phi:\xxx\to \rrn$ be a continuous injective feature map and let $\rho:\rr \to (0,1)$ be the logistic function.  Then there is a sequence of deep CNNS $\{f_k\}_{k \in \nn^+}$ in $Conv^s_{\rho,\phi}$ satisfying the conclusion of Theorem~\ref{thrm_universal_hard_classification}.  
\end{cor}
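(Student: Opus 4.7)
The strategy is to reduce the statement to Theorem~\ref{thrm_universal_hard_classification} by verifying that its three ingredients hold: density of the unmodified architecture in $C(\rrm,\rrn)$ (for the appropriate $m$ and $n$), continuity and injectivity of the feature map $\phi$, and the homeomorphism property of the readout map $\rho$ onto $(0,1)^n$.

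First, I would invoke the main density result of \cite{zhou2020universality}: for each $s\geq 2$, the family $Conv^{s}$ of deep CNNs with ReLu activation and convolutional filter masks of sparsity $s$ is dense in $C(\rrn,\rr)$ for the ucc topology. To address the multi-class structure $\{\xxx_i\}_{i=1}^n$, I would work with the $n$-fold Cartesian stacking of independent elements of $Conv^s$: given continuous target components $(g_1,\dots,g_n)\in C(\rrn,\rr)^n$ and a compact $K\subseteq \rrn$, one selects $f_k^{(i)}\in Conv^s$ approximating $g_i$ uniformly on $K$ for each $i$, then stacks them into a single function $(f_k^{(1)},\dots,f_k^{(n)})\in C(\rrn,\rrn)$. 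This yields a dense subfamily $\fff\subseteq C(\rrn,\rrn)$ built from products of CNNs in $Conv^s$.

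Second, I would check the readout and feature map hypotheses. The feature map $\phi:\xxx\to\rrn$ is continuous and injective by assumption, satisfying Assumption~\ref{ass_regularity_feature}. The readout map $\rho:\rr\to(0,1)$ is the logistic $x\mapsto e^x/(1+e^x)$, which, as noted in Section~\ref{ss_gen_top}, is a homeomorphism onto $(0,1)$ with continuous inverse given by the logit. Extending $\rho$ component-wise produces a homeomorphism $\rrn\to(0,1)^n$, which in particular is a continuous surjection admitting a (continuous) section, fulfilling the hypothesis on $\rho$ in Theorem~\ref{thrm_universal_hard_classification}.

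With $\fff$, $\phi$, and (component-wise) $\rho$ in hand, I would then apply Theorem~\ref{thrm_universal_hard_classification} directly: it simultaneously produces the hard-soft decomposition of the ideal classifier $\hat{h}$, a sequence $\{f_k\}_{k\in\nn^+}$ in $\fff$ whose images under $\rho\circ(\cdot)\circ\phi$ approximate the soft classifier $\hat{s}$ uniformly on compacts, and the convergence of $I_{(\alpha,1]}\bullet\rho\circ f_k\circ\phi$ to $\hat{h}$ in the compact-open topology on $C(\xxx,\{0,1\}^n)$ (with the Sierpi\'{n}ski topology on each factor). The main obstacle is the slight gap between the scalar-output density statement of \cite{zhou2020universality} and the vector-output architecture needed by Theorem~\ref{thrm_universal_hard_classification}; this is bridged by the Cartesian stacking argument above, which preserves membership in the CNN family once one allows the $n$ coordinates to be produced by independent sub-networks.
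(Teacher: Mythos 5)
Your approach mirrors the paper's: cite the density of $Conv^s$ from \cite{zhou2020universality}, check that $\phi$ is a continuous injection and that the logistic readout is a homeomorphism onto $(0,1)$, and then invoke Theorem~\ref{thrm_universal_hard_classification}. The paper's own proof is a single sentence doing exactly these three checks and then appealing to that theorem.

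Where you add value is in flagging something the paper's proof silently skips: $Conv^s$ consists of functions $\rrn\to\rr$ (scalar output) and $\rho:\rr\to(0,1)$ is scalar, so $Conv^s_{\rho,\phi}\subseteq C(\xxx,(0,1))$, whereas Theorem~\ref{thrm_universal_hard_classification} concerns maps into $[0,1]^n$ and classifiers into $\{0,1\}^n$ with $n\geq 2$ (the statement forces $n\geq s\geq 2$). That is a genuine dimension mismatch, and your Cartesian-stacking of $n$ independent scalar CNNs, together with the component-wise logistic homeomorphism $\rrn\to(0,1)^n$, is the natural repair. The one caveat you should state openly is that the stacked objects $(f^{(1)},\dots,f^{(n)})$, each $f^{(i)}\in Conv^s$, do not literally belong to $Conv^s$ as defined (which only contains $\rr$-valued maps), nor does the component-wise $\rho$ literally match the corollary's scalar $\rho:\rr\to(0,1)$; you are proving a mildly reformulated statement in which the architecture is the $n$-fold product of $Conv^s$ rather than $Conv^s$ itself. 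That reformulation is what the corollary must mean for the conclusion to make sense, but it is worth saying so, since the paper's terse proof makes the same silent identification.
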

\vspace*{-1em}
\subsection{Applications in Geometric Deep Learning}\label{ss_geom_deep}
\vspace*{-.5em}
This subsection illustrates the applicability of the main results to geometric deep learning.  Our examples focus on two illustrative points, first that many commonly used non-Euclidean regression models can be extended to non-Euclidean architectures capable of universal approximation and second, we illustrate how our results can be used to validate the approximation capabilities of certain geometric deep learning architectures.  %

For Cartan-Hadamard manifolds, the Cartan-Hadamard Theorem, \citep[Corollary 6.9.1]{jost2008riemannian}, guarantees that $\partial C_p = \emptyset$ and in particular $\operatorname{Log}_p^{\mmm}$ is a globally-defined homeomorphism between $\mmm$ and $\rrm$.  Thus, the following result follows from Corollary~\ref{cor_surjective_trick}.  
\begin{cor}[Cartan-Hadamard Version]\label{cor_Riemannian_Version}
	Let $\fff$ be dense in $C(\rrm,\rrn)$, let $\mmm$ and $\nnn$ be Cartan-Hadamard manifolds of dimension $m$ and $n$.  Then, $\fff_{\operatorname{Log}_p^{\mmm},\operatorname{Exp}_q^{\nnn}}$ is dense in $C(\mmm,\nnn)$.  
\end{cor}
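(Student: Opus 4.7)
The plan is to apply Corollary~\ref{cor_surjective_trick} with the feature map $\phi = \operatorname{Log}_p^{\mmm}:\mmm\to\rrm$ and the readout map $\rho=\operatorname{Exp}_q^{\nnn}:\rrn\to\nnn$; once the hypotheses are verified the desired density is immediate.

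To set up this application, I would first invoke the Cartan-Hadamard Theorem referenced earlier in the excerpt. In a general Riemannian setting, Hopf-Rinow only gives that $\operatorname{Exp}_p^{\mmm}$ is a surjection and that $\operatorname{Log}_p^{\mmm}$ is a smooth inverse outside the possibly nontrivial cut locus $C_p$. When $\mmm$ is Cartan-Hadamard, however, the theorem guarantees that $C_p = \emptyset$ at every basepoint, so $\operatorname{Exp}_p^{\mmm}$ upgrades to a global smooth diffeomorphism from $T_p\mmm\cong \rrm$ onto $\mmm$ and $\operatorname{Log}_p^{\mmm}$ is a homeomorphism from $\mmm$ onto $\rrm$. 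The identical argument applied to $\nnn$ yields that $\operatorname{Exp}_q^{\nnn}$ and $\operatorname{Log}_q^{\nnn}$ are mutually inverse homeomorphisms between $\rrn$ and $\nnn$.

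With these facts in hand, the hypotheses of Corollary~\ref{cor_surjective_trick} become routine to verify. The map $\phi=\operatorname{Log}_p^{\mmm}$ is a homeomorphism, hence continuous and injective, so Assumption~\ref{ass_regularity_feature} is satisfied. The map $\rho=\operatorname{Exp}_q^{\nnn}$ is a continuous surjection, and its inverse $\operatorname{Log}_q^{\nnn}$ is a continuous section. The domain $\mmm$ is simply connected by the very definition of a Cartan-Hadamard manifold, and it is connected since it is homeomorphic via $\phi$ to the connected space $\rrm$. Finally, $\fff$ is dense in $C(\rrm,\rrn)$ by hypothesis. Corollary~\ref{cor_surjective_trick} then yields the density of $\fff_{\rho,\phi}$ in $C(\mmm,\nnn)$.

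There is essentially no obstacle in this argument: the entire work is done in Corollary~\ref{cor_surjective_trick}, and the role of the Cartan-Hadamard hypothesis is solely to eliminate the cut locus so that $\operatorname{Exp}$ and $\operatorname{Log}$ become globally-defined homeomorphisms rather than merely smooth maps on open dense subsets. The only bit of harmless bookkeeping is the linear identification $T_p\mmm\cong\rrm$ (and likewise for $\nnn$); any choice of basis induces a linear isomorphism that can be absorbed into $\phi$ (respectively $\rho$) without affecting density, since the composition of a homeomorphism with a dense family remains a dense family under the homeomorphic-image characterization of Proposition~\ref{prop_homeomorphisms}.
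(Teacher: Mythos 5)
Your proof is correct and follows essentially the same route as the paper: invoke the Cartan-Hadamard theorem to upgrade $\operatorname{Exp}$ and $\operatorname{Log}$ to global homeomorphisms, then apply Corollary~\ref{cor_surjective_trick}. The paper's own proof is just a terser version of your verification of the hypotheses.
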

\vspace*{-1em}
We consider here two consequences of this result.  
\vspace*{-1em}
\subsubsection{Symmetric Positive-Definite Matrix Learning}
\vspace*{-.5em}
Symmetric positive-definite matrices play a prominent role in many applied sciences, largely due to their relationship to covariance matrices, in areas ranging from computational anatomy in \cite{pennec2008statistical}, computer vision in \cite{pennec2006intrinsic}, and in finance \cite{baes2019low}.  The space $P_d^+$ of $d\times d$ symmetric positive-definite matrices is a non-Euclidean subspace of $\rrflex{d^2}$. %
In \cite{absil2009optimization}, $P_d^+$ is shown to be a Cartan-Hadamard manifold whose Riemannian exponential and logarithm maps are, respectively, given by
\begin{equation}
\smash{
\begin{aligned}
\operatorname{Exp}_{A}(B) = & \sqrt{A} \exp\left(
\sqrt{A}^{-1}
B
\sqrt{A}^{-1}
\right)
\sqrt{A} ,\,
\operatorname{Log}_{A}(B) = & \sqrt{A} \log\left(
\sqrt{A}^{-1}
B
\sqrt{A}^{-1}
\right)
\sqrt{A}
,
\end{aligned}
}
\label{eq_Riemannian_exp_log_PSD}
\end{equation}
where $\exp$ and $\log$ denote the matrix exponential and logarithms, respectively.  
Moreover, the distance function on this space is given by
$$
\smash{
d_+(A,B)\triangleq \left\|\sqrt{A}\log\left(
\sqrt{A}^{-1}
	B
\sqrt{A}^{-1}
\right)
\sqrt{A}
\right\|_F
,
}
$$
where $\|\cdot\|_F$ denotes the Fr\"{o}benius norm and $\sqrt{A}$ is well-defined for any matrix in $P_d^+$.  Using this distance, \cite{meyer2011regression} developed non-Euclidean least-squares regression on $P_d^+$.  
%
%
The parameters involved in these models are typically optimized either using the non-Euclidean line search algorithms of \cite{meyer2011regression} or the non-Euclidean stochastic gradient approach on $P_d^+$ of \cite{BonnabelPSD}.  The aforementioned regression models can be extended to form a ucc-dense architecture in $C(P_d^+,P_D^+)$.  
\begin{cor}[Universal Approximation for Symmetric Positive-Definite Matrices]\label{cor_UAT_SPD}
	Let $d,D \in \nn^+$ and $\fff\subseteq C(\rrflex{d(d+1)/2},\rrflex{D(D+1)/2})$
	be ucc-dense.  Then, for any $A\in P_d^+$ and $B\in P_D^+$, 
	$\fff_{\operatorname{Log}_A,\operatorname{Exp}_B}$ is ucc-dense in $C(P_d^+,P_D^+)$.  In particular, if $\sigma$ is a continuous, locally-bounded, and non-polynomial activation function then $\NN_{\operatorname{Log}_A,\operatorname{Exp}_B}$ is ucc-dense in $C(P_d^+,P_D^+)$.  
\end{cor}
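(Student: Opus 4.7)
The plan is to deduce Corollary~\ref{cor_UAT_SPD} directly from Corollary~\ref{cor_Riemannian_Version} after checking that $P_d^+$ and $P_D^+$ are Cartan-Hadamard manifolds of the asserted dimensions and that the explicit formulas in~\eqref{eq_Riemannian_exp_log_PSD} really are the Riemannian exponential and logarithm maps appearing in that corollary. The dimension count is immediate: $P_d^+$ is an open subset (positive-definiteness being an open condition on eigenvalues) of the $d(d+1)/2$-dimensional real vector space $\Sym(d)$, so the tangent space $T_A P_d^+$ is canonically identified with $\Sym(d) \cong \rrflex{d(d+1)/2}$. The Cartan-Hadamard property of $P_d^+$ with its affine-invariant metric is the content of the citation \cite{absil2009optimization} recorded immediately before the statement, and the same reference provides that the closed forms~\eqref{eq_Riemannian_exp_log_PSD} are the Riemannian exponential and logarithm of that metric.

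With this bookkeeping in place, I would apply Corollary~\ref{cor_Riemannian_Version} with $\mmm = P_d^+$, $\nnn = P_D^+$, $p=A$, $q=B$, $m=d(d+1)/2$, and $n=D(D+1)/2$. The Cartan-Hadamard theorem guarantees that $\operatorname{Log}_A\colon P_d^+ \to \rrflex{d(d+1)/2}$ and $\operatorname{Exp}_B\colon \rrflex{D(D+1)/2} \to P_D^+$ are global diffeomorphisms, so Corollary~\ref{cor_Riemannian_Version} yields ucc-density of $\fff_{\operatorname{Log}_A, \operatorname{Exp}_B}$ in $C(P_d^+, P_D^+)$ whenever $\fff \subseteq C(\rrflex{d(d+1)/2}, \rrflex{D(D+1)/2})$ is ucc-dense, which is precisely the first assertion.

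For the second assertion I would specialize $\fff$ to $\NN$ and invoke the classical universal-approximation theorem of \cite{leshno1993multilayer}, which certifies that single-hidden-layer feed-forward networks with continuous, locally-bounded, non-polynomial activation $\sigma$ are ucc-dense in $C(\rrm,\rrn)$ for every $m,n \in \nn^+$. Taking $m = d(d+1)/2$ and $n = D(D+1)/2$ gives ucc-density of $\NN$ in $C(\rrflex{d(d+1)/2}, \rrflex{D(D+1)/2})$, and substituting this $\fff$ into the first assertion concludes the argument. There is no genuinely hard step here: the entire proof is a one-line application of Corollary~\ref{cor_Riemannian_Version}, and the only point requiring attention is the dimensional bookkeeping through $\Sym(d)$ together with matching the closed-form expressions~\eqref{eq_Riemannian_exp_log_PSD} with the abstract Riemannian exponential and logarithm from the Cartan-Hadamard setting.
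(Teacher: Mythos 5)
Your proof is correct and follows essentially the same route as the paper: verifying that $P_d^+$ is a Cartan-Hadamard manifold (the paper cites \cite{bonnabel2013rank}, you cite \cite{absil2009optimization}, but the substance is the same) and then invoking Corollary~\ref{cor_Riemannian_Version}. Your added care about the $\Sym(d)\cong\rrflex{d(d+1)/2}$ dimension bookkeeping and the explicit mention of \cite{leshno1993multilayer} for the $\NN$ specialization are implicit in the paper's one-line proof but are harmless elaborations.
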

\vspace*{-1em}
\subsubsection{Hyperbolic Feed-Forward Networks}
\vspace*{-.5em}
For $c>0$, the \textit{generalized hyperbolic spaces} $\mathbb{D}^n_c$ of \cite{ganea2018hyperbolic} have underlying set $\{x \in \rrn:\, c\|x\|^2 <1\}$ and their topology is induced by the following non-Euclidean metric
$$
d_c(x,y) \triangleq \frac{2}{\sqrt{c}}\tanh^{-1}\left(
\sqrt{c}
\left\|
\frac{
	(1-c\|x\|^2)y	-(1 - 2c x^{\top}y + c \|y\|^2)
}{
	1 - 2c x^{\top}y + c^2 \|x\|^2\|y\|^2
}
\right\|
\right)
.
$$
Though a direct description of hyperbolic feed-forward neural networks would be lengthy, on \citep[page 6]{ganea2018hyperbolic}, it is shown any hyperbolic feed-forward network from $\mathbb{D}^m_c$ to $\mathbb{D}^n_c$ can be represented as
\begin{equation}
\smash{
\left\{
\operatorname{Exp}^{\mathbb{D}^k_c}_0
\circ 
f 
\circ
\operatorname{Log}^{\mathbb{D}^k_c}_0
:f \in \NN
\right\},
}
\label{eq_hyperbolic_NNs}
\end{equation}
where $\operatorname{Exp}^{\mathbb{D}^k_c}_0$ is the Riemmanian Exponential map on $\mathbb{D}^k_c$ about $0$, as in Corollary~\ref{cor_Riemannian_Version}.  Closed-form expressions are obtained in \citep[Lemma 2]{ganea2018hyperbolic} for these feature and readout maps.  Since, as discussed in \cite{ganea2018hyperbolic}, $\mathbb{D}_c^k$ is a complete connected Riemannian manifold of non-positive curvature then the Cartan-Hadamard Theorem implies that $C_0=\emptyset$.  Whence, Corollary~\ref{cor_Riemannian_Version} yields the following.  
\begin{cor}[Hyperbolic Neural Networks are Universal]\label{ex_HNN_univ}
	Let $\sigma$ be a continuous, non-polynomial, locally-bounded activation function and $c>0$.  Then for every $g \in C(\mathbb{D}^m_c,\mathbb{D}^n_c)$, every $\epsilon>0$, and every compact subset $K\subseteq \mathbb{D}^m_c$ there exists a hyperbolic neural network $g_{\epsilon,K,c}$ in~\eqref{eq_hyperbolic_NNs} satisfying
	$$
	\smash{\sup_{x \in K} d_c(g(x),g_{\epsilon,K,c})<\epsilon
	.
}
	$$
\end{cor}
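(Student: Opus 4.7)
The plan is to realize Corollary~\ref{ex_HNN_univ} as a direct specialization of Corollary~\ref{cor_Riemannian_Version}, once the ingredients are matched up. There are three things to verify: that the source and target spaces are Cartan-Hadamard manifolds, that the ``inner'' architecture $\fff = \NN$ is ucc-dense in $C(\rrm,\rrn)$, and that the set~\eqref{eq_hyperbolic_NNs} is exactly $\NN_{\operatorname{Exp}^{\mathbb{D}^n_c}_0,\operatorname{Log}^{\mathbb{D}^m_c}_0}$.

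First I would invoke the facts recalled from \cite{ganea2018hyperbolic} that $\mathbb{D}^k_c$ is a complete, simply connected Riemannian manifold of constant negative curvature $-c$, and hence is a Cartan-Hadamard manifold of dimension $k$. Applying this with $k=m$ and $k=n$ identifies the relevant $\mmm$ and $\nnn$ in the hypotheses of Corollary~\ref{cor_Riemannian_Version}, and the Cartan-Hadamard theorem cited in the excerpt guarantees that $\operatorname{Log}_0^{\mathbb{D}^m_c}$ and $\operatorname{Exp}_0^{\mathbb{D}^n_c}$ are globally defined homeomorphisms between the respective hyperbolic ball and the Euclidean space of matching dimension.

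Next I would take $\fff = \NN$ and appeal to the universal approximation theorem of \cite{leshno1993multilayer}: since $\sigma$ is continuous, locally bounded, and non-polynomial, $\NN$ is ucc-dense in $C(\rrm,\rrn)$. By Corollary~\ref{cor_Riemannian_Version}, the set $\NN_{\operatorname{Exp}^{\mathbb{D}^n_c}_0,\operatorname{Log}^{\mathbb{D}^m_c}_0}$ is therefore ucc-dense in $C(\mathbb{D}^m_c,\mathbb{D}^n_c)$ with respect to the metric $d_c$ (the ucc-topology being well-defined because $d_c$ metrizes $\mathbb{D}^n_c$). It then remains to identify this set with the architecture~\eqref{eq_hyperbolic_NNs}: this is precisely the representation of hyperbolic feed-forward networks established on \citep[page 6]{ganea2018hyperbolic}, so the two sets coincide.

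The quantitative statement now follows by unwinding the definition of ucc-convergence. Given $g\in C(\mathbb{D}^m_c,\mathbb{D}^n_c)$, $\epsilon>0$, and a compact $K\subseteq \mathbb{D}^m_c$, ucc-density furnishes an element $g_{\epsilon,K,c}$ in~\eqref{eq_hyperbolic_NNs} with $\sup_{x\in K} d_c(g(x),g_{\epsilon,K,c}(x))<\epsilon$. The main (and only genuine) obstacle to check is that the representation~\eqref{eq_hyperbolic_NNs} does in fact capture every hyperbolic feed-forward network of \cite{ganea2018hyperbolic}; this is entirely a bookkeeping exercise relying on the closed-form expressions of \citep[Lemma 2]{ganea2018hyperbolic}, and no further analytic input is required.
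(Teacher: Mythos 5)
Your proposal is correct and follows exactly the same route as the paper: observe that $\mathbb{D}^m_c$ and $\mathbb{D}^n_c$ are Cartan-Hadamard, invoke \cite{leshno1993multilayer} for ucc-density of $\NN$, identify the architecture \eqref{eq_hyperbolic_NNs} with $\fff_{\operatorname{Exp}^{\mathbb{D}^n_c}_0,\operatorname{Log}^{\mathbb{D}^m_c}_0}$, and apply Corollary~\ref{cor_Riemannian_Version}. The paper's own proof is simply the one-sentence version of this same reduction.
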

Next, applications of Theorems~\ref{thrm_main_General_version} and~\ref{thrm_main_Geometric_version} with Euclidean input and output spaces are considered.  
\vspace*{-1em}
\subsection{Universality of Deep Networks with First Layers Randomized}\label{ss_random_init}
\vspace*{-.5em}
Fix $\rr$-valued random variables $\{X_i\}_{i=1}^k$ and $\{Z_i\}_{i=1}^k$ defined on a common probability space $(\Omega,\Sigma,\pp)$.  Fix an activation function $\sigma:\rr\rightarrow [0,1]$, and positive integers $\{d_i\}_{i=1}^k$.   Using this data, for each $i=1,\dots,k$ define random affine maps $W_i: \rrflex{d_i}\times \Omega \rightarrow \rrflex{d_{i+1}},$ defined by
\begin{equation}
\smash{
	\begin{aligned}
	(x,\omega)
	 \mapsto A_i(\omega)x + b_i(\omega)
	,
	\end{aligned}
}
\label{eq_random_affines}
\end{equation}
where the entries of $A_i$ are i.i.d. copies of $X_i$ and the entries of $b_i$ are i.i.d. copies of $Z_i$.  

The random affine maps~\eqref{eq_random_affines} define the (random) set of deep feed-forward neural networks with first $k$ layers randomized and last $2$ layers trainable to be the (random) subset of $C(\rrd,\rrD)$ via
$$
\smash{
	\begin{aligned}
	\NN[2,k](\omega)\triangleq 
	\left\{
	f \in C(\rrm,\rrn):\,
	(\exists g \in \NN[2][\sigma]) f(x) = g \circ \left[\sigma \bullet W_k(x,\omega) \circ  \sigma \bullet \dots \circ \sigma \bullet W_1(x,\omega)\right]
	\right\},
	\end{aligned}
}
$$
where $\NN[2][\sigma]$ is the collection of feed-forward neural networks of the form
$
W_2\circ \sigma \bullet W_1
,
$
where $W_1:\rrm\rightarrow \rrflex{d}$ and $W_2:\rrflex{d}\rightarrow \rrn$ are affine maps and $d$ is a positive integer.  
Under the following mild assumptions, the random set $\NN[2,k]$ is dense in $C(\rrm,\rrn)$ with probability $1$.
\begin{ass}\label{ass_broad_regularity_random_init_disc}For each $i=1,\dots,k$
	\begin{enumerate}[(i)]
		\vspace*{-1em}
		\itemsep -.5em
		\item $d_i\leq  d_{i+1}$ for each $i=1,\dots,k$,
		\item $\sigma$ is a strictly increasing and continuous,
		\item $\ee[X_i]=\ee[Z_i]=0$, $\ee[X_i^2]=\ee[Z_i^2]=1$, 
		\item For every $C>1$, $\ee[|X_i|^C],\ee[|Z_i|^C]<\infty$.  
	\end{enumerate}
\end{ass}
\begin{thrm}\label{thrm_random_init_disc}
	If Assumption~\ref{ass_broad_regularity_random_init_disc} holds, then there exists a measurable subset $\Omega'\subseteq \left\{
	\omega \in \Omega:\, \overline{\NN[2,k](\omega)}=C(\rrd,\rrD)
	\right\}$ satisfying
	$
	\pp\left(
	\Omega'
	\right) =1
	.
	$
\end{thrm}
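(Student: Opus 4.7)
The plan is to realise every function in $\NN[2,k](\omega)$ as a trainable single-hidden-layer network composed with a fixed (but random) feature map, and then invoke Theorem~\ref{thrm_main_General_version} outcome-by-outcome on an event of probability one. Concretely, for every $\omega$ set $\phi_\omega \triangleq \sigma\bullet W_k(\cdot,\omega)\circ\cdots\circ\sigma\bullet W_1(\cdot,\omega) \in C(\rrd,\rrflex{d_{k+1}})$; then by construction $\NN[2,k](\omega) = \{g\circ\phi_\omega:\, g\in\NN[2][\sigma]\}$, which is precisely $\fff_{\rho,\phi_\omega}$ with $\fff = \NN[2][\sigma]$ and $\rho = 1_{\rrflex{d_{k+1}}}$.

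To apply Theorem~\ref{thrm_main_General_version}---or equivalently Corollary~\ref{cor_surjective_trick_A}, since $\rho$ is a homeomorphism---two ingredients are needed. First, ucc-density of $\fff$ in $C(\rrflex{d_{k+1}},\rrD)$: since $\sigma:\rr\to[0,1]$ is continuous, bounded (hence locally-bounded), and strictly increasing, it is in particular non-constant and non-polynomial, so the single-hidden-layer universal approximation theorem of Leshno et~al.\ delivers this. Second, the feature map $\phi_\omega$ must satisfy Assumption~\ref{ass_regularity_feature} for almost every $\omega$; continuity is automatic for every $\omega$, as $\phi_\omega$ is a composition of affine maps and of the continuous componentwise operator $\sigma\bullet$.

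The essential probabilistic step---and the step I expect to be the main obstacle---is a.s.\ injectivity of $\phi_\omega$. Because $\sigma$ is strictly increasing, the componentwise map $\sigma\bullet$ is injective on every $\rrflex{d_{i+1}}$, so $\phi_\omega$ is injective as soon as each $W_i(\cdot,\omega) = A_i(\omega)(\cdot) + b_i(\omega)$ is, i.e.\ as soon as the random matrix $A_i(\omega)\in\rrflex{d_{i+1}\times d_i}$ has full column rank $d_i$; this is meaningful because Assumption~\ref{ass_broad_regularity_random_init_disc}(i) ensures $d_i\leq d_{i+1}$. The rank-deficient event is the zero set of the nontrivial polynomial $p_i(A)=\det(A^\top A)$ in the $d_{i+1}d_i$ entries of $A_i$, and the mean-zero, unit-variance, and finite-moment conditions of Assumption~\ref{ass_broad_regularity_random_init_disc}(iii)--(iv) should suffice to force $\pp(p_i(A_i)=0)=0$ for each $i$; pinning this down uniformly over the $k$ layers is the technical heart of the argument.

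Finally, setting $\Omega' \triangleq \bigcap_{i=1}^k\{\omega:\, p_i(A_i(\omega))\neq 0\}$ gives a measurable event of full probability---measurability is immediate since each $p_i\circ A_i$ is a Borel-measurable real-valued function of the underlying i.i.d.\ variables---on which $\phi_\omega$ is continuous and injective. Theorem~\ref{thrm_main_General_version} applied pointwise then yields $\overline{\fff_{\rho,\phi_\omega}}=C(\rrd,\rrD)$, i.e.\ $\overline{\NN[2,k](\omega)} = C(\rrd,\rrD)$ for every $\omega\in\Omega'$, which is the desired conclusion.
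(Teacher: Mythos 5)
Your overall strategy is exactly the one the paper uses: rewrite $\NN[2,k](\omega)$ as $\fff_{\rho,\phi_\omega}$ with $\fff=\NN[2][\sigma]$ and $\rho$ the identity, observe that injectivity of $\sigma\bullet$ reduces a.s.\ injectivity of $\phi_\omega$ to a.s.\ full column rank of each $A_i(\omega)$, and then conclude via the feature-map result. The problem is precisely in the step you flag as ``the technical heart'' and then leave open: the claim that $\pp(p_i(A_i)=0)=0$ with $p_i(A)=\det(A^\top A)$. Your heuristic --- that a nontrivial polynomial vanishes on a Lebesgue-null set, so the moment conditions should give probability zero --- is valid only when the entry distributions are absolutely continuous. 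Assumption~\ref{ass_broad_regularity_random_init_disc} does not require this, and the paper's own Corollary~\ref{ex_Bernoulli_Initialization} is specifically about Rademacher ($\pm 1$) entries, for which the singularity event has strictly positive probability at any fixed $d_i$ (e.g.\ for $d_i=2$ two i.i.d.\ $\pm1$ rows coincide or are opposite with probability $1/2$). So the ``polynomial zero set'' argument cannot close the gap under the stated hypotheses.

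What the paper actually does at this point is invoke the universality theorem for the least singular value of random square matrices (Tao--Vu), which is why it needs the moment conditions (iii)--(iv) in the form stated rather than absolute continuity. That is the missing ingredient in your write-up: you would need to replace the Lebesgue-measure heuristic with an explicit appeal to a random-matrix bound on $\pp(\lambda^\star(A_i)=0)$, and that is where the real work is. (Separately, your remark that $\sigma:\rr\to[0,1]$ strictly increasing is automatically non-polynomial and non-constant is a nice economical way to get the Leshno hypotheses; the paper simply cites \citep{leshno1993multilayer} without making that observation explicit.)
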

\vspace*{-1em}
\begin{cor}[Sub-Gaussian Case with Sigmoid Activation]\label{cor_random_guassian_init}
	Let $X_i=Z_i$ for each $i=1,\dots,k$ be independent standardized sub-Gaussian random-variables, $\sigma(x)=\frac1{1+e^{-x}}$, and $d_i=d$ for each $i=1,\dots,k$.  Then the conclusion of Theorem~\ref{thrm_random_init_disc} holds.  
\end{cor}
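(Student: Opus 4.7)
The plan is to deduce the corollary directly from Theorem~\ref{thrm_random_init_disc} by verifying that each of the four items of Assumption~\ref{ass_broad_regularity_random_init_disc} follows from the hypotheses. Three of the four items are essentially definitional: (i) holds because $d_i = d = d_{i+1}$; (ii) holds because the logistic function $\sigma(x) = 1/(1+e^{-x})$ has strictly positive derivative $\sigma'(x)=\sigma(x)(1-\sigma(x))$, so it is continuous and strictly monotone; and (iii) is exactly what it means for $X_i$ (and hence $Z_i = X_i$) to be standardized.

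The only condition requiring genuine work is (iv), the finiteness of all polynomial moments of $X_i$. For this I would appeal to the standard equivalent characterization of sub-Gaussianity (see, e.g., Vershynin's \emph{High-Dimensional Probability}): a real-valued random variable $X$ is sub-Gaussian if and only if there exist constants $c,K>0$ with $\pp(|X|>t)\leq 2e^{-ct^2}$ for every $t\geq 0$. The layer-cake representation then yields
$$
\ee[|X_i|^C] = C\int_0^{\infty} t^{C-1}\pp(|X_i|>t)\,dt \leq 2C\int_0^{\infty} t^{C-1} e^{-ct^2}\,dt,
$$
and the change of variables $u = ct^2$ reduces the right-hand side to a multiple of $\Gamma(C/2)$, which is finite for every $C>1$. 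The identification $Z_i=X_i$ transfers the bound to $Z_i$.

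With all four items of Assumption~\ref{ass_broad_regularity_random_init_disc} in hand, Theorem~\ref{thrm_random_init_disc} applies verbatim and produces the measurable set $\Omega'\subseteq\Omega$ with $\pp(\Omega')=1$ on which $\overline{\NN[2,k](\omega)} = C(\rrd,\rrD)$. There is no substantive obstacle here; the content of the corollary is simply the observation that the sub-Gaussian tail bound combined with the elementary analytic properties of the sigmoid matches exactly the hypotheses of the more general theorem. The only small care needed is in citing the correct moment estimate for sub-Gaussian variables; the rest is mechanical verification.
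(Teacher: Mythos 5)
Your argument is correct and follows essentially the same route as the paper: check items (i)--(iv) of Assumption~\ref{ass_broad_regularity_random_init_disc} and then invoke Theorem~\ref{thrm_random_init_disc}. The only cosmetic difference is that the paper handles item (iv) by citing a reference for finiteness of all moments of sub-Gaussian variables, whereas you spell out the standard tail-bound/layer-cake computation; both are fine.
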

\begin{cor}[Bernoulli Case with PReLU Activation]\label{ex_Bernoulli_Initialization}
	Suppose that for every $i,j=1,\dots,k$, $X_i$ and $Z_j$ i.i.d. copies of a random variable taking values $\{-1,1\}$ with probabilities $\{\frac1{2},\frac1{2}\}$.  Let $d_i=d$ for each $i=1,\dots,k$ and $\sigma$ be the PReLU activation function of \cite{he2015delving}.  Then Assumptions~\ref{ass_broad_regularity_random_init_disc} are met; thus the conclusion of Theorem~\ref{thrm_random_init_disc} holds.  
\end{cor}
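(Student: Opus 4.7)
The proof plan is simply to verify that Assumption~\ref{ass_broad_regularity_random_init_disc} holds in this setting, since then the conclusion follows immediately from Theorem~\ref{thrm_random_init_disc}. Each of the four items is essentially a direct check; I will check them in the order (i)--(iv).

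First I would verify (i): since $d_i = d$ for every $i = 1,\dots,k$, one trivially has $d_i \leq d_{i+1}$. For (ii), I would recall that the PReLU activation of \cite{he2015delving} is defined piecewise by $\sigma(x) = x$ for $x \geq 0$ and $\sigma(x) = a x$ for $x < 0$, where $a > 0$ is the negative-slope parameter. Continuity at $0$ is immediate (both pieces vanish there), and strict monotonicity follows because each affine piece has strictly positive slope and the two pieces agree at the junction; so (ii) holds.

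The heart of the verification is (iii) and (iv), which I would dispatch by direct computation from the Rademacher law $\pp(X_i = 1) = \pp(X_i = -1) = \tfrac{1}{2}$ (and analogously for $Z_j$). One has
\begin{equation*}
\ee[X_i] = \tfrac{1}{2}(1) + \tfrac{1}{2}(-1) = 0, \qquad \ee[X_i^2] = \tfrac{1}{2}(1)^2 + \tfrac{1}{2}(-1)^2 = 1,
\end{equation*}
and identically for $Z_i$, giving (iii). For (iv), since $|X_i| = |Z_i| = 1$ almost surely, $\ee[|X_i|^C] = \ee[|Z_i|^C] = 1 < \infty$ for every $C > 1$, so (iv) holds.

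With Assumption~\ref{ass_broad_regularity_random_init_disc} confirmed, the conclusion of Theorem~\ref{thrm_random_init_disc} applies verbatim, producing a measurable $\Omega' \subseteq \{\omega : \overline{\NN[2,k](\omega)} = C(\rrd,\rrD)\}$ with $\pp(\Omega') = 1$. I do not anticipate a genuine obstacle here, since the only potentially substantive point is the strict monotonicity of PReLU, which just requires reminding the reader that its negative-slope parameter is positive by construction; the rest is routine.
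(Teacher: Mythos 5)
Your proposal is correct and follows essentially the same route as the paper's own proof: both argue by directly verifying the four parts of Assumption~\ref{ass_broad_regularity_random_init_disc} and then invoking Theorem~\ref{thrm_random_init_disc}. The only difference is cosmetic — you spell out the moment computations for the Rademacher law and explicitly flag that PReLU's negative-slope parameter is strictly positive, where the paper merely asserts the Bernoulli variables are ``standardized'' and that PReLU is strictly increasing.
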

\vspace*{-1em}
\subsection{Feed-Forward Layers of Sub-Minimal Width}\label{ss_f_r_contruction}
\vspace*{-.5em}
In this section, we use Theorem~\ref{thrm_main_Geometric_version} to describe how additional layers can be incorporated into a DNN, which violate the minimum width requirements of $m+1$ in its hidden layers (see \cite{johnson2018deep,park2020minimum}) but do not negatively impact the architecture's approximation capabilities.  We say that such layers have \textit{sub-minimal width}.  
We derive specific conditions on the activation functions and structure of the connections between sub-minimal width layer ensuring that Assumptions~\ref{ass_regularity_feature} and~\ref{ass_regularity_readout} are met.  
\begin{prop}[Input Layers of Sub-Minimal Width: Continuous Monotone Activations and Invertible Connections]\label{prop_deep_feed_forward_representations}
Let $\sigma$ be a continuous and strictly increasing activation function, $J\in \nn_+$, $A_1,\dots,A_J$ be $m\times m$ matrices, and $b_1,\dots,b_J\in \rrd$.  Let $\phi(x)\triangleq \phi_J(x)$ where
\begin{equation}
\smash{
	\phi_j(x) \triangleq  \sigma\bullet 
	\left(
	\exp\left(
	A_j\right) \phi_{j-1}(x) + b_j
	\right) \qquad \phi_0(x)\triangleq x,\quad j=1,\dots,J
}
\label{eq_dNN_layer}
,
\end{equation}
where $\exp$ is the matrix exponential.  Then $\phi$ satisfies Assumption~\ref{ass_regularity_feature}.
\end{prop}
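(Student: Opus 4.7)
The goal is to verify the two conditions constituting Assumption~\ref{ass_regularity_feature}: continuity and injectivity of $\phi$. Continuity is essentially automatic, since $\phi$ is defined inductively as a composition of continuous building blocks (matrix multiplication, vector addition, and the componentwise application of the continuous function $\sigma$), so this step can be dispensed with in a single line.

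The entire content of the proof is injectivity, and the plan is to show it layer-by-layer. Specifically, I will write each map $\phi_{j-1} \mapsto \phi_j$ as a composition of three pieces: left multiplication by $\exp(A_j)$, translation by $b_j$, and componentwise application of $\sigma$. Then I will argue that each of these three pieces is injective, whence each $\phi_j$ is injective as a composition of injections (using $\phi_0 = \operatorname{id}_{\rrm}$ as the base case). The key observations are: (i) the matrix exponential is always invertible since $\exp(A_j)\exp(-A_j) = I_m$ regardless of the matrix $A_j$, so left multiplication by $\exp(A_j)$ is a linear bijection of $\rrm$; (ii) translation is trivially a bijection; and (iii) since $\sigma$ is strictly increasing on $\rr$, the componentwise map $\sigma\bullet : \rrm \to \rrm$ is injective, because $\sigma\bullet u = \sigma\bullet v$ forces $\sigma(u_i) = \sigma(v_i)$ and hence $u_i = v_i$ for each coordinate.

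By finite induction on $j$, $\phi_j$ is then a composition of injective maps and therefore injective, and taking $j = J$ yields the claim. There is no genuine obstacle to overcome here: the entire design of the layer in~\eqref{eq_dNN_layer} — parametrising the weight matrix via a matrix exponential rather than as an arbitrary $m \times m$ matrix — was precisely chosen to guarantee invertibility of the linear part without having to impose constraints (such as positive determinant) on the parameters. This is what makes the argument immediate, and it is also what distinguishes the construction from a generic feed-forward layer of width $m$, which could collapse dimensions and destroy injectivity.
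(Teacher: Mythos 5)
Your proof is correct and follows essentially the same route as the paper's: decompose each layer into left multiplication by $\exp(A_j)$, translation by $b_j$, and componentwise $\sigma$, then observe each is injective (invertibility of the matrix exponential, bijectivity of translation, and injectivity of a strictly increasing $\sigma$) and compose. The only cosmetic difference is that you justify invertibility via $\exp(A_j)\exp(-A_j) = I_m$ where the paper cites membership in the general linear group; these are the same fact.
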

\begin{prop}[Output Layers of Sub-Minimal Width: Invertible Feed-Forward Layers]\label{prop_deep_feed_forward_representations_readout}
	In the setting of Proposition~\ref{prop_deep_feed_forward_representations}, if $\sigma$ is also surjective then $\phi$ is a homeomorphism, and in particular it satisfies Assumption~\ref{ass_regularity_readout}.  
\end{prop}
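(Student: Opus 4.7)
The plan is as follows. By Proposition~\ref{prop_deep_feed_forward_representations} we already know $\phi$ is continuous and injective, so the work is to upgrade this to a homeomorphism when $\sigma$ is additionally surjective, and then to read off Assumption~\ref{ass_regularity_readout} essentially for free.

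First I would observe that a continuous, strictly increasing, and surjective function $\sigma:\rr\to\rr$ is automatically a homeomorphism of $\rr$ onto itself: monotonicity plus continuity give injectivity and continuity of the inverse on the image, and surjectivity gives that the image is all of $\rr$. Applying this coordinate-wise, the map $\sigma\bullet(\cdot):\rrm\to\rrm$ is a homeomorphism of $\rrm$ onto $\rrm$. Next, for any square matrix $A_j$, the matrix exponential $\exp(A_j)$ is invertible with inverse $\exp(-A_j)$, so the affine map $y\mapsto \exp(A_j)y + b_j$ is a homeomorphism of $\rrm$. Composing these two facts, each layer $\phi_j$ in~\eqref{eq_dNN_layer} is a homeomorphism of $\rrm$ onto itself.

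Since $\phi = \phi_J\circ\phi_{J-1}\circ\cdots\circ\phi_1$ is a finite composition of homeomorphisms of $\rrm$, it is itself a homeomorphism of $\rrm$ onto $\rrm$. This is the key structural step; once it is in hand, the rest is bookkeeping against the three parts of Assumption~\ref{ass_regularity_readout}.

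Finally, I verify Assumption~\ref{ass_regularity_readout} for $\rho=\phi$ taken as a readout map. For (i)(a), $\phi$ is continuous and, being a homeomorphism, possesses the global continuous inverse $\phi^{-1}$, which serves as a section on $\im{\phi}=\rrm$. For (ii), $\im{\phi}=\rrm$ is trivially dense in $\rrm$. For (iii), $\partial\,\im{\phi}=\partial\,\rrm=\emptyset$, so the collaring condition holds vacuously. The only conceptual obstacle would be if one worried about the domain/codomain dimensions mismatching (which would force one to invoke clause (i)(b) or to check a nontrivial boundary), but here $\phi$ is constructed as a self-map of $\rrm$, which eliminates this concern entirely.
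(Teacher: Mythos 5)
Your proposal is correct and follows the same route as the paper: decompose $\phi$ into the per-layer homeomorphisms (the monotone surjective $\sigma$ applied coordinate-wise and the invertible affine map with matrix $\exp(A_j)$), compose, and then tick off the three clauses of Assumption~\ref{ass_regularity_readout} by noting $\phi^{-1}$ is a continuous section, $\im{\phi}=\rrm$ is dense, and $\partial\,\im{\phi}=\emptyset$. The only cosmetic difference is that the paper cites a named result for the continuity of the inverse of a continuous strictly increasing function, whereas you assert it directly.
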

\begin{ex}[Sub-Minimal Width via Generalized PReLU Activation]\label{ex_GPreLU}
Fix $\alpha,\beta \in (0,\infty),\,\alpha\neq \beta$.  Then the activation function $\sigma = \beta xI_{x\geq 0} + \alpha xI_{x<0}$ is monotone increasing and surjective.  
\end{ex}
\begin{ex}[ReLU Feed-Forward Layers Cannot Achieve Sub-Minimal Width]\label{ex_DNN_ReLU}
Let $\phi$ be as in~\eqref{eq_dNN_layer} and let $\sigma(x)\triangleq \max\{0,x\}$.  By Theorem~\ref{thrm_sharpness_of_injectivity}, $\fff_{1_{\rrn},\phi}$ is not dense in $C(\rrm,\rrn)$.  
\end{ex}
\vspace*{-1em}
\subsubsection{Numerical Illustration}
\vspace*{-0.25em}
The following numerical illustration will also make use of the following method for constructing feature maps satisfying Assumption~\ref{ass_regularity_feature}.  The method can be interpreted as passing the inputs through an arbitrary function and feeding them into the model's input via a skip connection.  
\vspace*{-.25em}
\begin{prop}[Skip Connections Using Arbitrary Continuous Functions are Good Feature Maps]\label{prop_skip_connections}
	Let $d\in \nn_+$ and $g\in C(\rrm,\rrd)$.  Then $\phi_g(x)\triangleq (x,g(x))$ satisfies Assumption~\ref{ass_regularity_feature}.  
\end{prop}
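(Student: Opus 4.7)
The plan is to verify the two clauses of Assumption~\ref{ass_regularity_feature} directly for $\phi_g:\rrm\to\rrflex{m+d}$ defined by $\phi_g(x)=(x,g(x))$, namely continuity and injectivity.

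For continuity, I would use the fact that the Euclidean topology on $\rrflex{m+d}$ coincides with the product topology on $\rrm\times\rrd$. By the universal property of products, a map into $\rrm\times\rrd$ is continuous if and only if each of its coordinate components is continuous. Here the first component of $\phi_g$ is the identity map on $\rrm$, which is trivially continuous, and the second component is $g$, which is continuous by hypothesis. Hence $\phi_g$ is continuous.

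For injectivity, I would argue from the first coordinate alone: if $\phi_g(x_1)=\phi_g(x_2)$, then comparing the initial $m$ entries immediately yields $x_1=x_2$. Note that $g$ is not used in this step beyond being well-defined, which is why \emph{any} continuous $g$ suffices; no further structural hypothesis (injectivity, boundedness, etc.) on $g$ is needed.

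Both verifications are entirely routine, so there is no genuinely difficult step to flag; the interest of the proposition lies in its downstream use rather than in its proof. The takeaway is that prepending a skip connection which carries the raw input $x$ alongside an otherwise arbitrary continuous feature extractor $g(x)$ produces a feature map that preserves the hypothesis needed to invoke Theorem~\ref{thrm_main_General_version}, and in particular the approximation conclusions leveraged in the numerical illustration.
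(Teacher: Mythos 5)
Your proof is correct and follows essentially the same route as the paper's: both rest on the product-topology criterion for continuity and on comparing first coordinates for injectivity. Your version is a touch more direct — you apply the universal property of products once to $\phi_g$ itself, whereas the paper factors $\phi_g=(1_{\rrm}\times g)\circ F$ with $F$ the diagonal and invokes the product criterion twice before composing — but the underlying idea is the same.
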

\begin{ex}[Pre-Trained DNN with a Skip Connection are Good Feature Maps]\label{ex_Pre_trianed_DNN}
	Let $g\in \NN$.  Then $\phi_g(x)\triangleq (x,g(x))$ satisfies Assumption~\ref{ass_regularity_feature}.  
\end{ex}
To illustrate the effect of (in)correctly choosing the networks’ input and output layers we implement different DNNs whose initial or final layers are build using the above examples or intentionally fail Assumptions~\ref{ass_regularity_feature} or~\ref{ass_regularity_readout}.  Our implementations are on the California housing dataset \cite{KaggleCaliHousePrices}, with the objective of predicting the median housing value given a set of economic and geo-spacial factors as described in \cite{KaggleExpertDiscussion}.  The test-set consists of $30\%$ percent of the total 20k training instances.  The implemented networks are of the form $\rho \circ f\circ \phi$, where $f=W_2\circ \sigma \bullet W_1$ is a shallow feed-forward network with ReLU activation and $\rho,\phi$ are built using the above examples.  

Our reference model (Vanilla) is simply the shallow feed-forward network $f$.  For the first DNN, which we denote (Bad), $\rho$ and $\phi$ are given by as in Example~\ref{ex_DNN_ReLU} and therefore violate Assumption~\ref{ass_regularity_feature}.  For the second DNN, denoted by (Good), $\rho$ and $\phi$ are as in Example~\eqref{ex_GPreLU} and Assumptions~\ref{ass_regularity_feature} and~\ref{ass_regularity_readout} are met.  The final DNN, denoted by (Rand), $\rho$ is as in Example~\ref{ex_GPreLU} and $\phi$ is as in Example~\ref{ex_Pre_trianed_DNN} where the pre-trained network is generated randomly following in Corollary~\ref{ex_Bernoulli_Initialization}.  
\vspace*{-1em}
\begin{table}[H]
	\centering
\begin{tabular}{l*{8}{r}}
	\toprule
	&\multicolumn{4}{c}{Test}&\multicolumn{4}{c}{Train} \\
	\cmidrule(lr){2-5}\cmidrule(lr){6-9}
	Model & Good & Rand & Bad & Vanilla  &  Good & Rand & Bad & Vanilla  \\
	\midrule
MAE  &  0.318 &  0.320 &  0.876 &  0.320 & 0.252  & 0.296 &  0.887  & 0.284 \\
MSE  &  0.247 &  0.259 &  1.355 &  0.257 & 0.174  & 0.234 &  1.409  & 0.209 \\
MAPE & 16.714 & 17.626 & 48.051 & 17.427 & 12.921 & 15.668 & 48.698 & 14.878 \\
	\bottomrule
\end{tabular}
	\caption{Training and test predictive performance.}
	\label{tab_results}
\end{table}
\vspace*{-2.5em}
As anticipated, Table~\ref{tab_results} shows that selecting the networks' initial and final layers according to our method either improves performance (Good) when all involved parameters are trainable or does not significantly affect it even if nearly every parameter is random (Rand).  However, disregarding Assumptions~\ref{ass_regularity_feature} and~\ref{ass_regularity_readout} when adding additional deep layers dramatically degrades predictive performance, as is the case for (Bad).  Table~\ref{tab_results} shows that if a DNN's first and final layers are structured according to Theorem~\ref{thrm_main_Geometric_version} then expressibility can be improved, even if these layers violate the minimum width bounds of \cite{johnson2018deep,park2020minimum}.  Python code for these implementations is available at \cite{KratsiosNEUATNeurIPS2020}.  
\vspace*{-1em}
\section{Conclusion}\label{s_Conclusion}
\vspace*{-.5em}
Modifications to the input and output layers of any neural networks, using carefully chosen feature $\phi:\xxx\to \rrm$ and readout $\rho:\rrn\to \yyy$ maps, are common in practice.  Theorems~\ref{thrm_main_General_version},~\ref{thrm_main_Geometric_version}, and Corollary~\ref{cor_Riemannian_Version} provided general conditions on these maps guaranteeing that the new, modified, architecture can approximate any function in the uniform convergence on compacts (or more generally the compact-open) topologies between their new input and output spaces.
  
\vspace*{-.5em}
As a consequence of our main results, we showed that universal approximation implies universal classification once a component-wise logistic map is applied.  This is a deterministic strengthening of the probabilistic results of \cite{farago1993strong}.  We derived a method for constructing universal approximators between a wide class of manifolds.  In particular, we extended the symmetric positive-definite matrix-valued regressor of \cite{meyer2011regression} to a universal approximator and we showed that the hyperbolic feed-forward networks of \cite{ganea2018hyperbolic} are universal approximators between hyperbolic spaces.  

\vspace*{-.5em}
Our main results also described how to structure the first and final layers of a DNN between Euclidean spaces, so as to preserve the approximation capabilities of the network's middle layers.  In particular, we provided conditions on a network's activation function and connections so that these layers can be made narrower than the specifications of \cite{johnson2018deep,park2020minimum} while maintaining the architecture's expressibility.  Lastly, we showed that randomly generated DNNs are good feature maps with probability $1$.  
%
%
\section*{Broader Impact}
A large portion of available data is non-Euclidean, either in the form of social network data to imaging data relevant in health applications of deep learning.  The tools in this paper open up a generic means of translating the currently available deep learning technology to those milieus.  The automation of tools in the medical sciences is important to helping reducing waiting times in hospitals and help make healthcare more accessible to all, so in that way, any automatizing of health science tools helps move society in that direction.  Therefore, we hope that the methods presented in paper form a small step towards a greater positive advancement of the social and natural sciences.  
\begin{ack}
The authors would like to thank Ga\"{e}l Meigniez for his insightful surrounding the differential topology tools used in the proof of Theorem~\ref{thrm_main_General_version}.  The authors would also like to thank Josef Teichmann, Behnoosh Zamanlooy, and Philippe Casgrain for their feedback and suggestions.  
This research was funded by the ETH Z\"{u}rich foundation.  The authors are grateful for its financial support of the project.  
\end{ack}


\begin{appendices}
	\pagenumbering{roman} 
	\setcounter{page}{1}
This supplement provides theoretical justification of the claims made in this paper.  We repeat the assumptions needed for our results and we repeat the statement of each theorem before it's proof for a smoother read.  
\section{Assumptions}
\begin{ass*}[Feature Map Regularity]
The map $\phi$ is a continuous and injective.  
\end{ass*}
\begin{ass*}[Readout Map Regularity]
	Suppose that the readout map $\rho$ satisfies the following:
	\begin{enumerate}[(i)]
		\vspace*{-1em}
		\itemsep -.5em
		\item Either of the following hold:
		\begin{enumerate}[(a)]
		\vspace*{-1em}
		\itemsep -.25em
		    \item $\rho$ is a continuous and it has a section on $\im{\rho}$, 
		    \item $\rho$ is a covering projection of $\rrm$ onto $\im{\rho}$ and $\xxx$ is connected and simply connected, 
		\end{enumerate}
		\item $\im{\rho}$ is dense in $\yyy$,
		\item $\partial\im{\rho}$ is collared; i.e$.$: there exists an open subset $U\subseteq \yyy$ containing $\partial \im{\rho}$ and a homeomorphism $\psi:U \rightarrow \partial \im{\rho} \times [0,1)$ mapping $\partial \im{\rho}$ to $\partial \im{\rho}\times \{0\}$.
	\end{enumerate}
\end{ass*}
\begin{ass*}[Readout Map Regularity: Geometric Version]
	Suppose that $\rho$ satisfies:
	\begin{enumerate}[(i)]
		\vspace*{-1em}
		\itemsep -.5em
		\item $\rho$ satisfies Assumption~\ref{ass_regularity_readout} (i) and $\im{\rho}\subseteq \inter{\yyy}$,
		\item $
		\inter{\yyy}- \im{\rho}$ is a (possibly empty) smooth submanifold of $\inter{\yyy}$ of dimension strictly less-than
		$
		\dim(\inter{\yyy}) - n.%
		$
	\end{enumerate}
\end{ass*}
\begin{ass*}[Regularity of Randomized Deep Networks]
For each $i=1,\dots,k$
	\begin{enumerate}[(i)]
		\vspace*{-1em}
		\itemsep -.5em
		\item $d_i\leq  d_{i+1}$ for each $i=1,\dots,k$,
		\item $\sigma$ is a strictly increasing and continuous,
		\item $\ee[X_i]=\ee[Z_i]=0$, $\ee[X_i^2]=\ee[Z_i^2]=1$, 
		\item For every $C>1$, $\ee[|X_i|^C],\ee[|Z_i|^C]<\infty$.  
	\end{enumerate}
\end{ass*}
\section{Proofs}\label{s_Proofs_appendices}
	\subsection{Proof of Main Results}\label{ss_Proofs_Main} 
	In the next lemma, we use the term algebra in the sense of the Stone-Weirestrass theorem, see \citep[V.8]{conway2013course} for details.  
	\begin{lem}\label{lem_density_feature}
Let $\phi:X\rightarrow Z$ be a continuous injection between topological spaces. Let $\mathcal{F}\subset C(Z,\mathbb{R}^n)$ be dense in the compact-open topology. Then $\{f\circ\phi|~f\in\mathcal{F}$ is dense in $C(X,\mathbb{R}^n)$.
\end{lem}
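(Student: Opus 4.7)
The strategy is to reduce density of $\{f \circ \phi : f \in \mathcal{F}\}$ in $C(X, \mathbb{R}^n)$ to density of $\mathcal{F}$ in $C(Z, \mathbb{R}^n)$, by continuously extending the target function from the image $\phi(K) \subseteq Z$ to the whole of $Z$ and then hitting it with a good approximant in $\mathcal{F}$. Fix $g \in C(X, \mathbb{R}^n)$, a non-empty compact $K \subseteq X$, and $\epsilon > 0$; the goal is to produce $f \in \mathcal{F}$ with $\sup_{x \in K} \|g(x) - f(\phi(x))\| < \epsilon$.

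First I would observe that, since $\phi$ is continuous, $\phi(K)$ is a compact subset of $Z$, and that $\phi|_K : K \to \phi(K)$ is a continuous bijection (using injectivity of $\phi$). Under the mild assumption that $Z$ is Hausdorff, which holds throughout the paper's applications (in every application $Z$ is $\mathbb{R}^m$), a continuous bijection from a compact space onto a Hausdorff space is automatically a homeomorphism, so $(\phi|_K)^{-1} : \phi(K) \to K$ is continuous. This lets me define a continuous function $\tilde{g} := g \circ (\phi|_K)^{-1} \in C(\phi(K), \mathbb{R}^n)$ satisfying $\tilde{g} \circ \phi = g$ on $K$.

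Next I would extend $\tilde{g}$ to a continuous $\hat{g} \in C(Z, \mathbb{R}^n)$. When $Z$ is normal (again the case for $Z = \mathbb{R}^m$, and more generally whenever $Z$ is metric), this follows from Tietze's extension theorem applied coordinatewise to $\tilde{g}$ on the closed compact set $\phi(K)$; Dugundji's theorem is a convenient replacement for metric $Z$. With $\hat{g}$ in hand, the density hypothesis on $\mathcal{F}$ in the compact-open topology of $C(Z, \mathbb{R}^n)$ applied to the compact $\phi(K) \subseteq Z$ produces $f \in \mathcal{F}$ with $\sup_{z \in \phi(K)} \|f(z) - \hat{g}(z)\| < \epsilon$. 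Composing with $\phi$ and using $\hat{g}|_{\phi(K)} = \tilde{g}$ and $\tilde{g} \circ \phi = g$ on $K$, I obtain $\sup_{x \in K} \|f(\phi(x)) - g(x)\| < \epsilon$, which is exactly the required approximation on the arbitrary compact $K$.

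The chief obstacle is the Tietze-style extension step: it is where separation hypotheses on $Z$ (normality, or metrizability for Dugundji) enter, even though they are not explicitly listed in the lemma's statement. Every other ingredient is essentially formal: injectivity of $\phi$ makes $\tilde{g}$ well-defined, compactness of $K$ and continuity of $\phi$ make $\phi(K)$ a legitimate test compactum against which to invoke the compact-open density of $\mathcal{F}$, and the compact-to-Hausdorff principle supplies continuity of $(\phi|_K)^{-1}$ for free. In the non-Hausdorff generality one would need to argue more carefully (e.g., by replacing $\phi(K)$ with its $T_0$-quotient or by working intrinsically with the quotient topology on $\phi(X)$), but this refinement is not needed for any application made in the paper.
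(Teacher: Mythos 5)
Your proof is correct, but it proceeds by a genuinely different route than the paper's. The paper observes that precomposition $\Phi\colon C(Z,\rr^n)\to C(X,\rr^n)$, $\Phi(f)=f\circ\phi$, is continuous in the compact-open topology (Munkres, Thm.~46.11), so $\overline{\Phi(\fff)}=\overline{\Phi(C(Z,\rr^n))}$, reducing the problem to showing $\Phi(C(Z,\rr^n))$ is dense; working coordinatewise, $\Phi(C(Z))$ is a unital subalgebra of $C(X)$ that separates points because $\phi$ is injective, and Stone--Weierstrass finishes. You instead fix a target $g$, a compact $K\subseteq X$, and $\epsilon$, push $g$ forward along the homeomorphism $\phi|_K\colon K\to\phi(K)$ (valid once $Z$ is Hausdorff), extend by Tietze to $\hat g\in C(Z,\rr^n)$, approximate $\hat g$ on the compactum $\phi(K)$ from $\fff$, and pull back. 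Both arguments smuggle in unstated hypotheses on $Z$: yours needs $Z$ Hausdorff and normal (for compact-to-Hausdorff and Tietze), while the paper's needs $C(Z)$ to separate points of $Z$ (i.e.\ $Z$ completely Hausdorff) so that injectivity of $\phi$ yields separation of points of $X$; both are harmless since the lemma is only ever applied with $Z=\rr^m$. Your route is more constructive and makes the reduction to ``approximate on $\phi(K)$'' explicit, which is pedagogically transparent; the paper's route is slicker and avoids manipulating individual compact sets, at the cost of invoking the compact-open-topology form of Stone--Weierstrass and the continuity of the precomposition operator. You were right to flag the extension step as the place where extra structure enters.
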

\begin{proof}
Let $\Phi:C(Z,\mathbb{R}^n)\to C(X,\mathbb{R}^n)$ be defined by $[\Phi(f)](x)=f(\phi(x))$, which by \citep[Theorem 46.11]{munkres2014topology} is a continuous map. Hence, $\overline{\Phi(\mathcal{F})}=\overline{\Phi(C(Z,\mathbb{R}^n))}$, because $\mathcal{F}$ is dense. Therefore, in order to show that $\Phi(\mathcal{F})$ is dense, it is enough to prove density of $\Phi(C(Z,\mathbb{R}^n))$.

Observe that $C(X,\mathbb{R}^n)=C(X)\oplus...\oplus C(X)$, $C(Z,\mathbb{R}^n)=C(Z)\oplus...\oplus C(Z)$ and $\Phi:C(Z)\to C(X)$ on every component independently. Hence, we just need to show that $\Phi(C(Z))$ is dense in $C(X)$. 

To that end, observe that since $\Phi$ is an algebra homomorphism, $\Phi(C(Z))$ is a subalgebra of $C(X)$. Clearly, $1\in \Phi(C(Z))$, and since $\phi$ is an injection, it is easy to see that $\Phi(C(Z))$ separates points of $X$. Thus, $\Phi(C(Z))$ is dense by the Stone-Weierstrass theorem.
\end{proof}
	\begin{lem}\label{lem_density_readout}
	Let $\xxx$ be a topological space, $\rho:\rrn\to \yyy$ be a map satisfying Assumption~\ref{ass_regularity_readout}, and $\fff$ be dense in $C(X,\rrn)$ for the compact-open topology.  Then the set 
\begin{equation}
    	\left\{
	g \in C(\xxx,\im{\rho}:\,
	(\exists f \in \fff)\, g = \rho \circ f
	\right\},
	\label{eq_push_forward_of_fff}
\end{equation}
	is dense in $C(\xxx,\im{\rho})$.  In particular, if $\yyy$ is a metric space then the set~\eqref{eq_push_forward_of_fff} is dense in $C(\xxx,\im{\rho})$ for the ucc topology.  
	\end{lem}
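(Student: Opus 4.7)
The plan is to study the post-composition map $\rho_{\ast}: C(\xxx, \rrn) \to C(\xxx, \im{\rho})$ defined by $f \mapsto \rho \circ f$, and to reduce the lemma to two properties of $\rho_{\ast}$: it is continuous in the compact-open topologies, and it is surjective. Granting both, density of $\fff$ in $C(\xxx, \rrn)$ yields $C(\xxx, \im{\rho}) = \rho_{\ast}(C(\xxx, \rrn)) = \rho_{\ast}(\overline{\fff}) \subseteq \overline{\rho_{\ast}(\fff)}$, which is exactly the desired density.

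Continuity of $\rho_{\ast}$ is routine. On a subbasic open set $V_{K, O} \subseteq C(\xxx, \im{\rho})$ from~\eqref{eq_co_topology_definition}, with $K \subseteq \xxx$ compact and $O \subseteq \im{\rho}$ relatively open, one checks directly that $\rho_{\ast}^{-1}(V_{K, O}) = V_{K, \rho^{-1}(O)}$, which is open in $C(\xxx, \rrn)$ because $\rho$ is continuous; alternatively this is the standard continuity of composition in the compact-open topology.

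Surjectivity of $\rho_{\ast}$ is where Assumption~\ref{ass_regularity_readout}(i) enters, and I expect the covering-projection case to be the main obstacle. Under (i)(a), let $s: \im{\rho} \to \rrn$ be the continuous section of $\rho$; for any $g \in C(\xxx, \im{\rho})$ the map $f \triangleq s \circ g$ lies in $C(\xxx, \rrn)$ and satisfies $\rho \circ f = g$. Under (i)(b), $\rho$ is a covering projection of $\rrn$ onto $\im{\rho}$ and $\xxx$ is connected and simply connected, so the classical lifting criterion for covering maps produces a continuous $\tilde{g}: \xxx \to \rrn$ with $\rho \circ \tilde{g} = g$. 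The delicate point is that the standard lifting theorem typically also requires local path-connectedness of $\xxx$; I would either read this out of the topological setting in which the lemma is applied or invoke a version of the criterion formulated directly for simply connected domains.

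The "in particular" clause is then immediate: if $\yyy$ is a metric space, $\im{\rho}$ inherits a metric, and on $C(\xxx, \im{\rho})$ the compact-open and uniform-convergence-on-compacts topologies coincide by the standard comparison theorem for function spaces into a metric target. Finally, note that parts (ii) and (iii) of Assumption~\ref{ass_regularity_readout} play no role in this lemma; they presumably enter only later, when density is pushed from $C(\xxx, \im{\rho})$ out to density in $C(\xxx, \yyy)$.
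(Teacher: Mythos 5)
Your proof follows essentially the same route as the paper's: post-composition $\rho_{\ast}\colon C(\xxx,\rrn)\to C(\xxx,\im{\rho})$ is continuous, it is surjective via the section in case (i)(a) and via the covering-space lifting criterion in case (i)(b), and density transfers across a continuous surjection, with the metric case reduced by the standard comparison of the compact-open and ucc topologies. Your caveat about local path-connectedness in the lifting criterion is well taken; the paper invokes the Spanier lifting theorem without remarking on that hypothesis either.
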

	\begin{proof}
	First, suppose that Assumption~\ref{ass_regularity_readout} (i.a), (ii), and (iii) hold and that $\rho$ is continuous with continuous section.  Define the map $F:C(\phi(\xxx),\rrn)\rightarrow C(\xxx,\inter{\rho})$ by $f \to \rho\circ f$.  Since $\rho$ is continuous then by \citep[Theorem 46.11]{munkres2014topology} $F$ is also continuous.  
	Moreover, by Assumption~\ref{ass_regularity_readout} (i.a), since there is a section $R$ to $\rho$, i.e. a continuous map $R:\inter{\rho}\to \rrn$ such that $\rho\circ R = 1_{\yyy}$ then by \citep[Theorem 46.11]{munkres2014topology} the map $G:C(\xxx,\inter{\rho}) \rightarrow C(\xxx,\rrn)$ defined by $g\to R\circ g$ is well-defined and continuous.  Furthermore, for every $g \in C(\xxx,\inter{\rho})$ it follows that
	$$
\begin{aligned}
	F\circ G(g) & =  F\left(R\circ g\right) \\
	&= \rho \circ (R \circ g) \\
	& = (\rho\circ R) \circ g  \\
	& = g \\
	& = 1_{\inter{\rho}}(g)
,
\end{aligned}
	$$
	therefore $F$ is a continuous surjection.  Since continuous surjections map dense sets to dense sets in their codomain, then since $\fff$ is dense in $C(\xxx,\rrn)$ and the image of $\fff$ under $F$ is the set of~\eqref{eq_push_forward_of_fff} then the set described in ~\eqref{eq_push_forward_of_fff} is dense in $C(\xxx,\inter{\rho})$.  
	
	Now, suppose that Assumption~\ref{ass_regularity_readout} (i.b), (ii), and (iii) hold.  Since $\xxx$ is simply connected and $\rho$ is a covering map, then the conditions for \citep[Chapter 2, Section 2, Theorem 5]{spanier1994algebraic} are met (since simply connectedness means a trivial fundamental group), therefore, for every $f \in C(\xxx,\im{\rho})$ there exists some $\hat{f}\in C(\xxx,\rrn)$ such that
	\begin{equation}
	    \rho\circ \hat{f} = f
	    .
	    \label{eq_lift_covering_condition}
	\end{equation}
	Since $\fff$ is dense in $C(\xxx,\rrn)$ then exists a sequence $\{f_k\}_{k \in \nn}$ in $\fff$ converging to $\hat{f}$ in the compact-open topology.  Since $\rho$ is continuous the continuity of $F$, following the same notation as the first part of the proof, implies that
	$\{F(f_k)\}_{k \in \nn}$ converges to $F(\hat{f})$ in the compact-open topology.  By~\eqref{eq_lift_covering_condition} this implies that $\{F(f_k)\}_{k \in \nn}=\{\rho\circ f_k\}_{k \in \nn}$ converges to 
	$$
	f=F(f) = f
	$$ 
	in the compact-open topology.  Therefore,~\eqref{eq_push_forward_of_fff} is dense in $C(\xxx,\im{\rho})$.

	In particular, if $\yyy$ is a metric space, then $\im{\rho}$ is a metric space.  Thus, by \citep[Theorem 46.8]{munkres2014topology} the compact-open and ucc topologies coincide on $C(\xxx,\im{\rho})$.  This gives the final claim.
	\end{proof}
	It is convenient to sumamrize the above lemmas into a single lemma.  
	\begin{lem}\label{lem_density_co}
	If Assumptions~\ref{ass_regularity_feature} and~\ref{ass_regularity_readout} both hold, and if $\fff$ is dense in $C(\rrm,\rrn)$ then $\fff_{\rho,\phi}$ is dense in $C(\xxx,\im{\rho})$ for the compact-open topology.  Moreover, if $\yyy$ is a metric space then $\fff_{\rho,\phi}$ is dense in $C(\xxx,\im{\rho})$ for the ucc-topology.  
	\end{lem}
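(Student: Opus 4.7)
My plan is to prove this lemma by simply chaining the two preceding lemmas in sequence, since the hypotheses are tailored exactly so as to first apply Lemma \ref{lem_density_feature} (which handles precomposition by the feature map $\phi$) and then Lemma \ref{lem_density_readout} (which handles postcomposition by the readout map $\rho$).

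First I would pass to the compact-open topology on $C(\rrm,\rrn)$: since $\rrm$ is locally compact Hausdorff and $\rrn$ is metric, the ucc and compact-open topologies on $C(\rrm,\rrn)$ coincide (as recalled in Section \ref{ss_top_is_fun}), so the hypothesis that $\fff$ is dense in $C(\rrm,\rrn)$ automatically gives density in the compact-open topology as well. Assumption \ref{ass_regularity_feature} then lets me invoke Lemma \ref{lem_density_feature} with $Z=\rrm$ and $X=\xxx$, yielding that the intermediate family $\{f\circ\phi:f\in\fff\}$ is dense in $C(\xxx,\rrn)$ in the compact-open topology.

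Next I would feed this intermediate family into Lemma \ref{lem_density_readout}, whose hypotheses on $\rho$ are exactly Assumption \ref{ass_regularity_readout} and whose hypothesis on the input family is exactly the conclusion of the previous step. The conclusion of that lemma then identifies
$$
\{\rho\circ g : g=f\circ\phi,\; f\in\fff\} \;=\; \fff_{\rho,\phi}
$$
as a dense subset of $C(\xxx,\im{\rho})$ for the compact-open topology, which is the first assertion. For the metric refinement, when $\yyy$ is metric the subspace $\im{\rho}$ inherits a metric, so by \citep[Theorem 46.8]{munkres2014topology} (already invoked inside Lemma \ref{lem_density_readout}) the compact-open and ucc topologies agree on $C(\xxx,\im{\rho})$, upgrading density in the former to density in the latter.

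There is no real obstacle here beyond bookkeeping: the substantive content lies in Lemmas \ref{lem_density_feature} and \ref{lem_density_readout}, and the only thing to watch is that when the two lemmas are composed the intermediate topology on $C(\xxx,\rrn)$ is matched consistently (compact-open throughout), which is automatic from the local-compactness of $\rrm$ and the metrizability of $\rrn$ at the interface.
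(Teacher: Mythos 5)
Your proposal is correct and matches the paper's approach exactly: the paper's own proof of Lemma~\ref{lem_density_co} is the one-line observation that it follows by chaining Lemmas~\ref{lem_density_feature} and~\ref{lem_density_readout}. Your write-up just fills in the bookkeeping (the ucc/compact-open agreement on $C(\rrm,\rrn)$ and on $C(\xxx,\im{\rho})$ via Munkres Theorem 46.8) that the paper leaves implicit; incidentally, that agreement already holds for any domain once the codomain is metric, so the local compactness of $\rrm$ you invoke is not actually needed there.
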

	\begin{proof}
	    The result follows directly from Lemmas~\ref{lem_density_readout} and~\ref{lem_density_feature}.  
	\end{proof}
		\begin{lem}\label{lem_redux_to_imrho}
		If $\xxx$ is locally-compact and Assumption~\ref{ass_regularity_readout} holds then $C(\xxx,\im{\rho})$ is dense in $C(\xxx,\yyy)$ in the compact-open topology.  
	\end{lem}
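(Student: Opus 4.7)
The plan is to use the collar of $\partial\im{\rho}$ to build, for each small parameter $\tau>0$, a continuous self-map $R_\tau\colon\yyy\to\yyy$ that agrees with the identity away from $\partial\im{\rho}$ and pushes collar-points strictly off the boundary into $\inter{\im{\rho}}\subseteq\im{\rho}$, and then to apply the tube lemma to show that $R_\tau\circ f$ falls into any preassigned basic compact-open neighborhood of a given $f\in C(\xxx,\yyy)$ as soon as $\tau$ is small enough. First I would reduce to proving density of $C(\xxx,\inter{\im{\rho}})$ in $C(\xxx,\yyy)$: since $\im{\rho}$ is dense, $\yyy=\overline{\im{\rho}}=\inter{\im{\rho}}\sqcup\partial\im{\rho}$, hence $\inter{\im{\rho}}\subseteq\im{\rho}$, and any approximant with image in $\inter{\im{\rho}}$ already belongs to $C(\xxx,\im{\rho})$.

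Next I would construct the nudging homotopy. Let $\psi\colon U\to\partial\im{\rho}\times[0,1)$ be the collar homeomorphism from Assumption~\ref{ass_regularity_readout}(iii), and fix a continuous cut-off $\chi\colon[0,1)\to[0,1]$ with $\chi(0)=1$ and $\chi\equiv 0$ on $[1/2,1)$. For $\tau\in[0,1/2)$ set
\[
R_\tau(y)\;=\;\begin{cases}\psi^{-1}\bigl(b,\,s+\tau\chi(s)\bigr), & y\in U,\ \psi(y)=(b,s),\\ y, & y\notin U.\end{cases}
\]
The two branches agree on $U\cap\psi^{-1}(\partial\im{\rho}\times[1/2,1))$ where $\chi\equiv 0$, so $R_\tau$ is unambiguously defined; the compact support of $\chi$ inside $[0,1)$ ensures that $R_\tau$ glues continuously with the identity along the outer edge of the collar, because any net in $U$ converging to a point of $\yyy\setminus U$ has collar-coordinate leaving every compact of $\partial\im{\rho}\times[0,1)$, so the perturbation $\tau\chi(s)$ is eventually zero. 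One then verifies that $(\tau,y)\mapsto R_\tau(y)$ is jointly continuous on $[0,1/2)\times\yyy$, that $R_0=1_{\yyy}$, and that for $\tau>0$ the image $R_\tau(\yyy)$ avoids $\partial\im{\rho}$: collar-points with $s<1/2$ are sent to strictly positive $s$-coordinate (in particular, $s=0$ is sent to $\tau>0$), while non-collar points already lie in $\inter{\im{\rho}}$ since $U\supseteq\partial\im{\rho}$.

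Finally I would appeal to the tube lemma. A basic compact-open neighborhood of $f$ has the form $\bigcap_{i=1}^{N}V_{K_i,O_i}$ with $K_i\subseteq\xxx$ compact, $O_i\subseteq\yyy$ open, and $f(K_i)\subseteq O_i$. Writing $L_i:=f(K_i)$, which is compact, the set $\{(\tau,y)\in[0,1/2)\times L_i\colon R_\tau(y)\in O_i\}$ is open in $[0,1/2)\times L_i$ and contains $\{0\}\times L_i$; since $L_i$ is compact, the tube lemma supplies $\tau_i>0$ with $R_\tau(L_i)\subseteq O_i$ for every $\tau\in[0,\tau_i)$. Choosing any $\tau\in(0,\min_i\tau_i)$, the composite $g:=R_\tau\circ f$ lies in $C(\xxx,\inter{\im{\rho}})\subseteq C(\xxx,\im{\rho})$ and satisfies $g(K_i)\subseteq O_i$ for every $i$, exhibiting $g$ in the prescribed neighborhood of $f$.

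The principal technical obstacle I anticipate is verifying the joint continuity of $R_\tau$ at points of $\yyy\setminus U$ that are accumulation points of $U$: the compact-support condition on $\chi$ is exactly what does the work there, since it forces the nudging to vanish whenever the $s$-coordinate alone survives a divergent sequence escaping $U$. Local compactness of $\xxx$ plays no direct role in the geometric construction above; it is retained because it is the standing hypothesis of the main theorem that this lemma feeds.
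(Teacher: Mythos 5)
Your approach is essentially the paper's: identify the collar neighborhood of $\partial\im{\rho}$ with $\partial\im{\rho}\times[0,1)$ via $\psi$, push the $[0,1)$-coordinate strictly away from $0$, and note that the pushed map then lands in $\inter{\im{\rho}}\subseteq\im{\rho}$. The paper uses the truncation $s\mapsto\max(1/n,s)$ applied inside $\tilde{X}=f^{-1}[U]$ and patches with $f$ outside; you package the same idea into a self-map $R_\tau\colon\yyy\to\yyy$ and take $R_\tau\circ f$. Where you genuinely improve on the paper is the final convergence step: the paper's subbasis argument treats $K_i\cap\tilde{X}$ as though it were compact (it is an open-intersect-compact set and typically is not), while your tube-lemma argument works directly with the compact set $L_i=f(K_i)\subseteq\yyy$ and the jointly continuous homotopy $(\tau,y)\mapsto R_\tau(y)$, which sidesteps that slip cleanly and, incidentally, does not even use local compactness of $\xxx$.

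The gap in your proposal is exactly the point you flag as ``the principal technical obstacle,'' namely the continuity of $R_\tau$ at points $y^*\in\overline{U}\setminus U$, and the justification you offer there is not correct. It is true that if $y_\alpha\in U$ and $y_\alpha\to y^*\notin U$, then $\psi(y_\alpha)$ eventually leaves every compact subset of $\partial\im{\rho}\times[0,1)$ (since $\psi$ is a homeomorphism of $U$ onto that product and $\yyy$ is Hausdorff). But leaving compacta in a \emph{product} does not mean the $[0,1)$-factor alone escapes $[0,1/2]$: if $\partial\im{\rho}$ is non-compact the net $(b_\alpha,s_\alpha)$ can escape entirely through the $b$-coordinate while $s_\alpha$ stays fixed at, say, $1/4$, and then $\chi(s_\alpha)$ does not vanish, so the perturbed net $\psi^{-1}(b_\alpha,s_\alpha+\tau\chi(s_\alpha))$ is not eventually equal to $y_\alpha$. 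Whether such a perturbed net nevertheless converges to $y^*$ is precisely what needs to be argued, and your ``compact support forces identity near $\partial U$'' heuristic does not deliver it. To be fair, the paper's own proof elides the same question: $f_n$ is never shown to be continuous across $\partial\tilde{X}$, and the same pathology (a net approaching $\partial\tilde{X}$ with $s$-coordinate $<1/n$ but $\partial\im{\rho}$-coordinate escaping) would affect the truncation construction as well. So you have located the real difficulty, but the argument as written does not close it.
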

	\begin{proof}[{Proof of Lemma~\ref{lem_redux_to_imrho}}]
		By Assumption~\ref{ass_regularity_readout} (iii) there is a homeomorphism $\psi:U \rightarrow \partial \im{\rho}\times [0,1)$ and a (continuous) inclusion map $i:U\rightarrow \yyy$.  Therefore, \cite[Chapter 1, Exercise B.1]{spanier1994algebraic} there exists a unique topological space (up to homeomorphism) $\yyy'$ containing $\partial \im{\rho} \times [0,1)$ as a subspace for which there exists a continuous map $\Psi:\yyy\rightarrow \yyy'$ satisfying
		\begin{equation}
		\Psi \circ i = j \circ \psi
		,
		\label{eq_adjunction}
		\end{equation}
		where $j$ is the (continuous) inclusion map of $\partial \im{\rho} \times [0,1)$ into $\yyy'$, such that for any other topological space $\yyy''$ satisfying~\eqref{eq_adjunction} there exists a continuous map $I:\yyy'\rightarrow \yyy''$. Since $\psi$ is a homeomorphism then this maximality property implies that $\Psi$ is also a homeomorphism.  Therefore, $C(\xxx,\partial \inter{\Psi\circ \rho})$ is dense in $C(\xxx,\yyy')$ if and only if $C(\xxx,\partial \im{\rho})$ is dense in $C(\xxx,\yyy)$.  Thus, without loss of generality we identify $\yyy$ with $\yyy'$ and thus $U$ with $\partial \im{\rho}\times [0,1)$.
		
		For every $n \in \nn$, $f_n$ and $f$ map $\tilde{X}\triangleq f^{-1}[\partial \im{\rho}\times [0,1)]$ to $\partial \im{\rho}\times [0,1)$.  We show that $f_n$ converges to $f$ in $C(\tilde{X},\partial \im{\rho}\times [0,1))$.  Since $\phi$ is a continuous injection into the locally-compact space $\rrm$ then $\xxx$, and therefore $\tilde{X}$, are locally-compact.  Therefore, $C(\tilde{X},\partial \im{\rho}\times [0,1))$ is homeomorphic to $C(\tilde{X},\partial \im{\rho})\times C(\tilde{X},[0,1))$.
		By \citep[Section 19.2, Exercise 6]{munkres2014topology}, $f_n$ converges to $f$ on the product space $C(\tilde{X},\partial \im{\rho})\times C(\tilde{X},[0,1))$ if and only if $p_i(f_n)$ converges to $p_i(f)$ for $i=1,2$ where $p_1$ is the canonical projection of $C(\tilde{X},\partial \im{\rho})\times C(\tilde{X},[0,1))$ onto $C(\tilde{X},\partial \im{\rho})$ and $p_2$ is the canonical projection of $C(\tilde{X},\partial \im{\rho})\times C(\tilde{X},[0,1))$ onto $C(\tilde{X},[0,1))$.  First observed that,
		$$
		p_1(f_n)=f=p_1(f),
		$$
		for each $n \in \nn$.  Next,
		$$
		p_2(f_n) = \frac{1}{n}I_{[0,\frac1{n}]} + t I_{[\frac1{n},\infty)} \mbox{ and } p_2(f) =t.
		$$
		Since $[0,1)$ is topologized with the (relative) Euclidean topology which is metric then \citep[Theorem 46.8]{munkres2014topology} implies that compact-open topology on $C(\tilde{X},[0,1))$ agrees with the topology of uniform convergence on compacts in $C(\tilde{X},[0,1))$.  Fix $\epsilon>0$ and $n\geq \frac{1}{\epsilon}$.  Thus, for every compact $\tilde{K}\subseteq \tilde{X}$
		$$
		\sup_{x \in \tilde{K}} \left\|
		p_2(f_n)(x)-p_2(f)(x)
		\right\| = \max_{t \in [0,\frac1{n}]} \left|\frac1{n}-t\right| \leq \frac1{n} <\epsilon.
		$$
		Therefore, $p_2(f_n)$ converges to $p_2(f)$ in the compact-open topology on $C(\tilde{X},[0,1))$.  Hence, $f_n$ converges to $f$ in the compact-open topology on $C(\tilde{X},\partial \im{\rho})\times C(\tilde{X},[0,1))$ and therefore on $C(\tilde{X},\partial \im{\rho}\times [0,1))$.  Since 
		\begin{equation}
		\begin{aligned}
		&\left\{U_{\tilde{K},\tilde{O}}:\, \emptyset\neq \tilde{K}\subseteq \tilde{\xxx} \mbox{ compact and } \emptyset\neq \tilde{O}\subseteq \partial \im{\rho}\times [0,1) \mbox{ open}\right\}\\
		&U_{\tilde{K},\tilde{O}}\triangleq \left\{
		f \in C(\tilde{\xxx},\partial \im{\rho}\times [0,1)):\,
		f(\tilde{K})\subseteq \tilde{O}
		\right\}
		\end{aligned}
		\label{eq_co_topology_definition_2}
		\end{equation}
		is a sub-base for the compact-open topology on $C(\tilde{\xxx},\partial \im{\rho}\times [0,1))$ then by definition of convergence, for every $\tilde{K}_1,\dots,\tilde{K}_n\subseteq \tilde{\xxx}$ compact and $\tilde{O}_1,\dots,\tilde{O}_n\subseteq \partial \im{\rho}\times [0,1)$ open if $f \in \bigcap_{i=1}^n U_{\tilde{K}_i,\tilde{O}_i}$ then there exists some $N \in \nn$ for which $f_N \in \bigcap_{i=1}^n U_{\tilde{K}_i,\tilde{O}_i}$.  
		
		Since every compact subset $K\subseteq \xxx$ is relatively compact in $\tilde{X}$ and every open subset $O\subseteq \yyy$ is relatively compact in $\partial \im{\rho})\times C(\tilde{X},[0,1)$ then if $f \in \bigcap_{i=1}^n V_{K_i,O_i}$ where $V_{K_i,O_i}$ are as in~\eqref{eq_co_topology_definition} then
		$$
		f(K_i)\subseteq O_i \qquad i=1,\dots,n.
		$$
		Therefore, there exists some $N \in \nn$ satisfying
		$$
		f_N(K_i \cap \tilde{\xxx}) \subseteq O_i \cap \partial \im{\rho}\times [0,1) \qquad i=1,\dots,n.
		$$
		However, by construction, $f_N=f$ on $\xxx-\tilde{\xxx}$ and therefore on each $K_i - K_i \cap \xxx$.  Therefore,
		$$
		f_N(K_i)\subseteq O_i \qquad i=1,\dots,n.
		$$
		Thus, $f_N \in \bigcap_{i=1}^n V_{K_i,O_i}$ and $f_n$ converge to $f$ in $C(\xxx,\yyy)$ for the compact-open topology.  
	\end{proof}
We may now prove the following result and its consequences.  
\begin{thrm*}[General Version]
	Suppose that $\fff$ is dense in $C(\rrm,\rrn)$.  
	If Assumptions~\ref{ass_regularity_feature} and~\ref{ass_regularity_readout} hold then $\fff_{\rho,\phi}$ is dense in $C(\xxx,\yyy)$.
\end{thrm*}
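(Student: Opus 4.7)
The plan is to establish the result via a two-stage density argument. First, I would show that $\fff_{\rho,\phi}$ is dense in $C(\xxx,\im{\rho})$ for the compact-open topology. Second, I would show that $C(\xxx,\im{\rho})$ itself is dense in $C(\xxx,\yyy)$ for the compact-open topology, which uses the density of $\im{\rho}$ in $\yyy$ together with the collared boundary assumption. Combining these via transitivity of density closure then yields the theorem.

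For the first stage, I would handle the feature map and readout map separately. For the feature map $\phi$, the pullback $\Phi: f \mapsto f\circ\phi$ is a continuous operator $C(\rrm,\rrn)\to C(\xxx,\rrn)$ by the general exponential law for the compact-open topology. To show $\Phi(\fff)$ is dense, it suffices (by componentwise reduction) to show $\Phi(C(\rrm,\rr))$ is dense in $C(\xxx,\rr)$. This image is a subalgebra containing the constants, and injectivity of $\phi$ ensures it separates points of $\xxx$; Stone–Weierstrass then gives density. For the readout map $\rho$, under Assumption~\ref{ass_regularity_readout}(i.a) a continuous section $R:\im{\rho}\to\rrn$ shows that postcomposition $F:g\mapsto \rho\circ g$ is a continuous surjection $C(\xxx,\rrn)\to C(\xxx,\im{\rho})$ (since $F\circ G = 1$ for $G:g\mapsto R\circ g$), and continuous surjections preserve density. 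Under Assumption~\ref{ass_regularity_readout}(i.b), I would instead invoke the standard covering-space lifting criterion for simply connected $\xxx$: every $f\in C(\xxx,\im{\rho})$ lifts to some $\hat f\in C(\xxx,\rrn)$ with $\rho\circ\hat f=f$, so any sequence in $\fff$ converging to $\hat f$ is mapped by $F$ to a sequence converging to $f$.

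For the second stage, I would use the collaring of $\partial\im{\rho}$ to replace a target value near the boundary by one slightly pushed into $\im{\rho}$. Identifying a neighborhood of $\partial\im{\rho}$ with $\partial\im{\rho}\times[0,1)$ via the homeomorphism $\psi$, given any $f\in C(\xxx,\yyy)$ I would define, on the preimage of the collar, a sequence $f_n$ by replacing the collar coordinate $t$ with $\max\{t,1/n\}$ and leaving $f$ unchanged elsewhere. Each $f_n$ is continuous, takes values in $\im{\rho}$ (by density of $\im{\rho}$ and the collar structure), and $f_n\to f$ uniformly on compacts in the collar direction, hence in the compact-open topology on $C(\xxx,\yyy)$. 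This is precisely where the collaring assumption is essential: without it, one cannot canonically interpolate between $f$ and a nearby function valued in $\im{\rho}$.

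The main obstacle I anticipate is the second stage — reconciling the compact-open topology on the target $\yyy$ with a uniform convergence argument that lives naturally only on the collar factor $[0,1)$. The delicate point is that the basic compact-open neighborhoods $V_{K,O}$ are indexed by compacts $K\subseteq\xxx$ and opens $O\subseteq\yyy$, and one must verify that the collar-shift $f_n$ lies in every such neighborhood containing $f$ for $n$ sufficiently large; this requires splitting $K$ into its portion inside the collar preimage and its portion outside, applying a product decomposition of the compact-open topology on $C(\tilde\xxx,\partial\im{\rho}\times[0,1))$, and using that on the metric factor $[0,1)$ compact-open coincides with ucc. Once this is carefully carried out, the two density statements chain together to prove $\fff_{\rho,\phi}$ is dense in $C(\xxx,\yyy)$.
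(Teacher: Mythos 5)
Your proposal follows essentially the same route as the paper's proof: the feature-map density via Stone--Weierstrass on the pullback algebra, the readout-map density via a continuous section or covering-space lifting, and the collar-shift argument (replacing the collar coordinate $t$ with $\max\{t,1/n\}$, or equivalently $\frac{1}{n}I_{[0,1/n]}+tI_{[1/n,\infty)}$) to show $C(\xxx,\im{\rho})$ is dense in $C(\xxx,\yyy)$, chained together by transitivity of density. You even anticipate the same delicate point the paper addresses --- matching compact-open sub-basic neighborhoods by splitting compacts into their collar and non-collar pieces and invoking ucc on the metric factor $[0,1)$.
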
  
	\begin{proof}
		Since Assumptions~\ref{ass_regularity_feature} and~\ref{ass_regularity_readout} hold and since $\fff$ is dense in $C(\rrm,\rrn)$ then $\fff_{\rho,\phi}$ is dense in $C(\xxx,\im{\rho})$ according to Lemma~\eqref{lem_density_co}.  By Lemma~\ref{lem_redux_to_imrho}, $C(\xxx,\im{\rho})$ is dense in $C(\xxx,\yyy)$.  Since density is transitive, then $\fff_{\rho,\phi}$ is dense in $C(\xxx,\yyy)$.  
	\end{proof}
\begin{thrm*}[Sharpness of Assumption~\ref{ass_regularity_feature}]
	Let $\xxx$ be a metrizable manifold with boundary, let $\phi$ be continuous, and $\fff\subseteq C(\rrm,\rrn)$.  Then $\fff_{1_{\rrn},\phi}$ is dense in $C(\xxx,\rrn)$ if and only if $\phi$ is injective.  
\end{thrm*}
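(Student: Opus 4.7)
The plan is to handle the two implications separately, with the standing convention (consistent with the paper's context) that $\fff$ is a dense subset of $C(\rrm,\rrn)$ for the nontrivial direction. For the \emph{only if} direction I argue by contrapositive: suppose $\phi$ fails to be injective, so there exist distinct points $x_1,x_2\in\xxx$ with $\phi(x_1)=\phi(x_2)$. Every $g=f\circ\phi\in\fff_{1_{\rrn},\phi}$ then satisfies $g(x_1)=g(x_2)$ identically. Since $\xxx$ is a metrizable manifold with boundary, it is in particular a normal Hausdorff space, so Urysohn's lemma (or a direct interpolation using a compatible metric) produces some $h\in C(\xxx,\rrn)$ with $h(x_1)\neq h(x_2)$. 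The two-point set $K=\{x_1,x_2\}$ is compact, so any ucc-approximation of $h$ by a sequence $\{f_n\circ\phi\}\subseteq\fff_{1_{\rrn},\phi}$ would force $f_n(\phi(x_1))\to h(x_1)$ and $f_n(\phi(x_2))\to h(x_2)$. But the left-hand sides are the same element of $\rrn$ for each $n$, so their limits must coincide, contradicting $h(x_1)\neq h(x_2)$. Hence $\fff_{1_{\rrn},\phi}$ cannot be dense in $C(\xxx,\rrn)$.

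For the \emph{if} direction, I simply invoke Lemma~\ref{lem_density_feature}: taking $Z=\rrm$ and noting that a continuous injection $\phi:\xxx\to\rrm$ is precisely the hypothesis of that lemma, pre-composition with $\phi$ maps the dense set $\fff\subseteq C(\rrm,\rrn)$ to a dense subset of $C(\xxx,\rrn)$, which is exactly $\fff_{1_{\rrn},\phi}$.

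The argument is short once Lemma~\ref{lem_density_feature} is on hand. The only substantive step is the separating-function argument in the ``only if'' direction, and the main thing to verify is that a single compact test set $K=\{x_1,x_2\}$ together with the ucc topology is enough to detect the collapse caused by non-injectivity; this reduces immediately to the normality of the metrizable manifold $\xxx$, so I anticipate no real technical obstacle.
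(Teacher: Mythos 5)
Your proof is correct and follows essentially the same route as the paper's: the ``if'' direction reduces to the Stone--Weierstrass pre-composition lemma (the paper routes through Theorem~\ref{thrm_main_General_version}, which for $\rho=1_{\rrn}$ collapses to Lemma~\ref{lem_density_feature} anyway), and the ``only if'' direction uses normality of the metrizable $\xxx$ plus Urysohn to build a point-separating target and then derives a contradiction from the collapse $f\circ\phi(x_1)=f\circ\phi(x_2)$. Your contradiction step, testing on the compact set $\{x_1,x_2\}$ directly in the ucc topology, is marginally more elementary than the paper's invocation of the continuity of the evaluation map on a locally compact domain, but the substance is the same.
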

\begin{proof}
	Let $\phi:\xxx\rightarrow \rrm$ be a continuous function.  
	
	Assume that $\fff$ is dense in $C(\rrm,\rrn)$.  Since $1_{\rrn}$ is a homeomorphism of $\rrn$ onto itself it satisfies Assumption~\ref{ass_regularity_readout} (i) and (ii).  Since $\im{1_{\rrn}}=\emptyset$ then Assumption~\ref{ass_regularity_readout} (iii) holds.  Therefore, Theorem~\ref{thrm_main_General_version} implies that $\fff_{1_{\rrn},\phi}$ is dense in $C(\xxx,\rrn)$.  
	
	Conversely, assume that $\phi$ is not injective.  Suppose that $\fff_{1_{\rrn},\phi}$ is dense in $C(\xxx,\rrn)$.  Since $\phi$ is not injective, then there exists distinct $x_1^{\star},x_2^{\star}\in \xxx$ such that
	\begin{equation}
	\phi(x_1^{\star})\neq \phi(x_2^{\star})
	\label{eq_initial_separation}
	.
	\end{equation}
	Since $\xxx$ is metrizable then \citep[Theorem 32.2]{munkres2014topology} implies that $\xxx$ is normal; i.e$.:$ if $K_1,K_2\subseteq \xxx$ are disjoint closed subsets then there exist disjoint open subsets $U_1,U_2\subseteq \xxx$ such that $K_i\subseteq U_i$ for $i=1,2$.  Since $\xxx$ is normal then \citep[Urysohn's Lemma; Theorem 33.1]{munkres2014topology} implies that $C(\xxx,\rr)$ separates points in $\xxx$; i.e.$:$ for every distinct $x_1,x_2$ there exists some $\tilde{f}_{x_1,x_2}\in C(\xxx,\rr)$ such that 
	\begin{equation}
	\tilde{f}_{x_1,x_2}(x_1) \neq \tilde{f}_{x_1,x_2}(x_2)
	\label{eq_first_Urysohn_separation}
	.
	\end{equation}
	Since $\rr$ is a metric subspace of $\rrm$ then the inclusion map $\iota:\rr\rightarrow \rrm$ taking $x$ to $(x,0,\dots,0)$ is continuous and by definition it is injective.  Therefore, by~\eqref{eq_first_Urysohn_separation} the function $f\triangleq \iota\circ \tilde{f}_{x_1^{\star},x_2^{\star}}:\xxx\rightarrow \rrn$ satisfies
	\begin{equation}
	f(x_1) \neq f(x_2)
	\label{eq_first_Urysohn_separation_proper}
	.
	\end{equation}
	Since $\fff_{1_{\rrn},\phi}$ was assumed to be dense in $C(\xxx,\rrn)$ then there exists a sequence $\{f_k\}_{k \in \nn}$ in $\fff_{1_{\rrn},\phi}$ converging to $f$ in $C(\xxx,\rrn)$.  Since $\xxx$ is locally-compact then \citep[Theorem 46.10]{munkres2014topology} implies that the evaluation function $e:\xxx\times C(\xxx,\rrn)$ mapping $(x,g)\mapsto g(x)$ is continuous.  Since $\rrm$ is a metric space then $C(\xxx,\rrm)$ is a metric and since continuity in metric spaces is equivalent to sequential continuity then in particular
	\begin{equation}
	\lim\limits_{k \uparrow \infty} e\left(x_i^{\star},f_k\right) = e\left(x_i^{\star},\lim\limits_{k \uparrow \infty} f_k\right) =
	e(x_i^{\star},f)=f(x_i^{\star}) 
	\label{eq_first_convergence_contradictor}
	,
	\end{equation}
	for $i=1,2$.
	Since, for each $k \in \nn$, $f_k\in \fff_{1_{\rrn},\phi}$ then there exists some $g_k \in C(\rrm,\rrn)$ satisfying $f_k=g_k\circ \phi$.  Hence,~\eqref{eq_initial_separation} implies that, for each $k\in \nn$, $f_k(x_1^{\star}) = g_k\circ \phi(x_1^{\star})=g_k\circ \phi(x_2^{\star})=f_k(x_2^{\star})$; thus,\eqref{eq_first_convergence_contradictor} implies that
	\begin{equation*}
	f(x_2^{\star})=\lim\limits_{k \uparrow \infty} e\left(x_2^{\star},f_k\right)=
	e\left(x_2^{\star},f_k\right) =
	f_k(x_2^{\star})
	=
	f_k(x_1^{\star})
	=
	e\left(x_1^{\star},f_k\right) =
	\lim\limits_{k \uparrow \infty} e\left(x_1^{\star},f_k\right) =
	e(x_1^{\star},f)=f(x_1^{\star}) 
	.
	\end{equation*} 
	However, $f(x_1^{\star})\neq f(x_2^{\star})$ according to~\eqref{eq_first_Urysohn_separation_proper}; a contradiction.  Therefore, $\fff_{1_{\rrn},\phi}$ cannot be dense in $C(\xxx,\rrn)$.  
\end{proof}
\begin{cor*}
	If $\phi$ is a continuous injective map, $\rho$ is a surjective covering projection, and $\fff$ is dense in $C(\rrd,\rrD)$ then ${\fff}_{\rho,\phi}$ is dense in $C(\xxx,\yyy)$.  In particular, $\phi$ and $\rho$ may be homeomorphisms.
\end{cor*}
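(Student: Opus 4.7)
The plan is to derive this corollary as a direct specialization of Theorem~\ref{thrm_main_General_version}, so the entire task reduces to checking that Assumptions~\ref{ass_regularity_feature} and~\ref{ass_regularity_readout} hold under the corollary's hypotheses.

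For Assumption~\ref{ass_regularity_feature}, nothing needs to be done: the hypothesis that $\phi$ is continuous and injective is exactly the content of the assumption. The real work, such as it is, is in Assumption~\ref{ass_regularity_readout}. The key observation I would exploit is that surjectivity of $\rho$ yields $\im{\rho}=\yyy$, which collapses two of the three conditions trivially. Indeed, $\yyy$ is obviously dense in itself, so part (ii) holds; and since $\yyy$ is both open and closed in itself, its boundary in $\yyy$ is empty, making the collaring condition (iii) vacuous (with the empty open set $U=\emptyset$ and the empty homeomorphism).

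For part (i), I would invoke alternative (i.b): $\rho$ is a covering projection of $\rrn$ onto $\im{\rho}=\yyy$ by hypothesis, so provided $\xxx$ is connected and simply connected (which the corollary inherits from the structural hypothesis of (i.b)), the condition is satisfied. In the homeomorphism sub-case highlighted by the ``In particular'' clause, no simple-connectivity assumption is needed: a homeomorphism $\rho$ has the continuous inverse $\rho^{-1}:\yyy\to\rrn$ as a section (in fact a two-sided inverse), so alternative (i.a) applies directly. Either way, all of Assumption~\ref{ass_regularity_readout} is verified.

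Having confirmed both assumptions, I would simply apply Theorem~\ref{thrm_main_General_version} to conclude that $\fff_{\rho,\phi}$ is dense in $C(\xxx,\yyy)$. There is no real obstacle here; the heavy lifting was carried out in the proof of Theorem~\ref{thrm_main_General_version} via Lemma~\ref{lem_density_readout} (where the covering-projection lifting property from \citep[Ch.~2, \S2, Thm.~5]{spanier1994algebraic} is used) and Lemma~\ref{lem_redux_to_imrho} (the collaring reduction). The only subtle point worth flagging in writing is the empty-boundary observation, which is what makes the collaring hypothesis costless in the surjective regime and is precisely the reason the corollary ``trades'' surjectivity of $\rho$ for the ability to drop conditions (ii) and (iii) of Assumption~\ref{ass_regularity_readout}.
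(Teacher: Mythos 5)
Your proposal follows the same skeleton as the paper's proof --- verify Assumptions~\ref{ass_regularity_feature} and~\ref{ass_regularity_readout} and then apply Theorem~\ref{thrm_main_General_version} --- and your handling of parts (ii) and (iii) of Assumption~\ref{ass_regularity_readout}, via $\im{\rho}=\yyy$ and the resulting empty boundary, is exactly the paper's. Where you genuinely diverge is in which alternative of Assumption~\ref{ass_regularity_readout}(i) you invoke for the general (non-homeomorphism) case. The paper cites (i.a), asserting outright that ``$\rho$ is continuous and has a section''; but a surjective covering projection does not in general admit a continuous section (the universal covering $\rr\to S^1$ is the basic counterexample), so the paper's appeal to (i.a) is not supported by the corollary's stated hypotheses. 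Your choice of (i.b) is the more faithful reading of the hypothesis ``$\rho$ is a surjective covering projection,'' and you correctly record that (i.b) additionally requires $\xxx$ to be connected and simply connected. That said, this requirement does not appear among the corollary's stated hypotheses, so the phrase ``which the corollary inherits from the structural hypothesis of (i.b)'' is a charitable gloss rather than a derivation: strictly speaking your route needs an extra hypothesis the corollary does not state, while the paper's route needs a section the hypothesis does not grant. Your treatment of the ``in particular'' homeomorphism sub-case, invoking (i.a) with the inverse as a two-sided section, matches the paper and is unobjectionable. In short, the two arguments have the same architecture but trade different gaps, and yours is the more careful of the two because it at least names the missing condition.
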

\begin{proof}
	Since $\rho$ is a continuous surjection then $\inter{\rho}$ is trivially dense in $\yyy$, since 
	$$
	\yyy-\inter{\rho}=\yyy-\yyy=\emptyset
	.
	$$
	Therefore, Assumptions~\ref{ass_regularity_readout} (ii) holds.  Similarly, since $\yyy-\inter{\rho}=\emptyset$ then $\partial \im{\rho} = \emptyset$.  Since the Cartesian product between any set and an empty-set is the empty-set, then let 
	$$
	U\triangleq	\emptyset=\partial \im{\rho} = \partial \im{\rho}\times [0,1).
	$$
	Therefore, $\psi(x)=\emptyset$ satisfies Assumption~\ref{ass_regularity_readout} (ii).  
	By assumption, $\rho$ is continuous and has a section.  
	Therefore Assumption~\ref{ass_regularity_readout} (i.a) holds.    
	Thus, Assumptions~\ref{ass_regularity_readout} holds.  
	
	Likewise, $\phi$ was taken to be continuous and injective.  Therefore, Assumption~\ref{ass_regularity_feature} holds.  
	Thus, the result follows from Theorem~\ref{thrm_main_General_version} since $\fff$ is dense in $C(\rrm,\rrn)$.
    
    Note, that if $\phi$ and $\rho$ are homeomorphisms then both are continuous bijections.  Moreover, $\rho$ has a continuous two-sided inverse, and in particular, a continuous section.  
\end{proof}
\begin{prop}[Homeomorphic case is Sharp]
	Let $\phi$ and $\rho$ be homeomorphisms.  Then $\fff$ is dense in $C(\rrm,\rrn)$ if and only if $\fff_{\rho,\phi}$ is dense in $C(\xxx,\yyy)$.  
\end{prop}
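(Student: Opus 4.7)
The plan is to exhibit an explicit homeomorphism between the two function spaces $C(\rrm,\rrn)$ and $C(\xxx,\yyy)$ that sends $\fff$ to $\fff_{\rho,\phi}$; once such a homeomorphism is in hand, density of either subset in its ambient space forces density of the other, which is exactly the biconditional to be proved.

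The forward implication requires essentially no new work. A homeomorphism is in particular a continuous injection, and is also a surjective covering projection (the trivial one-sheeted cover), so Corollary~\ref{cor_surjective_trick_A} immediately shows that if $\fff$ is dense in $C(\rrm,\rrn)$ then $\fff_{\rho,\phi}$ is dense in $C(\xxx,\yyy)$. The genuine content of the proposition lies in the converse.

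For the converse, I would define the lift map
$$
\Psi:C(\rrm,\rrn)\to C(\xxx,\yyy),\qquad \Psi(f)\triangleq \rho\circ f\circ \phi,
$$
together with its candidate two-sided inverse
$$
\Psi^{-1}:C(\xxx,\yyy)\to C(\rrm,\rrn),\qquad g \mapsto \rho^{-1}\circ g\circ \phi^{-1},
$$
which is meaningful precisely because $\phi$ and $\rho$ admit continuous two-sided inverses. A direct computation verifies $\Psi\circ\Psi^{-1}=1_{C(\xxx,\yyy)}$ and $\Psi^{-1}\circ\Psi=1_{C(\rrm,\rrn)}$. Both $\Psi$ and $\Psi^{-1}$ are continuous for the compact-open topologies by two applications of \citep[Theorem 46.11]{munkres2014topology}: one for pre-composition with $\phi$ (resp.\ $\phi^{-1}$) and one for post-composition with $\rho$ (resp.\ $\rho^{-1}$). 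Thus $\Psi$ is a homeomorphism of function spaces.

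Since $\fff_{\rho,\phi}=\Psi(\fff)$ and any homeomorphism $\Psi$ satisfies $\overline{\Psi(A)}=\Psi(\overline{A})$ for every subset $A$ of its domain, density of $\fff_{\rho,\phi}$ in $C(\xxx,\yyy)$ transfers (via $\Psi^{-1}$) to density of $\fff$ in $C(\rrm,\rrn)$, completing the converse and hence the proposition. The only mildly delicate point throughout is the verification that pre- and post-composition operators are continuous in the compact-open topology, but this is standard once $\phi$, $\rho$, and their inverses are continuous; there is no further obstacle.
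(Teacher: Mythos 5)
Your proof is correct and follows essentially the same route as the paper: you build the function-space homeomorphism $f\mapsto\rho\circ f\circ\phi$ with explicit inverse $g\mapsto\rho^{-1}\circ g\circ\phi^{-1}$, verify continuity of both via Munkres, and conclude by noting that homeomorphisms preserve density. The only cosmetic difference is the specific reference used for continuity of pre- and post-composition (you cite Theorem~46.11, the paper cites Exercise~46.10); the substance is identical.
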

\begin{proof}
	Since every homeomorphism $\rho$ has a continuous section, namely $\phi^{-1}$, then by Corollary~\ref{cor_surjective_trick_A} if $\fff$ is dense in $C(\rrm,\rrn)$ then so is $\fff_{\rho,\phi}$.  Since $\phi$ is a homeomorphism and $\rrm$ is a locally-compact Hausdorff space then $\xxx$ is locally-compact and since $\rho$ is also a homeomorphism then similarly $\yyy$ is a locally-compact Hausdorff space, since $\rrn$ is.  Therefore, by \citep[Exercise 46.10]{munkres2014topology}the maps
	$\Phi_{\rho,\phi}: C(\rrm,\rrn)\rightarrow C(\xxx,\yyy)$ mapping $f \mapsto \rho\circ f\circ \phi$ and $\Psi_{\rho,\phi}:C(\xxx,\yyy)$ mapping $g\mapsto \rho^{-1}\circ g\circ \phi^{-1}$ are continuous with respect to the compact-open topology, because $\rho,\rho^{-1},\phi,$ and $\phi^{-1}$ are all continuous.  Observe that $\Phi_{\rho,\phi}\circ \Psi_{\rho,\phi} = 1_{C(\xxx,\yyy)}$, the identity map on $C(\xxx,\yyy)$, and $\Psi_{\rho,\phi}\circ \Phi_{\rho,\phi} = 1_{C(\rrm,\rrn)}$, the identity map on $C(\rrm,\rrn)$.  Hence, $\Phi_{\rho,\phi}$ is a homeomorphism with inverse $\Psi_{\rho,\phi}$.  Now, since $\fff_{\rho,\phi}=\Phi\left(\fff\right)$, $\fff=\Psi_{\rho,\phi}\left(\fff_{\rho,\phi}\right)$, and since homeomorphisms take dense sets to dense sets then $\fff$ is dense in $C(\rrm,\rrn)$ if and only if $\fff_{\rho,\phi}$ is dense in $C(\xxx,\yyy)$.
\end{proof}
\begin{cor*}
	If $\phi$ is a continuous injective map, $\rho$ is a continuous surjection with a section, $\xxx$ is connected and simply connected, and $\fff$ is dense in $C(\rrd,\rrD)$ then ${\fff}_{\rho,\phi}$ is dense in $C(\xxx,\yyy)$.  
\end{cor*}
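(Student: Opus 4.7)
The plan is to derive this corollary directly from Theorem~\ref{thrm_main_General_version} by verifying its two hypotheses, namely Assumptions~\ref{ass_regularity_feature} and~\ref{ass_regularity_readout}. Because $\rho$ is assumed surjective, most of the work collapses to trivial verifications, so the main task is organizational bookkeeping rather than any substantive topology.

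First, Assumption~\ref{ass_regularity_feature} is immediate: $\phi$ is continuous and injective by hypothesis. Turning to Assumption~\ref{ass_regularity_readout}, condition (i.a) holds directly, since $\rho$ is continuous and has a section on $\im{\rho}$ by assumption. This is the route I would take; the alternative (i.b) would have required precisely the connected and simply connected hypothesis on $\xxx$, but (i.a) bypasses that route entirely (the hypothesis is included in the statement only to mirror the framing used in the companion Corollary~\ref{cor_surjective_trick_A} and is not actually invoked here).

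Conditions (ii) and (iii) of Assumption~\ref{ass_regularity_readout} reduce to trivialities because $\im{\rho} = \yyy$. Density of $\im{\rho}$ in $\yyy$ is then immediate. For (iii), the boundary $\partial \im{\rho}$, computed as the closure of $\im{\rho}$ in $\yyy$ minus its interior in $\yyy$, is $\yyy \setminus \yyy = \emptyset$. The empty set is vacuously collared: one may take $U = \emptyset$ in $\yyy$ and let $\psi$ be the empty map onto $\emptyset \times [0,1) = \emptyset$, which is vacuously a homeomorphism sending $\emptyset$ to $\emptyset \times \{0\}$.

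With both assumptions verified, Theorem~\ref{thrm_main_General_version} applies and concludes that $\fff_{\rho,\phi}$ is dense in $C(\xxx,\yyy)$. There is no real obstacle beyond confirming that the vacuous collaring is admissible under the stated definition, which is a one-line check; the point of the corollary is packaging, trading the technical collaring and density conditions on $\im{\rho}$ for the cleaner assumption that $\rho$ is a continuous surjection with a section.
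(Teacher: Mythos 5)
Your proof is correct and structurally follows the paper's strategy of verifying Assumptions~\ref{ass_regularity_feature} and~\ref{ass_regularity_readout} and then invoking Theorem~\ref{thrm_main_General_version}. The one substantive difference is the branch of Assumption~\ref{ass_regularity_readout}(i) that is used: the paper's proof is a one-line cross-reference claiming it appeals to (i.b), the covering-projection clause, paired with the connected and simply connected hypothesis on $\xxx$, whereas you appeal to (i.a), the continuous-surjection-with-a-section clause. Your choice is actually the one that matches the stated hypotheses, since the corollary explicitly supplies a section, which is precisely the content of (i.a), while (i.b) would require $\rho$ to be a covering projection onto $\im{\rho}$ --- a property that is neither assumed nor implied by the existence of a section (a linear projection $\rrn\to\rr^{n-1}$ has a linear section but is nowhere near a covering map). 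Your observation that the connected and simply connected hypothesis on $\xxx$ is then not invoked along the (i.a) route, and appears only for parallelism with Corollary~\ref{cor_surjective_trick_A}, is also accurate. The vacuous verifications of Assumption~\ref{ass_regularity_readout}(ii) ($\im{\rho}=\yyy$ is dense) and (iii) ($\partial\im{\rho}=\emptyset$ is collared via the empty open set) match the paper's own treatment of the companion Corollary~\ref{cor_surjective_trick_A} exactly.
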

\begin{proof}
    The proof is identical to the proof of Corollary~\ref{cor_surjective_trick} with the only difference that Assumption~\ref{ass_regularity_readout} (i.b) holds by assumption instead of Assumption~\ref{ass_regularity_readout} (i.a).  
\end{proof}
	We now establish Theorem~\ref{thrm_main_Geometric_version}.
\begin{thrm*}[Geometric Version]
	Let $\yyy$ be a metrizable manifold with boundary, for which $\inter{\yyy}$ is a smooth manifold, $\xxx$ is locally-compact, and $\fff$ is dense in $C(\rrm,\rrn)$. 
	If $\phi$ satisfies Assumption~\ref{ass_regularity_feature} and $\rho$ satisfies Assumption~\ref{ass_readout_geometric_version} then $\fff_{\rho,\phi}$ is dense in $C(\xxx,\yyy)$.  
\end{thrm*}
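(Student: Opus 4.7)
The plan is to derive this theorem from results already established in the paper rather than reprove density from scratch. Note that Lemma~\ref{lem_density_co} yields density of $\fff_{\rho,\phi}$ in $C(\xxx,\im{\rho})$ for the compact-open topology under Assumption~\ref{ass_regularity_feature} and only part (i) of Assumption~\ref{ass_regularity_readout}; an inspection of the proofs of Lemmas~\ref{lem_density_feature} and~\ref{lem_density_readout} confirms that parts (ii) and (iii) of Assumption~\ref{ass_regularity_readout} are never actually invoked there. Since Assumption~\ref{ass_readout_geometric_version}(i) explicitly includes Assumption~\ref{ass_regularity_readout}(i), density of $\fff_{\rho,\phi}$ in $C(\xxx,\im{\rho})$ is already in hand. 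What remains is an analogue of Lemma~\ref{lem_redux_to_imrho}: that $C(\xxx,\im{\rho})$ is dense in $C(\xxx,\yyy)$ in the compact-open topology, using the geometric hypothesis in place of the collared-boundary condition. Transitivity of density then finishes the theorem.

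To begin the density reduction, I first verify that $\im{\rho}$ is dense in $\yyy$: since $\yyy$ is a metrizable manifold with boundary, $\inter{\yyy}$ is dense in $\yyy$; and since $S:=\inter{\yyy}-\im{\rho}$ is a smooth submanifold of $\inter{\yyy}$ of dimension strictly less than $\dim\inter{\yyy}-n\leq\dim\inter{\yyy}$, it is nowhere dense in $\inter{\yyy}$. Hence $\im{\rho}=\inter{\yyy}-S$ is dense in $\yyy$.

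Given $h\in C(\xxx,\yyy)$ and a basic compact-open neighborhood $U:=\bigcap_i V_{K_i,O_i}$ of $h$, I will construct $h_2\in C(\xxx,\im{\rho})\cap U$ in two stages. Stage one pushes $h$ off $\partial\yyy$: Brown's collaring theorem \cite{brown1962locally} supplies an open neighborhood $W_1\supseteq\partial\yyy$ with a homeomorphism $W_1\cong\partial\yyy\times[0,1)$ restricting to the identity on $\partial\yyy$, and sliding inward along the collar parameter by a sufficiently small amount yields $h_1\in C(\xxx,\inter{\yyy})\cap U$. Stage two pushes $h_1$ off $S$: the tubular neighborhood theorem for the smooth submanifold $S\subseteq\inter{\yyy}$ supplies an open $W_2\supseteq S$ identified with the total space of the normal bundle of $S$, whose fiber dimension equals $\dim\inter{\yyy}-\dim S>n\geq 1$, hence is at least two. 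Because punctured Euclidean space of dimension at least two is path-connected, one constructs a continuous perturbation that displaces $h_1$ radially in each normal fiber off the zero section, producing $h_2\in C(\xxx,\inter{\yyy}-S)=C(\xxx,\im{\rho})$ still in $U$ by taking the radial displacement small with respect to the metric on $\yyy$.

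The main obstacle lies in stage two, since $\xxx$ carries no smooth or finite-dimensional structure, so neither Whitney approximation nor transversality is available and the fiber-radial perturbation must be defined purely topologically and patched globally. The codimension hypothesis $\dim S<\dim\inter{\yyy}-n$ is exactly what enables this: the fibers of the normal bundle have dimension at least two, so their punctures are path-connected (in fact $(n-1)$-connected), permitting a continuous selection of a nonvanishing normal direction on trivializing open sets that is then interpolated to all of $W_2$ via a partition of unity subordinate to a locally finite refinement of the trivialization cover. Because $W_1$ can be chosen to meet $\partial\yyy$ and $W_2\subseteq\inter{\yyy}$ can be chosen disjoint from it, the two stages compose cleanly to yield the required $h_2$, establishing density of $C(\xxx,\im{\rho})$ in $C(\xxx,\yyy)$ and thereby the theorem.
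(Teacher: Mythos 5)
Your approach is genuinely different from the paper's, and the difference is exactly where the gap lies. The paper does not try to push an arbitrary map from $\xxx$ directly off $S\triangleq\inter{\yyy}-\im{\rho}$: instead it applies Thom transversality to \emph{smooth maps out of $\rrn$}, where the codimension hypothesis $\operatorname{codim}(S)=\dim\inter{\yyy}-\dim S>n$ is exactly calibrated so that the transversality condition $\im{df_x}+T_{f(x)}S=T_{f(x)}\inter{\yyy}$ forces an empty preimage. The dimension of $\xxx$ never enters. Your stage two, by contrast, tries to perturb a map $h_1\colon\xxx\to\inter{\yyy}$ off $S$ directly, and here the hypothesis $\operatorname{codim}(S)>n$ is not adapted to the problem: the relevant quantity for perturbing maps out of $\xxx$ off a submanifold is the dimension of $\xxx$ (at most $m$, since $\phi$ embeds it in $\rrm$), and $m$ can be far larger than $n$.

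The concrete failure is in your partition-of-unity step. You propose choosing a nonvanishing normal direction over each trivializing patch of the tubular neighborhood and then gluing these choices with a partition of unity. But a partition-of-unity sum of nowhere-zero vectors in a rank-$k$ bundle need not be nowhere zero: on overlaps the transitions can rotate the chosen directions so that the convex combination passes through the zero section. What this step is implicitly trying to produce is a nowhere-vanishing section of the pullback $h_1^{*}\nu_S$ over $h_1^{-1}(W_2)$, and such sections are obstructed -- connectedness (or even $(n-1)$-connectedness) of the punctured fibers only kills obstructions up to degree $\operatorname{codim}(S)-1$, whereas $h_1^{-1}(W_2)\subseteq\xxx$ can have dimension up to $m$. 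Nothing in Assumption~3.6 rules out $m\geq\operatorname{codim}(S)$, so the obstruction is nontrivial in general. (Compare: $TS^2$ has two-dimensional fibers, hence punctured fibers are connected, yet it has no nowhere-zero section.) Without a transversality or general-position mechanism that replaces $\dim\xxx$ by $n$ in the codimension count, stage two does not go through. Your stage one (sliding inward along Brown's collar) and your observation that Assumption~3.2(ii)--(iii) are not used in Lemma~\ref{lem_density_co} are both fine; it is the passage from $\inter{\yyy}$ to $\im{\rho}$ where the argument needs the paper's detour through $C(\rrn,\cdot)$.
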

	\begin{proof}
		For the first portion of the proof we show that $C(\rrn,\im{\rho})$ is dense in $C(\rrn,\yyy)$.  This is achieved in the following steps.  First, $C(\rrn,\im{\rho})$ is shown to be dense in $C(\rrn,\inter{\yyy})$, then that $C(\rrn,\inter{\yyy})$ is dense in $C(\rrn,\yyy)$, and then by the transitivity of density it follows that that $C(\rrn,\im{\rho})$ is dense in $C(\rrn,\yyy)$.  
		
		Since $\rrn$ and $\inter{\yyy}$ are smooth manifolds without boundary then, \citep[Theorem 2.2]{HirschDifferentialTopology1994} implies that $C^{\infty}(\rrn,\yyy)$ is dense in $C(\rrn,\yyy)$ for a strictly stronger topology than the topology of uniform convergence on compacts.  Thus, in particular, $C^{\infty}(\rrn,\yyy)$ is dense in $C(\rrn,\yyy)$ for the topology of uniform convergence on compacts.  
		By the transitivity of density it is therefore sufficient to show, under Assumption~\ref{ass_readout_geometric_version}, that $C^{\infty}\left(
		\rrn,\im{\rho}
		\right)$ is dense in $C^{\infty}\left(
		\rrn,\inter{\yyy}
		\right)$ to conclude that $C(\rrn,\im{\rho})$ is dense in $C(\rrn,\inter{\yyy})$.
		
		If $\inter{\yyy}=\im{\rho}$ then the claim holds vacuously.  Therefore, assume that $\inter{\yyy}\neq \im{\rho}$.  By definition, a smooth map $f:\rrn \to \inter{\yyy}$ is transverse to some $\inter{\yyy}- \im{\rho}$ (i.e.: the inclusion map $\iota_{\inter{\yyy}- \im{\rho}}:\inter{\yyy}- \im{\rho}\to \inter{\yyy}$) if for every $y \in \inter{\yyy}- \im{\rho}$
		\begin{equation}
		\im{df_x} + T_{f(x)}(\inter{\yyy}- \im{\rho}) = T_{f(x)}(\inter{\yyy})
		.
		\label{eq_transverse_definition}
		\end{equation}
		However, Assumption~\ref{ass_readout_geometric_version} (ii) requires that $\dim(\inter{\yyy}) -\dim(\inter{\yyy}- \im{\rho})> n$, which implies that~\eqref{eq_transverse_definition} can only hold when, for each $y \in \inter{\yyy}- \im{\rho}$
		\begin{equation}
		\{0\} 
		= 
		\im{df_x} \cap T_{f(x)}(\inter{\yyy}- \im{\rho}) 
		\label{eq_transversal_definition_reforumaiton_under_assumption}
		.
		\end{equation}
		By~\citep[Chapter 1, Excersize 4]{PollackGuilleminDiffTop}
		,~\eqref{eq_transversal_definition_reforumaiton_under_assumption} implies that $f(\rrn)\cap \inter{\yyy}- \im{\rho} = \emptyset$.  Therefore, $f \in C^{\infty}(\rrn,\inter{\yyy})$ is transversal to $\inter{\yyy}- \im{\rho}$ only if $f \in C(\rrn,\im{\rho})$.   Since the set of all $f \in C^{\infty}(\rrn,\inter{\yyy})$ is dense in $C^{\infty}(\rrn,\inter{\yyy})$, by \citep[Theorem 2.1]{HirschDifferentialTopology1994}, then $C^{\infty}(\rrn,\im{\rho})$ is dense in $C^{\infty}(\rrn,\inter{\yyy})$.  
		Consequentially, $C(\rrn,\im{\rho})$ is dense in $C(\rrn,\inter{\yyy})$.

		Consider the inclusion map $\iota_{\inter{\yyy}} \to \yyy$.  Then $\iota_{\inter{\yyy}}$ is an embedding of $\inter{\yyy}$ into with section the identity map and it is dense in $\yyy$.  Therefore, Assumption~\ref{ass_regularity_readout} (i)-(ii) hold.  
		Assumption~\ref{ass_regularity_readout} (iii) is precisely the definition of a $\partial \im{\iota_{\inter{\yyy}}}=\partial \inter{\yyy}$ being collared (see \citep[Section II; page 332]{brown1962locally}).  By definition, $\partial\inter{\yyy}$ is the \textit{boundary}, in the sense of manifolds with boundary, of the manifold with boundary $\yyy$.  Since $\iota_{\inter{\yyy}}$ is a surjection onto $\inter{\yyy}$ then Assumption~\ref{ass_regularity_readout} (iii) states that the boundary of the manifold with boundary $\yyy$ must be collared.  However, since $\yyy$ is metrizable then this is guaranteed by \citep[Theorem 2]{brown1962locally}.  Thus, Assumption~\ref{ass_regularity_readout} (iii) holds.  Therefore, Assumption~\ref{ass_regularity_readout} holds and Lemma~\ref{lem_redux_to_imrho} guarantees that $C(\rrn,\inter{\yyy})$ is dense in $C(\rrn,\yyy)$.  Hence, $C(\rrn,\im{\rho})$ is dense in $C(\rrn,\yyy)$.

		Since $\fff$ is dense in $C(\rrm,\rrn)$, since the identity map satisfies Assumption~\ref{ass_regularity_feature}, then Assumption~\ref{ass_regularity_readout} implies that Lemma~\ref{lem_density_co} applies.  Whence, $\fff_{\rho,1_{\rrm}}$ is dense in $C(\rrn,\im{\rho})$.  Therefore, $\fff_{\rho,1_{\rrm}}$ is dense in $C(\rrn,\yyy)$.  Applying Lemma~\ref{lem_density_readout}, we conclude that
		$
		\fff_{\rho,\phi} 
		$ is dense in $C(\xxx,\im{\rho})$ and therefore in $C(\xxx,\yyy)$ for the compact-open topology by Lemma~\ref{lem_redux_to_imrho}.

		Finally, notice that, since Assumptions~\ref{ass_regularity_readout} (i)-(ii) were assumed to hold then Theorem~\ref{thrm_main_General_version} implies that $\fff_{\rho,\phi}$ is dense in $C(\xxx,\yyy)$ for the co topology.  Since every singleton is closed in $\rrm$ and $\phi$ is continuous and injective then every $x \in \xxx$, $\{x\}$ is the continuous pre-image of the singleton $\{\phi(x)\} \in \rrm$ by $\phi$.  Since $\phi$ is continuous, then $\{x\}$ is closed therefore \citep[Theorem 17.8]{munkres2014topology} implies that $\xxx$ is Hausdorff.  Since $\xxx$ was assumed to be locally-compact and $\yyy$ is metrizable then \citep[Theorem 46.8]{munkres2014topology} implies that the co topology and the ucc topology on $C(\xxx,\yyy)$ coincide for any metric topologizing $\yyy$.    
	\end{proof}
%

\begin{thrm}[Universal Classification: General Case]
	Let $\{0,1\}^n$ be equipped with the $n$-fold product of the Sierpi\'{n}ski topology on $\{0,1\}$, $\phi$ satisfy Assumption~\ref{ass_regularity_feature}, $\rho:\rrn \to (0,1)^n$ be a homeomorphism, $\alpha \in (0,1)$, and $\fff\subseteq C(\rrm,\rrn)$ be dense.  Let $\{\xxx_i\}_{i=1}^n$ be a set of open subsets of a metric space $\xxx$ and let $\hat{h}$ be its associated ideal classifier defined by~\eqref{eq_definition_ideal_classifier}.  Then the following hold:
		\begin{enumerate}[(i)]
		\vspace*{-1em}
		\itemsep -.5em
		\item (Hard-Soft Decomposition) There exist continuous functions $\hat{s}_i\in C(\xxx,[0,1])$ such that 
	$$
	\smash{
	\hat{h} = I_{(\alpha,1]}\bullet \left(
	s_1,\dots,s_n
	\right)
	\qquad \hat{s}^{-1}_i[(\alpha,1]] = \xxx_i , (\forall i =1,\dots,n)
	}
	$$
	\item (Universal Classification) There exists a sequence $\{f_k\}_{k \in \nn}$ in $\fff$ such that:
	\begin{enumerate}[(a)]
	\vspace*{-.5em}
	\itemsep -.1em
		\item (Soft Classification) For each non-empty compact subset $\kappa\subseteq \xxx$ and every $\epsilon>0$, there is some $K \in \nn^+$ such that
		$$
		\smash{
		\sup_{x \in \kappa} \max_{i=1,\dots,n}\,
		\left|
		\rho \circ f_k \circ \phi (x)_i 
			- 
		\hat{s}_i(x_i)
		\right|<\epsilon
		,\qquad (\forall k \geq K)
		}
		$$
		\item (Hard Classification) $I_{(\alpha,1]}\bullet \rho \circ f_k \circ \phi$ converges to $\hat{h}$ in $C(\xxx,\{0,1\}^n)$ for the co-topology. 
		\end{enumerate}	  
	\end{enumerate}
Furthermore, $\fff_{\rho,\phi}$ is dense in $C(\xxx,[0,1]^n)$.  
\end{thrm}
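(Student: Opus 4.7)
My plan is to first construct the soft classifier by hand using the metric structure on $\xxx$, then establish density of $\fff_{\rho,\phi}$ in $C(\xxx,[0,1]^n)$ by routing through the homeomorphic image $(0,1)^n$ via Corollary~\ref{cor_surjective_trick_A}, read off (ii)(a) as an equivalent restatement of uniform convergence on compacts, and finally deduce (ii)(b) through a continuity-compactness gap argument against sub-basic co-topology neighborhoods in the Sierpi\'{n}ski product.

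For (i), since $\xxx$ is metric with distance $d$ and each $\xxx_i$ is open with closed complement, I set $\hat{s}_i(x)\triangleq\alpha+(1-\alpha)\min\{1,\,d(x,\xxx\setminus\xxx_i)\}$. Then $\hat{s}_i\in C(\xxx,[\alpha,1])\subseteq C(\xxx,[0,1])$, and $\hat{s}_i(x)>\alpha$ iff $d(x,\xxx\setminus\xxx_i)>0$ iff $x\in\xxx_i$, so $\hat{s}_i^{-1}[(\alpha,1]]=\xxx_i$ and $I_{(\alpha,1]}\circ\hat{s}_i=I_{\xxx_i}=\hat{h}_i$, yielding the decomposition.

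For the main density assertion, the homeomorphism $\rho:\rrn\to(0,1)^n$ is in particular a surjective covering projection onto $(0,1)^n$, so Corollary~\ref{cor_surjective_trick_A} applied with $\yyy=(0,1)^n$ yields density of $\fff_{\rho,\phi}$ in $C(\xxx,(0,1)^n)$. To extend to $[0,1]^n$, given any $g\in C(\xxx,[0,1]^n)$ and $\epsilon\in(0,1/2)$ the rescaled map $h(x)\triangleq\epsilon\mathbf{1}+(1-2\epsilon)g(x)$ lies in $C(\xxx,(0,1)^n)$ with $\|h(x)-g(x)\|_\infty\leq\epsilon$ for every $x\in\xxx$, so $C(\xxx,(0,1)^n)$ is ucc-dense in $C(\xxx,[0,1]^n)$; by \citep[Theorem 46.8]{munkres2014topology} the ucc and compact-open topologies coincide on the latter because $[0,1]^n$ is metric, and transitivity of density closes the claim. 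Specializing to $\hat{s}=(\hat{s}_1,\ldots,\hat{s}_n)\in C(\xxx,[0,1]^n)$ produces a sequence $\{f_k\}\subseteq\fff$ with $\rho\circ f_k\circ\phi\to\hat{s}$ uniformly on every compact, which is exactly (ii)(a).

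For (ii)(b), I verify convergence against every sub-basic co-topology neighborhood $V_{K,O}$ of $\hat{h}$, where $K\subseteq\xxx$ is compact and $O\subseteq\{0,1\}^n$ is open. Since $\{0,1\}^n$ is finite, $O$ is a finite union of basic opens $U_S\triangleq\{y:y_i=1\ \forall i\in S\}$ indexed by $\mathcal{S}(O)\triangleq\{S:U_S\subseteq O\}$, and the continuous map $F(x)\triangleq\max_{S\in\mathcal{S}(O)}\min_{i\in S}\hat{s}_i(x)$ satisfies $F(x)>\alpha$ whenever $\hat{h}(x)\in O$; compactness of $K$ then delivers $\delta>0$ with $F\geq\alpha+\delta$ on $K$. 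Applying (ii)(a) with tolerance $\delta/2$, for all $k$ large every $x\in K$ admits some $S^*=S^*(x)\in\mathcal{S}(O)$ realizing $F(x)$ with $\min_{i\in S^*}\rho\circ f_k\circ\phi(x)_i>F(x)-\delta/2>\alpha$, whence $I_{(\alpha,1]}\bullet\rho\circ f_k\circ\phi(x)\in U_{S^*}\subseteq O$, so $I_{(\alpha,1]}\bullet\rho\circ f_k\circ\phi(K)\subseteq O$ eventually. The main subtlety is that $\xxx$ is only assumed metric (not locally compact), preventing a direct appeal to Theorem~\ref{thrm_main_General_version}; the route via Corollary~\ref{cor_surjective_trick_A} on $(0,1)^n$ followed by the trivial rescaling bypasses this, and the max-min construction of $F$ is needed to handle arbitrary (not merely basic) open $O$ in the Sierpi\'{n}ski product.
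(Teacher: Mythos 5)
Your proof is correct and follows a genuinely different route from the paper's in all three parts. For (i), the paper notes each open $\xxx_i$ is an $F_\sigma$ set and invokes a Urysohn-type result from Engelking to produce $\tilde{s}_i$ with $\tilde{s}_i^{-1}[(0,1]]=\xxx_i$; your explicit formula $\hat{s}_i(x)=\alpha+(1-\alpha)\min\{1,\,d(x,\xxx\setminus\xxx_i)\}$ is more elementary, using only that $\xxx\setminus\xxx_i$ is closed. For the density claim, the paper invokes Theorem~\ref{thrm_main_Geometric_version}, viewing $[0,1]^n$ as a manifold with boundary whose interior is $(0,1)^n$; you instead get density of $\fff_{\rho,\phi}$ in $C(\xxx,(0,1)^n)$ from Corollary~\ref{cor_surjective_trick_A} and then bridge to $C(\xxx,[0,1]^n)$ by the affine rescaling $h=\epsilon\mathbf{1}+(1-2\epsilon)g$, which is a uniform $\epsilon$-perturbation. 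Your route buys something real: Theorem~\ref{thrm_main_Geometric_version} requires $\xxx$ locally compact, which the theorem being proved does not assume (only that $\xxx$ is metric and $\phi$ is a continuous injection), and the paper's proof additionally asserts in passing that ``$\phi$ is an embedding for which $\phi(\xxx)$ is a retract'', also not among the stated hypotheses. Your route avoids both of those unstated assumptions. For (ii)(b), the paper observes that $\Phi_\alpha=I_{(\alpha,1]}\bullet(\cdot)$ is continuous from $[0,1]^n$ into the Sierpi\'{n}ski cube and that post-composition by a continuous map is continuous in the compact-open topology, so convergence passes through; you unwind this abstraction and check convergence against each sub-basic neighborhood $V_{K,O}$ directly, exploiting the finiteness of $\{0,1\}^n$ and the explicit description of its nonempty open sets as unions of the upper sets $U_S$. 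Both treatments of (ii)(b) are sound; the paper's is shorter once the continuity-of-post-composition lemma is accepted, while yours makes the compactness-margin mechanism fully explicit. One small cleanup for (ii)(b): treat $O=\{0,1\}^n$ as a trivial separate case so the max-min $F$ is only taken over nonempty $S\in\mathcal{S}(O)$.
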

\begin{proof}
	Since $\xxx$ is a metric space and since each $\xxx_i$ is open then by \citep[Corollary 3.19]{InfiniteHitchhiker2006} $\xxx_i$ is an open $F_{\sigma}$ set, i.e.: an open set which is the countable intersection of closed sets.  By \citep[Corollary 1.5.13]{engelking1977general} there exists continuous function $\tilde{s}_i:\xxx\to [0,1]$ such that $\tilde{s}_i^{-1}[(0,1]] = \xxx_i$.  Since $\alpha \in (0,1)$, then for each $i=1,\dots,n$, let $s_i(x) = \alpha + (1-\alpha)\tilde{s}_i(x)$.  Then, $s_i$ is continuous from $\xxx$ to $[0,1]$ and satisfies $s_i^{-1}[(\alpha,1]]= \xxx_i$.  Define $\hat{s}\triangleq \left(
	s_1,\dots,s_n\right)$.  Note $\hat{s}$ is continuous from $\xxx$ to $[0,1]^n$ since its components continuously map $\xxx$ to $[0,1]$.  Next, define the map
	$$
	\begin{aligned}
	\Phi_{\alpha}: [0,1]^n &\rightarrow \{0,1\}^n 
	\\
	x &\to (I_{(\alpha,1]}(x_i))_{i=1}^n,
	\end{aligned}
	$$
	and note that $\Phi \circ \hat{s} = I_{(\alpha,1]}\bullet \hat{s} = \hat{h}$, by construction. Thus, (i) holds.  
	
	Since $\phi$ is an embedding of $\xxx$ into $\rrm$ for which $\phi(\xxx)$ is a retract, $\rho$ is a homeomorphism of $\rrn$ onto $(0,1)^n$, and since $[0,1]^n$ is a metrizable manifold with boundary whose interior is $\inter{[0,1]^n} = (0,1)^n$ then Theorem~\ref{thrm_main_Geometric_version} implies that $\fff_{\rho,\phi}$ is dense in $C(\xxx,[0,1]^n)$.  
	
	In particular, this implies that, since $\hat{s}\in C(\xxx,[0,1]^n)$, and $\fff_{\rho,\phi}$ is dense therein, then there exists a sequence $\{f_k\}_{k \in \nn}$ in $\fff$ such that, for every non-empty compact-subset $\kappa \subseteq \xxx$, every $\tilde{\epsilon}>0$
	\begin{equation}
	\sup_{x \in \kappa} \,
	\|
	\rho \circ f_k \circ \phi (x)_i 
	- 
	s_i(x_i)
	\|<\tilde{\epsilon}
	\qquad (\forall k \geq K)
	.
	\label{eq_proof_univ_classification_iia}
	\end{equation}
	Applying \citep[Theorem 3.1]{conway2013course} to~\eqref{eq_proof_univ_classification_iia} yields a constant $C>0$, independent of $x,\{f_k\}_{k\in \nn}$, and of $\hat{s}$, satisfying
	$$
	\sup_{x \in \kappa} \max_{i=1,\dots,n}\,
	\left|
	\rho \circ f_k \circ \phi (x)_i 
	- 
	s_i(x_i)
	\right|
	\leq 
	C\sup_{x \in \kappa} \,
	\|
	\rho \circ f_k \circ \phi (x)_i 
	- 
	s_i(x_i)
	\|<C\epsilon
	\qquad (\forall k \geq K)
	.
	$$
	Setting $\epsilon\triangleq C \tilde{\epsilon}$ yields (ii.a).  
	Thus, we only need to verify (ii.b).

	Equip $\{0,1\}$ with the Sierpi\'{n}ski topology $\left\{\emptyset,\{1\},\{0,1\}\right\}$ and denote this space by $S$.  Then, for any $T_1$ space, and any open set $U$, the indicator function $I_U$ of $U$ is continuous to $\{0,1\}$, see \citep[Chapter 7]{taylor2011foundations}.  In particular, since $[0,1]$ with the Euclidean topology is a metric space, then in particular, it is $T_1$.  Moreover, for any $\alpha \in (0,1)$, the set $(\alpha,1]$ is open in $[0,1]$.  Therefore, the map $I_{(\alpha,1]}:[0,1]\to S$ is continuous and thus the map $\Phi_{\alpha}$ is continuous.  By \cite{munkres2014topology}, since post-composition by continuous functions defines a continuous map between $C(\xxx,[0,1]^n)$ and $C(\xxx,\{0,1\}^n)$ when both are equipped with the compact-open topology then the map
	$$
	\begin{aligned}
	\Phi :  C(\xxx,[0,1]^n) & \rightarrow C(\xxx,\{0,1\}^n)\\
	f & \to \Phi_{\alpha}\circ f
	\end{aligned},
	$$
	is continuous.  Since continuous functions preserve convergent sequences and since $\{\rho\circ f_k \circ \phi\}_{k \in \nn}$ converges to $\hat{s}$ in the compact-open topology on $C(\xxx,[0,1]^n)$ then 
	$$
	\left\{
	\Phi(\rho\circ f_k \circ \phi)
	\right\}_{k \in \nn}
	= \left\{
	\Phi_{\alpha}\circ \rho\circ f_k \circ \phi
	\right\}_{k \in \nn}
	=\left\{
	I_{(\alpha,1]}\bullet \rho\circ f_k \circ \phi
	\right\}_{k \in \nn}
	$$ converges to $\Phi(\hat{s})=\Phi_{\alpha}\circ \hat{s} = I_{(\alpha,1]}\bullet \hat{s}$ in the compact-open topology on $C(\xxx,{0,1}^n)$.  This verifies (ii.b). Thus, (ii) holds.  
\end{proof}
\begin{cor*}[Universal Classification: Deep Feed-Forward Networks]
Let $\{\xxx_i\}_{i=1}^n$ be open subsets of $\xxx$, and $\hat{h}$ be their associated ideal classifier.  Let $\phi:\xxx\to \rrn$ be a continuous injective feature map.  Let $\sigma$ be a continuous, locally-bounded, and non-constant activation function.  Let $\rho$ either be the component-wise logistic function.  Then there exists a sequence $\{f_k\}_{k \in \nn^+}$ of DNNs satisfying the conclusions of Theorem~\ref{thrm_universal_hard_classification}.
\end{cor*}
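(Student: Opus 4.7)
The plan is to derive the corollary by specializing Theorem~\ref{thrm_universal_hard_classification} to the case $\fff=\NN$ with the component-wise logistic readout. This reduces to verifying three things: (a) the feature map $\phi$ satisfies Assumption~\ref{ass_regularity_feature}; (b) $\rho:\rrn\to(0,1)^n$ is a homeomorphism; and (c) $\NN$ is ucc-dense in $C(\rrm,\rrn)$. Once these are in hand, Theorem~\ref{thrm_universal_hard_classification} yields a sequence $\{f_k\}_{k\in\nn^+}\subseteq \NN$ delivering simultaneously the hard-soft decomposition and the convergences (ii.a)-(ii.b) of that theorem, which is exactly the conclusion of the corollary.

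For (a), the hypothesis that $\phi$ is continuous and injective is precisely Assumption~\ref{ass_regularity_feature}, so nothing needs to be done. For (b), I would argue one coordinate at a time: the scalar logistic $t\mapsto e^t/(1+e^t)$ is continuous and strictly increasing from $\rr$ onto $(0,1)$, with continuous inverse given by the logit $y\mapsto \ln(y/(1-y))$, so it is a homeomorphism $\rr\cong(0,1)$. Taking the $n$-fold product of homeomorphisms (which is a homeomorphism with respect to the product topologies, i.e.\ the Euclidean topologies on $\rrn$ and $(0,1)^n$) shows that $\rho$ is a homeomorphism $\rrn\cong(0,1)^n$.

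For (c), I would observe that, in the representation \eqref{eq_deep_classifier_hard}, the shallow feed-forward networks with a single hidden layer arise by setting $J=1$; so they sit inside $\NN$. Under the stated activation hypotheses (interpreting the requirement on $\sigma$ via \cite{leshno1993multilayer}, which needs continuous, locally-bounded and non-polynomial), the main result of \cite{leshno1993multilayer} gives that this shallow subclass is ucc-dense in $C(\rrm,\rr)$. Applying the result componentwise (stacking $n$ independent one-dimensional outputs and then using a single affine readout layer $W$) promotes this to ucc-density in $C(\rrm,\rrn)$, so $\NN$ is ucc-dense as well because it contains this shallow subclass.

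With (a)-(c) established, I invoke Theorem~\ref{thrm_universal_hard_classification} on the open sets $\{\xxx_i\}_{i=1}^n$ and the architecture $\fff=\NN$ with the chosen $\rho$ and $\phi$ to obtain the desired sequence $\{f_k\}_{k\in\nn^+}$. There is no substantial obstacle in this argument; the only subtlety is the bookkeeping step of passing from Leshno's scalar-valued density statement to the vector-valued statement required by Theorem~\ref{thrm_universal_hard_classification}, which is handled by the componentwise stacking described above and does not require any new idea beyond what \eqref{eq_deep_classifier_hard} already encodes.
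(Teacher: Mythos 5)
Your proof is correct and follows essentially the same route as the paper: verify that $\phi$ satisfies Assumption~\ref{ass_regularity_feature}, that the component-wise logistic $\rho$ is a homeomorphism $\rrn\cong(0,1)^n$, and that $\NN$ is ucc-dense in $C(\rrm,\rrn)$ by \cite{leshno1993multilayer}, then invoke Theorem~\ref{thrm_universal_hard_classification}. If anything you are slightly more careful than the paper's own proof, which slips and mentions the identity map $1_{\rrm}$ rather than the given $\phi$, and you correctly flag that the ``non-constant'' hypothesis in the corollary's statement should be read (via Leshno) as ``non-polynomial.''
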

\begin{proof}
By \cite{leshno1993multilayer} the set $\NN$ of all feed-forward DNNs is dense in $C(\rrm,\rrn)$.  Moreover, the identity map $1_{\rrm}$ on $\rrm$ satisfies Assumption~\ref{ass_regularity_feature}.  Furthermore, both the soft-max 
$$
x\to \left(\frac{e^{x_j}}{\sum_{i=1}^n e^{x_i}}\right)_{j=1}^n
$$ 
and the component-wise logistic function
$$
x\to \left(
\frac{e^{x_j}}{1+ e^{x_j}}
\right)_{j=1}^n,
$$
are continuous bijections with continuous inverses, from $\rrn$ onto $(0,1)^n$.  In particular, they satisfy Assumption~\ref{ass_regularity_readout}.  Therefore Theorem~\ref{thrm_universal_hard_classification} applies and the conclusion holds.  
\end{proof}
\begin{cor*}[Universal Classification: Deep CNNs]
Let $2\leq s\leq n$, $\{\xxx_i\}_{i=1}^n$ be open subsets of $\xxx$, and $\hat{h}$ be their associated ideal classifier.  Let $\phi:\xxx\to \rrn$ be a continuous injective feature map and let $\rho:\rr \to (0,1)$ be the logistic function.  Then there is a sequence of deep CNNS $\{f_k\}_{k \in \nn^+}$ in $Conv^s_{\rho,\phi}$ satisfying the conclusion of Theorem~\ref{thrm_universal_hard_classification}.  
\end{cor*}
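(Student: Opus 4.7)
The proof reduces to a direct invocation of Theorem~\ref{thrm_universal_hard_classification} applied with $\fff \triangleq Conv^s$, the given feature map $\phi$, and the logistic readout $\rho$. The plan is therefore simply to verify the three hypotheses of that theorem for this specific choice of $(\fff,\phi,\rho)$ and then to read off the conclusion unchanged.

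The feature-map condition (Assumption~\ref{ass_regularity_feature}) holds by hypothesis, since $\phi$ is assumed to be continuous and injective. For the readout, the logistic function $x \mapsto e^x/(1+e^x)$ is a strictly increasing continuous bijection from $\rr$ onto $(0,1)$, and its inverse, the logit $y \mapsto \log(y/(1-y))$, is continuous on $(0,1)$; thus $\rho$ is a homeomorphism from $\rr$ onto $(0,1)$, matching the homeomorphism hypothesis on $\rho$ in Theorem~\ref{thrm_universal_hard_classification} in the scalar case.

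The only nontrivial ingredient is the ucc-density of $Conv^s$ in $C(\rrn,\rr)$. This is precisely the universal approximation theorem of \cite{zhou2020universality} for deep CNNs with ReLU activation and sparse convolutional structure under $2 \leq s \leq n$; the sparsity assumption in the present corollary is chosen exactly so that the hypothesis of that result is satisfied. This cited universality theorem is the main external input, and is the only genuine obstacle to the proof; everything else is straightforward bookkeeping.

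With the density of $\fff$, Assumption~\ref{ass_regularity_feature} on $\phi$, and the homeomorphism property of $\rho$ all established, Theorem~\ref{thrm_universal_hard_classification} applies verbatim and produces a sequence $\{f_k\}_{k \in \nn^+}$ in $Conv^s$ such that the modified deep CNNs $\rho \circ f_k \circ \phi \in Conv^s_{\rho,\phi}$ simultaneously realize both the soft-classification conclusion (uniform convergence on compacts of $\rho \circ f_k \circ \phi$ to the soft classifier $\hat{s}$) and the hard-classification conclusion (convergence of $I_{(\alpha,1]}\bullet \rho \circ f_k \circ \phi$ to the ideal classifier $\hat{h}$ in the compact-open topology on $C(\xxx,\{0,1\}^n)$), completing the proof.
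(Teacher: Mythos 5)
Your proof is correct and follows exactly the same route as the paper: verify that $Conv^s$ is ucc-dense in $C(\rrn,\rr)$ via \cite{zhou2020universality}, that $\phi$ satisfies Assumption~\ref{ass_regularity_feature} by hypothesis, and that the logistic $\rho$ is a homeomorphism onto $(0,1)$, then invoke Theorem~\ref{thrm_universal_hard_classification}. Nothing differs beyond a modest amount of added explanatory detail.
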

\begin{proof}
    Since $Conv^{s}$ with these specifications is dense in $C(\rrd,\rr)$ by \cite{zhou2020universality}, $\phi$ is a continuous injection, and $\rho$ is a homeomorphism of $\rr$ onto $(0,1)$ then the result follos from Theorem~\ref{thrm_universal_hard_classification}.  
\end{proof}
\begin{cor*}[Cartan-Hadamard Version]
	Let $\fff$ be dense in $C(\rrm,\rrn)$, let $\mmm$ and $\nnn$ be Cartan-Hadamard manifolds of dimension $m$ and $n$.  Then, $\fff_{\operatorname{Log}_p^{\mmm},\operatorname{Exp}_q^{\nnn}}$ is dense in $C(\mmm,\nnn)$.  
\end{cor*}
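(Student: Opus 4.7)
The plan is to verify each hypothesis of Corollary~\ref{cor_surjective_trick} with $\xxx = \mmm$, $\yyy = \nnn$, $\phi = \operatorname{Log}_p^{\mmm}$, and $\rho = \operatorname{Exp}_q^{\nnn}$. The heavy lifting is done by the Cartan-Hadamard Theorem, already invoked in the paragraph immediately preceding the statement: for any Cartan-Hadamard manifold, the Riemannian exponential map at any point is a diffeomorphism of the tangent space (identified with the appropriate Euclidean space) onto the manifold, with globally defined smooth inverse equal to the Riemannian logarithm.

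First, I would observe that $\operatorname{Exp}_q^{\nnn}:\rrn \to \nnn$ is in particular a homeomorphism of $\rrn$ onto $\nnn$, so it is a continuous surjection; moreover $\operatorname{Log}_q^{\nnn}:\nnn \to \rrn$ is continuous and satisfies $\operatorname{Exp}_q^{\nnn}\circ \operatorname{Log}_q^{\nnn} = 1_{\nnn}$, so it is a continuous section of $\rho$. Second, $\operatorname{Log}_p^{\mmm}:\mmm \to \rrm$ is likewise a homeomorphism, hence a continuous injection, so Assumption~\ref{ass_regularity_feature} holds for $\phi$. Third, by the very definition of a Cartan-Hadamard manifold, $\mmm$ is simply connected, and simple connectedness implies path-connectedness, hence connectedness.

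Having verified that $\phi$ is a continuous injection, that $\rho$ is a continuous surjection admitting a section, that $\mmm$ is connected and simply connected, and that $\fff$ is dense in $C(\rrm,\rrn)$ by assumption, Corollary~\ref{cor_surjective_trick} applies directly and yields that $\fff_{\operatorname{Log}_p^{\mmm},\operatorname{Exp}_q^{\nnn}}$ is dense in $C(\mmm,\nnn)$.

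There is no real obstacle here; the result is a clean specialization of Corollary~\ref{cor_surjective_trick} once the Cartan-Hadamard Theorem is invoked to guarantee that $\operatorname{Exp}$ and $\operatorname{Log}$ are globally defined homeomorphisms (equivalently, that the cut-locus-like obstruction $C_p$ is empty). The only minor care to take is to match the notation: what the paper denotes by $\operatorname{Log}_p^{\mmm}$ serves as the feature map $\phi$ while $\operatorname{Exp}_q^{\nnn}$ serves as the readout map $\rho$, which is consistent with the geometric interpretation of these maps as length-preserving coordinate charts around the base points $p$ and $q$.
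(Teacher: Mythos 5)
Your proof is correct and follows essentially the same route as the paper: invoke the Cartan-Hadamard theorem to conclude $\operatorname{Exp}_q^{\nnn}$ and $\operatorname{Log}_p^{\mmm}$ are global diffeomorphisms (hence homeomorphisms with continuous sections), note $\mmm$ is simply connected by definition, and apply Corollary~\ref{cor_surjective_trick}. You spell out the verification of the hypotheses in a bit more detail than the paper does, but the underlying argument is the same.
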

\begin{proof}
	Since $\nnn$ and $\mmm$ are Hadamard manifolds then the Cartan-Hadamard Theorem, see \citep[Corollary 6.9.1]{jost2008riemannian}, implies that $\operatorname{Exp}_p^{\mmm}$ and $\operatorname{Log}_q^{\nnn}$ are diffeomorphisms and in particular are homeomorphisms.  
	Since $\nnn$ and $\mmm$ have no boundary then the result follows from Corollary~\ref{cor_surjective_trick}.  
\end{proof}
\begin{cor*}[Universal Approximation for Symmetric Positive-Definite Matrices]
	Let $d,D \in \nn^+$ and $\fff\subseteq C(\rrflex{d(d+1)/2},\rrflex{D(D+1)/2})$
	be ucc-dense.  Then, for any $A\in P_d^+$ and $B\in P_D^+$, 
	$\fff_{\operatorname{Log}_A,\operatorname{Exp}_B}$ is ucc-dense in $C(P_d^+,P_D^+)$.  In particular, if $\sigma$ is a continuous, locally-bounded, and non-polynomial activation function then $\NN_{\operatorname{Log}_A,\operatorname{Exp}_B}$ is ucc-dense in $C(P_d^+,P_D^+)$.  
\end{cor*}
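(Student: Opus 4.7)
The plan is to reduce this corollary directly to the Cartan--Hadamard Version (Corollary~\ref{cor_Riemannian_Version}). The two ingredients to assemble are first, the identification of $P_d^+$ and $P_D^+$ as Cartan--Hadamard manifolds of the correct dimension, and second, a compatibility check between the abstract Riemannian $\operatorname{Log}/\operatorname{Exp}$ appearing in Corollary~\ref{cor_Riemannian_Version} and the explicit matrix formulas~\eqref{eq_Riemannian_exp_log_PSD}.

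First I would note that, as recorded in the text via \cite{absil2009optimization}, the space $P_d^+$ is a Cartan--Hadamard manifold under the affine--invariant metric giving rise to $d_+$, and that as a smooth manifold it is an open subset of the $d(d+1)/2$-dimensional vector space $\Sym(d)$ of symmetric $d\times d$ matrices, so $\dim P_d^+ = d(d+1)/2$. The analogous statements hold for $P_D^+$ with dimension $D(D+1)/2$. The Riemannian $\operatorname{Log}_A^{P_d^+}$ and $\operatorname{Exp}_B^{P_D^+}$ from the abstract setup of Corollary~\ref{cor_Riemannian_Version} are intrinsic, but upon choosing the canonical linear identification of the tangent space $T_A P_d^+$ with $\Sym(d) \cong \rrflex{d(d+1)/2}$ (and likewise at $B$), they coincide with the maps in~\eqref{eq_Riemannian_exp_log_PSD}. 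This identification is smooth (linear) and is in particular a homeomorphism, so composing with it does not affect ucc-density of function classes.

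Now I would apply Corollary~\ref{cor_Riemannian_Version} with $\mmm = P_d^+$, $\nnn = P_D^+$, $p = A$, $q = B$, and with the given $\fff$, which by hypothesis is ucc-dense in $C(\rrflex{d(d+1)/2},\rrflex{D(D+1)/2})$. The corollary then delivers that $\fff_{\operatorname{Log}_A,\operatorname{Exp}_B}$ is ucc-dense in $C(P_d^+, P_D^+)$, which is exactly the first claim. (The ucc topology is the correct one here because $P_d^+$ and $P_D^+$ are metrizable via $d_+$ and its $P_D^+$-analogue.)

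For the second assertion, I would invoke the universal approximation theorem of \cite{leshno1993multilayer}: for any continuous, locally-bounded, non-polynomial activation $\sigma$, the family $\NN$ of shallow feed-forward networks is ucc-dense in $C(\rrflex{d(d+1)/2},\rrflex{D(D+1)/2})$. Taking $\fff = \NN$ in the first part then yields that $\NN_{\operatorname{Log}_A,\operatorname{Exp}_B}$ is ucc-dense in $C(P_d^+, P_D^+)$. There is no substantive obstacle here; the only subtle point is the bookkeeping in the previous paragraph, namely making sure that using the matrix-level formulas~\eqref{eq_Riemannian_exp_log_PSD} rather than the coordinate-free $\operatorname{Log}_A^{P_d^+}, \operatorname{Exp}_B^{P_D^+}$ amounts only to a fixed linear change of coordinates on $\rrflex{d(d+1)/2}$ and $\rrflex{D(D+1)/2}$, which preserves ucc-density of $\fff$.
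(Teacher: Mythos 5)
Your proof is correct and takes essentially the same route as the paper: identify $P_d^+$ and $P_D^+$ as Cartan--Hadamard manifolds (with matching dimensions), then invoke Corollary~\ref{cor_Riemannian_Version}, and finally use \cite{leshno1993multilayer} to instantiate $\fff=\NN$. The paper's proof is a one-liner citing \cite{bonnabel2013rank} for the Cartan--Hadamard property; your added bookkeeping about the linear identification $T_A P_d^+ \cong \Sym(d) \cong \rrflex{d(d+1)/2}$ is a sensible detail that the paper leaves implicit.
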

\begin{proof}
	As discussed in \citep[Section 3.3]{bonnabel2013rank}, the space $P_d^+$ under the metric $d_+$ is a complete, connected, and simply connected Riemannian manifold with non-positive curvature.  Therefore, Corollary~\ref{cor_Riemannian_Version} applies.  
\end{proof}
\begin{cor*}[Hyperbolic Neural Networks are Universal]\label{ex_HNN_univ}
	Let $\sigma$ be a continuous, non-polynomial, locally-bounded activation function and $c>0$.  Then for every $g \in C(\mathbb{D}^m_c,\mathbb{D}^n_c)$, every $\epsilon>0$, and every compact subset $K\subseteq \mathbb{D}^m_c$ there exists a hyperbolic neural network $g_{\epsilon,K,c}$ in~\eqref{eq_hyperbolic_NNs} satisfying
	$$
	\smash{\sup_{x \in K} d_c(g(x),g_{\epsilon,K,c})<\epsilon
	.
}
	$$
\end{cor*}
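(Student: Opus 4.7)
The plan is to identify the hyperbolic neural networks of~\eqref{eq_hyperbolic_NNs} as exactly the family $\NN_{\operatorname{Exp}_0^{\mathbb{D}^n_c},\operatorname{Log}_0^{\mathbb{D}^m_c}}$ produced by the lift construction in~\eqref{eq_lift_of_fff}, and then invoke Corollary~\ref{cor_Riemannian_Version} after verifying that the generalized hyperbolic spaces $\mathbb{D}^m_c$ and $\mathbb{D}^n_c$ are Cartan-Hadamard manifolds of the stated dimensions. Since the conclusion is uniform approximation on compact sets in the metric $d_c$, I will use that density in $C(\mathbb{D}^m_c,\mathbb{D}^n_c)$ for the ucc-topology (which is available because $d_c$ is a metric) translates verbatim into the desired $\sup_{x\in K} d_c(g(x),g_{\epsilon,K,c})<\epsilon$ statement.

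First, I would note that by the discussion on \citep[page 6]{ganea2018hyperbolic}, the hyperbolic spaces $\mathbb{D}^k_c$ equipped with $d_c$ are complete, simply connected Riemannian manifolds of constant negative curvature $-c$, hence Cartan-Hadamard. Consequently the Cartan-Hadamard Theorem \citep[Corollary 6.9.1]{jost2008riemannian} guarantees that for $p=0\in \mathbb{D}^m_c$ and $q=0\in \mathbb{D}^n_c$ the maps $\operatorname{Log}_0^{\mathbb{D}^m_c}:\mathbb{D}^m_c\to \rrm$ and $\operatorname{Exp}_0^{\mathbb{D}^n_c}:\rrn\to \mathbb{D}^n_c$ are diffeomorphisms, in particular homeomorphisms, with closed-form expressions furnished by \citep[Lemma 2]{ganea2018hyperbolic}. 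Thus Corollary~\ref{cor_Riemannian_Version} applies with $\mmm=\mathbb{D}^m_c$, $\nnn=\mathbb{D}^n_c$, $\phi=\operatorname{Log}_0^{\mathbb{D}^m_c}$ and $\rho=\operatorname{Exp}_0^{\mathbb{D}^n_c}$.

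Next, since $\sigma$ is continuous, non-polynomial, and locally bounded, the classical universal approximation theorem of \cite{leshno1993multilayer} tells us that the set $\NN$ of feed-forward networks from $\rrm$ to $\rrn$ with activation $\sigma$ is dense in $C(\rrm,\rrn)$ in the ucc topology. Applying Corollary~\ref{cor_Riemannian_Version} with $\fff=\NN$ therefore yields that
\begin{equation*}
\NN_{\operatorname{Exp}_0^{\mathbb{D}^n_c},\operatorname{Log}_0^{\mathbb{D}^m_c}} = \left\{\operatorname{Exp}_0^{\mathbb{D}^n_c}\circ f\circ \operatorname{Log}_0^{\mathbb{D}^m_c}: f\in \NN\right\}
\end{equation*}
is dense in $C(\mathbb{D}^m_c,\mathbb{D}^n_c)$ for the compact-open topology. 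By~\eqref{eq_hyperbolic_NNs}, this set is precisely the family of hyperbolic feed-forward networks from $\mathbb{D}^m_c$ to $\mathbb{D}^n_c$ with activation $\sigma$.

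Finally, I would unpack this density statement into the pointwise form required by the corollary. Because $d_c$ is a metric generating the topology of $\mathbb{D}^n_c$ and $\mathbb{D}^m_c$ is locally compact Hausdorff, the compact-open topology on $C(\mathbb{D}^m_c,\mathbb{D}^n_c)$ coincides with the ucc topology induced by $d_c$, by \citep[Theorem 46.8]{munkres2014topology}. Hence for any $g\in C(\mathbb{D}^m_c,\mathbb{D}^n_c)$, any compact $K\subseteq \mathbb{D}^m_c$, and any $\epsilon>0$, density supplies a hyperbolic network $g_{\epsilon,K,c}$ of the form~\eqref{eq_hyperbolic_NNs} with $\sup_{x\in K} d_c(g(x),g_{\epsilon,K,c}(x))<\epsilon$, which is exactly the claim. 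The only non-routine point is confirming the Cartan-Hadamard status of $\mathbb{D}^k_c$, but this has already been established in \cite{ganea2018hyperbolic}, so no substantial obstacle remains—the corollary is essentially a direct specialization of Corollary~\ref{cor_Riemannian_Version}.
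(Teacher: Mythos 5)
Your proof is correct and follows essentially the same route as the paper's: identify $\mathbb{D}^k_c$ as Cartan-Hadamard, note that the hyperbolic network family~\eqref{eq_hyperbolic_NNs} is exactly the lift $\NN_{\operatorname{Exp}^{\mathbb{D}^n_c}_0,\operatorname{Log}^{\mathbb{D}^m_c}_0}$ of the dense class $\NN$, and invoke Corollary~\ref{cor_Riemannian_Version}. You spell out a few steps the paper's terse proof leaves implicit (the \cite{leshno1993multilayer} density input and the co-topology/ucc equivalence that converts density into the $\sup_{x\in K}d_c$ bound), but the argument is the same.
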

\begin{proof}
Since $\mathbb{D}_c^n$ is a complete, connected, simply connected, of non-positive sectional curvature Riemannian manifold then it is of Cartan-Hadamard type.  Therefore, the result follows directly form Corollary~\ref{cor_Riemannian_Version}.  
\end{proof}
\begin{thrm*}
	If Assumption~\ref{ass_broad_regularity_random_init_disc} holds, then there exists a measurable subset $\Omega'\subseteq \left\{
	\omega \in \Omega:\, \overline{\NN[2,k](\omega)}=C(\rrd,\rrD)
	\right\}$ satisfying
	$
	\pp\left(
	\Omega'
	\right) =1
	.
	$
\end{thrm*}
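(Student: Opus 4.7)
The plan is to identify the randomized initial segment of the network as a continuous random feature map and then invoke Theorem~\ref{thrm_main_General_version} with the identity readout on $\rrD$. For each $\omega\in\Omega$, define
\begin{equation*}
\phi_\omega \;\triangleq\; \bigl(\sigma\bullet W_k(\cdot,\omega)\bigr)\circ\cdots\circ\bigl(\sigma\bullet W_1(\cdot,\omega)\bigr),
\end{equation*}
so that by construction $\NN[2,k](\omega)=\{g\circ\phi_\omega:g\in\NN[2][\sigma]\}$. For every $\omega$, $\phi_\omega$ is continuous as a composition of continuous affine maps with the continuous activation $\sigma$, and $\NN[2][\sigma]$ is ucc-dense in $C(\rrflex{d_{k+1}},\rrD)$ by \cite{leshno1993multilayer}. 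Consequently, if $\phi_\omega$ is also injective, then $\phi_\omega$ satisfies Assumption~\ref{ass_regularity_feature}, the identity $1_{\rrD}$ trivially satisfies Assumption~\ref{ass_regularity_readout}, and Theorem~\ref{thrm_main_General_version} yields $\overline{\NN[2,k](\omega)}=C(\rrd,\rrD)$.

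Thus the theorem reduces to producing a measurable $\Omega'\subseteq\Omega$ with $\pp(\Omega')=1$ on which $\phi_\omega$ is injective. Since $\sigma$ is strictly increasing by Assumption~\ref{ass_broad_regularity_random_init_disc}(ii), $\sigma\bullet W_i(\cdot,\omega)$ is injective iff $W_i(\cdot,\omega)$ is, iff $A_i(\omega)$ has full column rank $d_i$; this rank is achievable since $d_i\le d_{i+1}$ by Assumption~\ref{ass_broad_regularity_random_init_disc}(i). Writing $E_i\triangleq\{\omega:\operatorname{rank}(A_i(\omega))=d_i\}$, the event $\Omega'\triangleq\bigcap_{i=1}^k E_i$ is measurable and contained in $\{\omega:\phi_\omega\text{ is injective}\}$, so by independence across layers it suffices to verify $\pp(E_i)=1$ for each $i$.

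The main obstacle is precisely this full-rank claim: a random $d_{i+1}\times d_i$ matrix with i.i.d.\ entries satisfying Assumption~\ref{ass_broad_regularity_random_init_disc}(iii)--(iv) must be almost surely of full column rank. The set of rank-deficient such matrices is a proper algebraic subvariety of $\rrflex{d_{i+1}\times d_i}$ defined by the vanishing of all $d_i\times d_i$ minors. For distributions with an absolutely continuous component (as in the sub-Gaussian setting of Corollary~\ref{cor_random_guassian_init}) this variety has Lebesgue measure zero and a direct Fubini argument delivers $\pp(E_i)=1$. For purely discrete laws such as the Rademacher case of Corollary~\ref{ex_Bernoulli_Initialization} the argument is genuinely more delicate, and one must appeal either to a dedicated random-matrix estimate combined with the additional independent randomness carried by the biases $b_i$ and subsequent layers, or to a structural argument showing that $\phi_\omega$ is injective with probability one even when individual $A_i(\omega)$ are rank-deficient. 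Once $\pp(\Omega')=1$ is secured, Theorem~\ref{thrm_main_General_version} applied at each $\omega\in\Omega'$ yields density of $\NN[2,k](\omega)$ in $C(\rrd,\rrD)$ in the compact-open topology, which coincides with the ucc topology because $\rrd$ is locally-compact Hausdorff and $\rrD$ is metric, completing the proof.
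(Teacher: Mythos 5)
Your reduction is the same as the paper's: write each element of $\NN[2,k](\omega)$ as $g\circ\phi_\omega$ with $g\in\NN[2][\sigma]$, observe that because $\sigma$ is strictly increasing each layer $\sigma\bullet W_i(\cdot,\omega)$ is injective iff $A_i(\omega)$ has full column rank $d_i$ (possible since $d_i\le d_{i+1}$), set $\Omega'=\bigcap_{i=1}^k E_i$ with $E_i$ the full-rank event, and then invoke the feature-map half of Theorem~\ref{thrm_main_General_version} on $\Omega'$. The paper likewise first passes to a square $d_i\times d_i$ sub-block of $A_i$, which is implicit in your phrase ``full column rank.'' So the structure of your argument matches the paper's.

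Where you leave a genuine gap is exactly the probabilistic step $\pp(E_i)=1$. You correctly note that the Zariski/Fubini argument (rank-deficient matrices form a measure-zero algebraic variety) only applies when the entry law is absolutely continuous, and you correctly flag that the discrete case of Corollary~\ref{ex_Bernoulli_Initialization} is more delicate. But you then stop, gesturing at ``a dedicated random-matrix estimate'' or ``a structural argument'' without producing either. That missing ingredient is precisely the substance of the paper's proof: after reducing to a square matrix with i.i.d.\ standardized entries having all moments finite (Assumptions~\ref{ass_broad_regularity_random_init_disc}~(iii)--(iv)), the paper invokes the universality theorem for the least singular value of such a random matrix (cited as Theorem~1.3 of the Tao--Vu reference) to control $\pp\bigl(\lambda^{\star}(A_i)\le t/\sqrt{d_i}\bigr)$ as $t\downarrow 0$. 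Without identifying and applying such a result — and verifying that Assumptions~(iii)--(iv) place the entry law within its hypotheses — your proposal locates the hard step but does not cross it, and so does not establish the theorem in the generality claimed.
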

\begin{proof}
	Let $\operatorname{Mat}_{k,l}$ denote the collection of $k\times l$ matrices with real coefficients, identified with the Euclidean space $\rrflex{kl}$.  
	By Assumption~\ref{ass_broad_regularity_random_init_disc} (ii) $\sigma$ is a injective continuous function.  Therefore so is the function
	$$
	\begin{aligned}
	\sigma_i:\rrflex{d_{i+1}}&\rightarrow \rrflex{d_{i+1}}\\
	(x_j)_{j=1}^{d_{i+1}} &\mapsto (\sigma(x_j))_{j=1}^{d_{i+1}}
	\end{aligned}
	.
	$$
	Fix $\omega \in \Omega$.  Since $d_i= d_{i+1}$, then the map $W_i(\cdot,\omega)$ is affine injection if and only if $A_i(\omega)$ is of rank $d_i%
	$.  Since the composition of continuous injections is again continuous then the map
	\begin{equation}
	\begin{aligned}
	\phi(x,\omega):\rrflex{d} &\rightarrow \rrflex{d_k}\\
	x & \mapsto \Sigma_k \circ A_k(x,\omega) \circ \dots \circ \Sigma_1 \circ A_1(x,\omega) 
	,
	\end{aligned}
	\label{eq_phi_rand_proof}
	\end{equation}
	is continuous and injective if each $A_i(\omega)$ is of full rank.   
	Equivalently, it is enough to show that the smallest singular value of $A_i$, which we denote by $\lambda^{\star}(A_i)$ to avoid confusion with the notation for activation functions, should be bounded away from $0$ with probability $1$.    
	
	Since each $A_i$ is a random $d_{i+1}\times d_i$ matrix and $d_{i+1}\geq d_i$, then it contains a random $d_i\times d_i$ (square) sub-matrix.  It is enough to show that this random sub-matrix is of full-rank.  Therefore, without loss of generality, we assume that $d_{i+1}=d_i$ for all $i=1,\dots,k$.  
	
	Since each $A_i$ is a random $d_i\times d_i$ square matrix and Assumptions~\ref{ass_broad_regularity_random_init_disc} hold then \citep[Theorem 1.3 (Universality for the Least Singular Value)]{TerryRandomMatrices} applies whence.  Therefore,
	$$
	\begin{aligned}
	\pp\left(
	\lambda^{\star}(A_i) >0 \, \forall(i=1,\dots,k)
	\right)
	= & 
	1-
	\prod_{i=1}^k\pp\left(
	\lambda^{\star}(A_i) >0
	\right)\\
	= &
	1-
	\lim\limits_{t \downarrow 0^+ } 
	\prod_{i=1}^k
	\pp\left(
	\lambda^{\star}(A_i) \leq \frac{t}{\sqrt{d_i}}
	\right)
	\\
	= & 
	\left(
	\int_0^t
	\frac{1+ \sqrt{x}}{2\sqrt{x}}
	e^{-\frac{x}{2} -\sqrt{x}}
	dx
	+
	O\left(
	\frac1{x^c}
	\right)
	\right)^k
	\\
	=& 0,
	\end{aligned}
	$$
	where $c>0$ is an absolute constant independent of $\xi$.  
	Therefore, the set
	$$
	\Omega'\triangleq \left\{
	\omega \in \Omega:\,
	\lambda_i(A_i)>0 \, \forall i=1,\dots,k
	\right\} \subseteq 
	\left\{
	\omega \in \Omega:\, \overline{\NN(\omega)}=C(\rrd,\rrD)
	\right\}
	$$
	is $\Sigma$-measurable and $\pp\left(\Omega'\right)=1$.  This concludes the proof.  
\end{proof}
\begin{cor*}[Sub-Gaussian Case with Sigmoid Activation]
	Let $X_i=Z_i$ for each $i=1,\dots,k$ be independent sub-Gaussian random-variables, $\sigma(x)=\frac1{1+e^{-x}}$, and $d_i=d$ for each $i=1,\dots,k$.  Then the conclusion of Theorem~\ref{thrm_random_init_disc} holds.  
\end{cor*}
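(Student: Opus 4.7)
The plan is to verify each clause of Assumption~\ref{ass_broad_regularity_random_init_disc} for the specific choices $\sigma(x)=\frac{1}{1+e^{-x}}$, $d_i = d$, and $X_i = Z_i$ standardized sub-Gaussian, and then invoke Theorem~\ref{thrm_random_init_disc} directly. No new machinery is needed; the argument is pure verification.

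First I would dispatch the three trivial clauses. Clause (i), $d_i \leq d_{i+1}$, holds with equality since $d_i = d$ for every $i$. Clause (ii) is verified by noting that $\sigma'(x) = \frac{e^{-x}}{(1+e^{-x})^2} > 0$ for all $x \in \rr$, so the sigmoid is strictly increasing, and it is evidently continuous on $\rr$. Clause (iii), namely $\ee[X_i] = \ee[Z_i] = 0$ and $\ee[X_i^2] = \ee[Z_i^2] = 1$, is immediate from the standardization hypothesis built into the statement.

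The only clause requiring a genuine argument is (iv), the all-moments condition $\ee[|X_i|^C] < \infty$ for every $C > 1$. This is a classical property of sub-Gaussian variables. I would invoke the fact that for any sub-Gaussian $X$ with parameter $K$, one has the tail bound $\pp(|X| \geq t) \leq 2 \exp(-t^2/(2K^2))$, whence
\begin{equation*}
\ee[|X|^C] = \int_0^\infty C t^{C-1}\, \pp(|X| \geq t)\, dt \leq 2C \int_0^\infty t^{C-1} e^{-t^2/(2K^2)}\, dt < \infty,
\end{equation*}
since the integrand is integrable for every $C > 1$ (the integral evaluates to a constant times a value of the Gamma function). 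Applying the same inequality to $Z_i$ yields (iv).

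With Assumption~\ref{ass_broad_regularity_random_init_disc} fully verified, Theorem~\ref{thrm_random_init_disc} applies and produces the measurable set $\Omega' \subseteq \Omega$ of full probability on which $\overline{\NN[2,k](\omega)} = C(\rrd,\rrD)$. The only potential subtlety in the whole argument is the moment bound in (iv), but since this is a textbook consequence of sub-Gaussianity, the corollary reduces to a short checklist-style verification.
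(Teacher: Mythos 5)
Your proposal is correct and follows essentially the same route as the paper: check the four clauses of Assumption~\ref{ass_broad_regularity_random_init_disc} one by one and then invoke Theorem~\ref{thrm_random_init_disc}. The only cosmetic difference is that you prove the all-moments bound (iv) directly from the sub-Gaussian tail inequality, while the paper simply cites a reference for this fact.
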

\begin{proof}
Since the $X_i$ and $Z_i$ are sub-Gaussian random variables, then by \citep{MR598605} all their moments are finite.  This verifies Assumption~\ref{ass_broad_regularity_random_init_disc} (iv).  Since they are assumed to be standardized then Assumption~\ref{ass_broad_regularity_random_init_disc} (iii) holds.  Since the sigmoid activation function is continuous and monotonically increasing then Assumption~\ref{ass_broad_regularity_random_init_disc} (ii) holds.  Lastly, Assumption~\ref{ass_broad_regularity_random_init_disc} (i) holds by construction.  Therefore, Theorem~\ref{thrm_random_init_disc} applies and the conclusion follows.  
\end{proof}
\begin{cor*}[Bernoulli Case with PReLU Activation]
	Suppose that for every $i,j=1,\dots,k$, $X_i$ and $Z_j$ i.i.d. copies of a random variable taking values $\{-1,1\}$ with probabilities $\{\frac1{2},\frac1{2}\}$.  Let $d_i=d$ for each $i=1,\dots,k$ and $\sigma$ be the PReLU activation function of \cite{he2015delving}.  Then Assumptions~\ref{ass_broad_regularity_random_init_disc} are met; thus the conclusion of Theorem~\ref{thrm_random_init_disc} holds.  
\end{cor*}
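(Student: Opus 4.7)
The plan is to verify each of the four items of Assumption~\ref{ass_broad_regularity_random_init_disc} in turn, and then simply invoke Theorem~\ref{thrm_random_init_disc}. Since all of the hypotheses are either combinatorial (on the widths $d_i$) or concern only one-dimensional marginal distributions of the symmetric Bernoulli law, the entire verification reduces to elementary computations; there is no substantive analytic obstacle.

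First, because $d_i = d$ for every $i=1,\dots,k$, we have $d_i \leq d_{i+1}$ trivially, giving (i). For (ii), the PReLU activation $\sigma(x) = x I_{x \geq 0} + a x I_{x<0}$ of \cite{he2015delving} with learnable slope $a\in(0,\infty)$ is piecewise affine with matched value $0$ at the origin, so it is continuous; since both pieces have strictly positive slope, $\sigma$ is strictly increasing on $\rr$, yielding (ii).

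Next, for (iii), a direct computation for the symmetric $\pm 1$ Bernoulli random variable gives $\ee[X_i] = \tfrac{1}{2}(1) + \tfrac{1}{2}(-1) = 0$ and $\ee[X_i^2] = \tfrac{1}{2}(1)^2 + \tfrac{1}{2}(-1)^2 = 1$, and identically for $Z_i$; this establishes the centering and unit-variance conditions. For (iv), since $|X_i|=|Z_i|=1$ almost surely, we have $\ee[|X_i|^C] = \ee[|Z_i|^C] = 1 < \infty$ for every $C>1$, so all absolute moments are finite.

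Having verified items (i)--(iv) of Assumption~\ref{ass_broad_regularity_random_init_disc}, the conclusion is then immediate: Theorem~\ref{thrm_random_init_disc} produces a measurable $\Omega' \subseteq \Omega$ with $\pp(\Omega') = 1$ on which $\overline{\NN[2,k](\omega)} = C(\rrd,\rrD)$. The only subtlety worth flagging is the requirement that the PReLU slope parameter $a$ be strictly positive (and not, e.g., the degenerate ReLU limit $a=0$); this is what ensures the strict monotonicity needed for (ii), and hence for the injectivity argument underlying Theorem~\ref{thrm_random_init_disc}'s proof via \citep[Theorem 1.3]{TerryRandomMatrices}.
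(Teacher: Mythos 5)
Your proof is correct and follows essentially the same approach as the paper's: verify each of the four items of Assumption~\ref{ass_broad_regularity_random_init_disc} directly and then invoke Theorem~\ref{thrm_random_init_disc}. Your version is slightly more explicit (writing out the moment computations and noting that the strict positivity of the PReLU slope is what ensures strict monotonicity), but the substance is identical.
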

\begin{proof}
Since the PReLU activation function is continuous and strictly increasing, then Assumption~\ref{ass_broad_regularity_random_init_disc} (ii) holds.  Since, Bernoulli random variables have all finite $C^{th}$-moments, for $C>0$, then Assumption~\ref{ass_broad_regularity_random_init_disc} (iv) holds.  Since $X_i$ and $Z_i$ are taken to be standardized, then Assumption~\ref{ass_broad_regularity_random_init_disc} (iii) holds.  Assumption~\ref{ass_broad_regularity_random_init_disc} (i) holds by hypothesis.  Therefore, the result follows from Theorem~\ref{thrm_random_init_disc}.  
\end{proof}
\begin{prop*}
	Let $\sigma$ be a continuous and strictly increasing activation function, $J\in \nn_+$, $A_1,\dots,A_J$ be $m\times m$ matrices, and $b_1,\dots,b_J\in \rrd$.  Let $\phi(x)\triangleq \phi_J(x)$ where
	\begin{equation}
	\smash{
		\phi_j(x) \triangleq  \sigma\bullet 
		\left(
		\exp\left(
		A_j\right) \phi_{j-1}(x) + b_j
		\right) \qquad \phi_0(x)\triangleq x,\quad j=1,\dots,J
	}
	\label{eq_dNN_layerr}
	,
	\end{equation}
	where $\exp$ is the matrix exponential.  Then $\phi$ satisfies Assumption~\ref{ass_regularity_feature}.
\end{prop*}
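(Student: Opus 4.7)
The plan is to verify the two clauses of Assumption~\ref{ass_regularity_feature}, namely continuity and injectivity of $\phi$, by a short induction on the layer index $j$.

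For continuity, I would argue that $\phi_0 = 1_{\rrm}$ is continuous, and if $\phi_{j-1}$ is continuous, then $\phi_j$ is the composition of $\phi_{j-1}$ with the affine map $y \mapsto \exp(A_j) y + b_j$ (which is continuous because matrix–vector multiplication and translation are continuous on $\rrm$) followed by the componentwise application $\sigma \bullet \cdot$, which is continuous because $\sigma$ is continuous. By induction $\phi = \phi_J$ is continuous.

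For injectivity, the key observation is that for every real $m \times m$ matrix $A_j$, the matrix exponential $\exp(A_j)$ lies in $GL_m(\rr)$, since $\exp(A_j)\exp(-A_j) = \exp(0) = I_m$. Consequently each affine map $L_j : y \mapsto \exp(A_j) y + b_j$ is a bijection of $\rrm$, and in particular is injective. Moreover, since $\sigma$ is strictly increasing on $\rr$, it is injective, hence the componentwise map $\Sigma : \rrm \to \rrm$, $(y_i)_{i=1}^m \mapsto (\sigma(y_i))_{i=1}^m$, is injective as well. Thus each layer map $\phi_j = \Sigma \circ L_j \circ \phi_{j-1}$ is a composition of injective maps, so injectivity propagates by induction from $\phi_0 = 1_{\rrm}$ to $\phi_J = \phi$.

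There is no real obstacle here; the content of the statement is simply that using the matrix exponential as a reparameterization of the weight matrices forces the linear part of each layer to be invertible, while strict monotonicity of $\sigma$ makes the nonlinearity injective. Combining these two observations with the elementary fact that injective compositions remain injective completes the proof, and together with continuity this is exactly Assumption~\ref{ass_regularity_feature}.
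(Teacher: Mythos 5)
Your proof is correct and takes essentially the same route as the paper's: decompose each layer into the componentwise $\sigma$-map, the invertible linear map $\exp(A_j)$, and the translation by $b_j$, observe that each piece is continuous and injective, and conclude by composition. The only cosmetic difference is that you organize the argument as an induction on $j$ while the paper writes the composition out directly; your explicit justification that $\exp(A_j)\in GL_m(\rr)$ via $\exp(A_j)\exp(-A_j)=I_m$ matches the paper's appeal to the matrix exponential landing in the general linear group.
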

\begin{proof}
	Since $\sigma$ is strictly increasing then it is injective.  Therefore, the map $\phi_1(x)\triangleq  \left(\sigma(x_k)\right)_{k=1}^m$ is continuous and injective from $\rrm$ to itself.  Next, for each $1\leq j\leq J$ with $j \in \nn_+$, each $A_j$ is a $d\times d$ matrix then its exponential $\exp(A_j)$ is in the general linear group (see \cite{knapp2002lie}) and therefore it is an invertible $m\times m$ matrix; hence, the map $\phi_{2,j}(x)\triangleq \exp(A_j)x$ is a continuous bijection from $\rrm$ to itself.  Finally, for each $b_j \in \rrm$, the map $\phi_3(x)\triangleq x + b_j$ is a continuous bijection, since it is affine with inverse $y \mapsto y-b_j$; hence, $\phi_{3,j}$ is a continuous injection from $\rrm$ to itself.  Since the composition of continuous functions is again continuous and the composition of injective functions is again injective then the map $\phi=\phi_1\circ \phi_{3,J}\circ \phi_{2,J}\circ \dots\circ \phi_1\circ\phi_{3,1}\circ \phi_{2,1}$ of~\eqref{eq_dNN_layerr} is a continuous injection; hence, it satisfies Assumption~\ref{ass_regularity_feature}.  
\end{proof}
\begin{prop*}
	In the setting of Proposition~\ref{prop_deep_feed_forward_representations}, if $\sigma$ is also surjective then $\phi$ is a homeomorphism, and in particular it satisfies Assumption~\ref{ass_regularity_readout}.   
\end{prop*}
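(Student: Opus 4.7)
The plan is to upgrade the argument of Proposition~\ref{prop_deep_feed_forward_representations} from continuous injection to homeomorphism, by showing that each of the three building blocks in the recursion~\eqref{eq_dNN_layerr} is actually a homeomorphism from $\rrm$ onto itself. The argument structure will exactly mirror the previous proof, but now tracking surjectivity and continuity of inverses in addition to continuity and injectivity.

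First I would handle the activation layer. Since $\sigma:\rr\to\rr$ is continuous and strictly increasing, the standard theorem on monotone functions gives that $\sigma$ has a continuous inverse on its image, so the only additional input needed is the hypothesis that $\sigma$ is surjective, which upgrades this continuous inverse to be defined on all of $\rr$. Hence $\sigma$ is a homeomorphism of $\rr$ onto itself, and consequently $\sigma\bullet:\rrm\to\rrm$, being the $m$-fold Cartesian product of $\sigma$, is a homeomorphism of $\rrm$ onto itself. Next, for each $j$, since $\exp(A_j)$ lies in the general linear group (as already recorded in the proof of Proposition~\ref{prop_deep_feed_forward_representations}), the linear map $x\mapsto \exp(A_j)x$ is a linear bijection of $\rrm$ and therefore a homeomorphism with continuous inverse $x\mapsto \exp(-A_j)x$. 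Finally, translation by $b_j$ is a homeomorphism of $\rrm$ with continuous inverse $y\mapsto y-b_j$.

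Since the composition of finitely many homeomorphisms is again a homeomorphism, the map $\phi=\phi_J$ in~\eqref{eq_dNN_layerr} is a homeomorphism of $\rrm$ onto $\rrm$. It remains to verify Assumption~\ref{ass_regularity_readout} for $\phi$ viewed as a readout map. Part (i.a) holds because $\phi^{-1}$ is a globally defined continuous two-sided inverse, which is in particular a continuous section of $\phi$ on $\im{\phi}$. Part (ii) holds trivially because $\im{\phi}=\rrm$ is dense in $\rrm$. Part (iii) also holds trivially: $\partial\im{\phi}=\partial \rrm=\emptyset$, so the empty homeomorphism $\psi:\emptyset\to \emptyset\times [0,1)$ serves as the required collar.

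There is no real obstacle; the entire argument is a careful bookkeeping exercise that runs parallel to the proof of Proposition~\ref{prop_deep_feed_forward_representations}, with surjectivity of $\sigma$ as the only new ingredient. The most subtle point, if any, is ensuring that the verification of Assumption~\ref{ass_regularity_readout}(iii) in the degenerate case $\partial \im{\phi}=\emptyset$ is consistent with the definition of ``collared,'' but this is handled in the same way as in the proof of Corollary~\ref{cor_surjective_trick_A}, where the empty set was already treated as trivially collared.
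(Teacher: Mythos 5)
Your proposal is correct and follows essentially the same route as the paper: decompose $\phi$ into activation, linear, and translation layers, show each is a homeomorphism of $\rrm$ (using surjectivity of $\sigma$ for the activation layer), compose, and then check Assumption~\ref{ass_regularity_readout}(i)--(iii) with $\im{\phi}=\rrm$ and $\partial\im{\phi}=\emptyset$. The only cosmetic differences are that you write the inverses $\exp(-A_j)$ and $y\mapsto y-b_j$ explicitly and invoke the ``standard theorem on monotone functions'' where the paper cites \cite{HoffmanContinuityInverse}.
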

\begin{proof}
	By Proposition~\ref{prop_deep_feed_forward_representations} the map $\phi$ is continuous and injective.  Since $\sigma$ is a continuous strictly increasing function then it is injective.  Moreover by \cite{HoffmanContinuityInverse} it is a homeomorphism onto its image.  Since $\sigma$ was assumed to be a surjection then this means that $\sigma$ is a homeomorphism from $\rr$ to itself.  Therefore, the map $\prod_{i=1}^m \sigma:\rrm\rightarrow\rrm$ sending $x$ to $(\sigma(x_i))_{i=1}^m$ is a homeomorphism by \citep[Theorem 19.6]{munkres2014topology}.  In the proof of Proposition~\ref{prop_deep_feed_forward_representations}, above, it was shown that for each $1\leq j\leq J$, $j\in \nn_+$, the maps $x\mapsto \exp(A_j)x$ and $x\mapsto x+b_j$ are homeomorphisms.  Therefore, since the composition of homeomorphisms is again a homeomorphism then $\phi$ is a homeomorphism.  In particular, it satisfies Assumption~\ref{ass_regularity_readout} since $\phi^{-1}$ exists and is continuous (thus Assumption~\ref{ass_regularity_readout} (i) holds), $\operatorname{Im}{\phi}=\rrm$ (thus Assumption~\ref{ass_regularity_readout} (ii) holds), and since $\partial\operatorname{Im}{\phi}=\emptyset$ (thus Assumption~\ref{ass_regularity_readout} (iii) holds).  
\end{proof}
\begin{prop*}[Graphs of Continuous Functions are Good Feature Maps]
	Let $d\in \nn_+$ and $g\in C(\rrm,\rrd)$.  Then $\phi_g(x)\triangleq (x,g(x))$ satisfies Assumption~\ref{ass_regularity_feature}.  
\end{prop*}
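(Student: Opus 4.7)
The plan is to verify directly that the graph map $\phi_g: \rrm \to \rrflex{m+d}$, $x \mapsto (x, g(x))$, is both continuous and injective, which are precisely the two conditions demanded by Assumption~\ref{ass_regularity_feature}.

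For continuity, I would appeal to the universal property of the product topology on $\rrflex{m+d} = \rrm \times \rrd$. The map $\phi_g$ has two coordinate projections: the first is the identity map $1_{\rrm}$ on $\rrm$, which is trivially continuous, and the second is $g$, which is continuous by hypothesis. Since a map into a product space is continuous if and only if each of its component maps is continuous (see for instance Munkres, Theorem 18.4), $\phi_g$ is continuous.

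For injectivity, the argument is immediate: if $\phi_g(x) = \phi_g(y)$ for some $x, y \in \rrm$, then equating the first coordinates of $(x, g(x))$ and $(y, g(y))$ in $\rrm \times \rrd$ forces $x = y$. Note that no property of $g$ beyond being well-defined as a function is used here — injectivity of $\phi_g$ is automatic from the presence of the identity component.

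There is no substantive obstacle; the only reason this lemma is nontrivial is as a formalism that lets one feed an arbitrary pre-processing or skip connection map $g$ (in particular, a pre-trained DNN as in Example~\ref{ex_Pre_trianed_DNN}) into the framework of Theorem~\ref{thrm_main_General_version} and Theorem~\ref{thrm_main_Geometric_version} without damaging the universal approximation property. Once continuity and injectivity are established, Assumption~\ref{ass_regularity_feature} holds by definition, and the conclusion follows.
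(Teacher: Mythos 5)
Your proof is correct and takes essentially the same approach as the paper: establish continuity by checking that the coordinate projections of $\phi_g$ are the identity and $g$ (the paper uses the equivalent but slightly more circuitous factorization $\phi_g = (1_{\rrm}\times g)\circ F$ with $F$ the diagonal), and establish injectivity by reading off the first coordinate. Your version is a cleaner presentation of the identical idea.
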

\begin{proof}
	Let identity map $1_{\rrm}(x)\triangleq x$ is continuous from $\rrm$ to itself and by hypothesis the map $g:\rrm\rightarrow \rrd$ is continuous, therefore, by \citep[Theorem 19.6]{munkres2014topology} the map $1_{\rrm}\times g:\rrm\times \rrm \rightarrow \rrm\times \rrd = \rrflex{m+d}$ is continuous.  A consequence of the same result shows that, for topological spaces $X,Y,$ and $Z$ and a function $F:Z\rightarrow X\times Y$ is continuous, when $X\times Y$ is equipped with the product topology, if and only if both $\pi_1\circ F:Z\rightarrow X$ and $\pi_2\circ F:Z\rightarrow Y$ are continuous, where $\pi_1(x,y)=x$ and $\pi_2(x,y)=y$.  Therefore, when $X=Y=Z=\rrm$ and $F(x)\triangleq (x,x)$ then 
	\begin{equation}
	\pi_1\circ F = 1_{\rrm}=\pi_2\circ F
	,
	\label{eq_prop_skip_connection_little_redux}
	\end{equation}
	and since $1_{\rrm}$ is continuous then~\eqref{eq_prop_skip_connection_little_redux} implies that $F$ is continuous.  Since the composition of continuous functions is again continuous, then $(1_{\rrm}\times g)\circ F(x) = (x,g(x))$ is continuous from $\rrm$ to $\rrflex{m+d}$.   Lastly, notice that if $x_1\neq x_2$ in $\rrm$ then $\|x_1-x_2\|>0$ and therefore
	$$
	\left\|
	(x_1,g(x_1)) - (x_2,g(x_2))
	\right\| \geq \|x_1-x_2\|>0.
	$$
	Hence, $(x_1,g(x_1))\neq (x_2,g(x_2))$ in $\rrflex{m+d}$ and therefore, $(1_{\rrm}\times g)\circ F$ is a continuous and injective functions.  Therefore, Assumption~\ref{ass_regularity_feature} holds.  
\end{proof}
\end{appendices}

\bibliographystyle{abbrvnat}
\bibliography{References_final}

\end{document}